\pdfoutput=1
\documentclass{article}
\usepackage{PRIMEarxiv}
\usepackage[utf8]{inputenc}
\usepackage{microtype}
\usepackage[dvipsnames,table]{xcolor}

\usepackage[T1]{fontenc}
\DeclareUnicodeCharacter{2113}{\ensuremath{\ell}}
\usepackage{amsmath,amssymb,amsthm}
\usepackage{graphicx,booktabs,array,longtable}
\usepackage{placeins}
\usepackage{float}
\usepackage{algorithm}
\usepackage{algpseudocode}
\usepackage[round]{natbib}

\usepackage{xcolor}
\usepackage{tikz}
\usepackage{tcolorbox}
\tcbuselibrary{breakable}
\usepackage[font=small,labelfont=bf,skip=3pt]{caption}
\PassOptionsToPackage{hyphens}{url}
\usepackage[hidelinks]{hyperref}
\usepackage{xurl}
\hypersetup{colorlinks, citecolor=blue, linkcolor=red, urlcolor=green,
            pdftitle={Executing Causal Structure Learning with Linear-Attention Transformers},
            pdfauthor={Amartya Roy, Sayar Karmakar}}
\newtheorem{proposition}{Proposition}[section]
\newtheorem{lemma}[proposition]{Lemma}
\newtheorem{theorem}[proposition]{Theorem}
\newtheorem{corollary}[proposition]{Corollary}
\theoremstyle{definition}

\theoremstyle{remark}
\newtheorem{remark}[proposition]{Remark}
\newcommand{\R}{\mathbb R}

\newcommand{\soft}{\operatorname{soft}}
\newcommand{\tr}{\operatorname{tr}}

\newcommand{\diag}{\operatorname{diag}}
\newcommand{\relu}{\operatorname{ReLU}}

\newcommand{\ind}{\mathbf 1}
\newcommand{\normmax}[1]{\lVert #1\rVert_{\max}}

\definecolor{attnblue}{RGB}{31,105,180}
\definecolor{fforange}{RGB}{205,105,30}
\definecolor{stategreen}{RGB}{35,125,75}
\newtcolorbox{keybox}{colback=white,colframe=blue!55!black,
  boxrule=0.8pt,sharp corners,
  boxsep=1pt,top=3pt,bottom=3pt,left=4pt,right=4pt,
  before skip=5pt,after skip=5pt}
\newcommand{\keytitle}[1]{\textbf{\color{blue!55!black}#1}\par
  \vspace{1pt}}
\newcommand{\numlabel}[1]{\refstepcounter{proposition}\label{#1}}
\newcommand{\papertitle}{Executing Causal Structure Learning with
  Linear-Attention Transformers}

\title{\papertitle}
\author{
  Amartya Roy \\ School of Interdisciplinary Research \\ Indian Institute of Technology Delhi \\ \texttt{srz248670@iitd.ac.in} \And
  Sayar Karmakar \\ University of Florida \\ \texttt{sayarkarmakar@ufl.edu}
}

\begin{document}
\maketitle
\thispagestyle{empty}

\begin{abstract}
Transformers can execute algorithms on data given in their input. We ask
whether they can do the same for causal discovery. We study a standard
continuous method that repeatedly updates a candidate causal graph while
enforcing acyclicity. We explicitly construct a fixed-weight transformer
whose forward pass exactly reproduces one update of this method, so
repeated blocks reproduce its optimization trajectory. The transformer
carries the current graph and the algorithm's multiplier between
updates. We show that retaining the multiplier is essential for exact
execution, since different multiplier values can lead to different next
updates. We also give conditions under which, within a fixed stage, the
number of updates needed to reach a target accuracy can be computed in
advance and rounding errors stay bounded as depth grows. Experiments
show that the constructed block agrees with a reference update to
floating-point precision, while arithmetic replay on synthetic data and
seven published benchmark network topologies inherits the reference
solver's successes and failures. This separates accurate algorithm
execution from accurate causal recovery. In contrast, the ordinary
attention models tested under our training budgets do not reliably
execute the update or transfer to larger graphs. Whether gradient
training can learn an executor in the architecture class of the
construction remains open.
\end{abstract}
\keywords{Transformers \textperiodcentered{} Causal discovery \textperiodcentered{} Algorithm execution \textperiodcentered{} Linear attention \textperiodcentered{} Continuous DAG learning \textperiodcentered{} Augmented Lagrangian.}

\section{Introduction}
\label{sec:intro}

Can a transformer carry out causal discovery by executing a known
solver, with weights fixed before seeing the data? The question grows out
of a line of work on what transformers can compute from their prompt.
Constructive studies of in-context learning show how a transformer can
compute on examples it is shown \citep{Garg2022WhatCT} and even implement
gradient descent in its forward pass
\citep{Akyrek2022WhatLA,vonOswald2022TransformersLI,Ahn2023TransformersLT}.
Later constructions go further, building networks that carry out proximal
updates, looped programs and graph algorithms
\citep{Bai2023TransformersAS,Giannou2023LoopedTA}. What these results
share is that the weights are \emph{specified} rather than fitted, so the
forward pass is a known computation and one can say exactly what it
computes, how deep it must be and where it loses precision.

Causal discovery looks like a natural next test, and a harder one. A
weighted directed graph over $p$ variables is a causal graph only if it
contains no directed cycle, and this global requirement couples every
edge to every other. Continuous formulations turn the requirement into a
smooth constraint: NOTEARS replaces the combinatorial search over acyclic
graphs by an equality constraint on the weighted adjacency matrix
\citep{Zheng2018DAGsWN}, and polynomial variants have the same acyclic
zeros \citep{Yu2019DAGGNNDS,Wei2020DAGsWN}. The resulting solvers
alternate between a gradient-and-threshold step on the edge weights and an
update of a multiplier that tightens the acyclicity penalty. A
transformer that is to execute such a solver must do more than evaluate a
gradient. It must evaluate a matrix-power constraint, apply a sparse
proximal step, respect an allowed-edge mask, and keep a second piece of
state, the multiplier, available for the next update.

This is the problem we take on. We study the complete update of a
continuous DAG solver, which combines a least-squares gradient, a
polynomial acyclicity penalty, a sparse proximal step and a multiplier
update. Each ingredient is established; the question is how to execute
their composition in a transformer and retain the state the next update
needs. The state is the crux. The next update depends on both the weighted
adjacency matrix $W$ and the multiplier $\alpha$, so we encode this
complete state in the prompt and build a fixed-weight block that maps it to
the next state, leaving the penalty schedule and stopping rule external.
Figure~\ref{fig:dynamics} shows the resulting correspondence between the
solver and transformer trajectories.

\begin{figure}[H]
\centering
\includegraphics[width=0.48\textwidth,trim=0 100.8pt 0 0,clip]{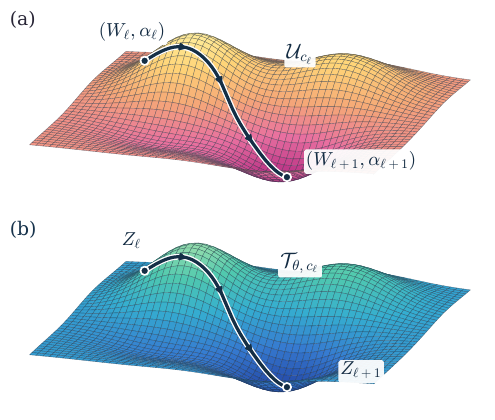}\hfill
\includegraphics[width=0.48\textwidth,trim=0 0 0 100.8pt,clip]{dynamics_correspondence.pdf}
\caption{\textbf{One update, run two ways.} The solver carries the
edge matrix $W_\ell$ and multiplier $\alpha_\ell$; the fixed-weight
transformer carries the same state in $Z_\ell$. With identical controls,
the two transitions agree exactly (Theorem~\ref{thm:realization}).
The surfaces are schematic.}
\label{fig:dynamics}
\end{figure}

Our answer is an explicit construction. A single set of weights, shared by
all graphs with up to $p_{\max}$ variables, executes one update exactly
with depth $O(\log p_{\max})$ and width $O(p_{\max})$, and we prove a
matching depth order for linear attention with bilinear/ReLU sublayers and
affine encoding and readout (Section~\ref{sec:realization}). Because the
multiplier is carried from one block to the next rather than recomputed,
it is the quantity most easily lost, and we show that losing it changes
the next update; we also bound the error this causes and explain why
constraint evaluations need care as the graph approaches acyclicity
(Section~\ref{sec:promptmatters}). Within a fixed penalty stage we give
conditions under which the number of updates needed for a target accuracy
can be computed in advance (Section~\ref{sec:stage}), and we relate the
solver's output to the true graph (Section~\ref{sec:recovery}).

The experiments then ask whether this works in practice.\footnote{Code:
\url{https://github.com/Amartya-Roy/Amartya-Roy-Executing-Causal-Structure-Learning-with-Linear-Attention-Transformers}}
The constructed block agrees with an independent reference implementation
of the update to floating-point precision, and repeated blocks reproduce
the solver's trajectory. Replaying the updates on synthetic data and on
seven published benchmark network topologies gives the same graphs as the
reference solver, including its mistakes: reproducing the solver's
updates also reproduces its statistical and optimization errors, so
accurate execution should not be confused with accurate causal recovery
(Section~\ref{sec:experiments}). Ordinary attention models trained under
our budgets, by contrast, do not reliably execute the update or transfer to
larger graphs, and whether gradient training can find an executor in the
architecture class of the construction remains open. We stress the scope
of the claim. The block executes the Poly-prox update for supplied
controls; it does not choose the schedule, run line searches or decide when
to stop, and the reference solver serves as a well-defined target rather
than a state-of-the-art estimator.

The rest of the paper follows this story.
Section~\ref{sec:related} reviews related work and
Section~\ref{sec:setup} states the solver and its reference update.
Section~\ref{sec:realization} builds the block, and
Sections~\ref{sec:promptmatters}--\ref{sec:recovery} study what is lost
through the multiplier, the constraint and finite precision, and how
exact execution relates to recovery.
Section~\ref{sec:experiments} reports the experiments and
Section~\ref{sec:discussion} concludes. Proofs and experimental details
are in the appendices.

\section{Related Work}
\label{sec:related}
\paragraph{Algorithm execution by transformers.}
Constructed transformers implement gradient and second-order updates
\citep{Akyrek2022WhatLA,vonOswald2022TransformersLI,Ahn2023TransformersLT,Chen2023TransformersLT},
proximal methods \citep{Bai2023TransformersAS} and optimal-transport
iterations \citep{daneshmand2026incontext}; looped models emulate
programs and graph algorithms \citep{Giannou2023LoopedTA,deLuca2024SimulationOG}.
We construct the complete Poly-prox update and determine the depth,
stored information and numerical accuracy it requires.
Learning latent causal structure \citep{Nichani2024HowTL} and amortized
graph prediction \citep{Ke2022LearningTI,Lorch2022AmortizedIF,Wang2026TransformerII}
address different targets from executing a specified numerical update.

\paragraph{Continuous DAG learning.}
NOTEARS uses a matrix-exponential constraint \citep{Zheng2018DAGsWN};
polynomial constraints have the same acyclic zeros
\citep{Yu2019DAGGNNDS,Wei2020DAGsWN}. TMPI exploits repeated squaring
\citep{Zhang2022TruncatedMP}; DAGMA, GOLEM and SDCD offer alternative
formulations \citep{Bello2022DAGMALD,Ng2020OnTR,Nazaret2023StableDC}.
Exact execution still inherits the solver's statistical and optimization
limitations, including those discussed in Appendix~\ref{app:solver_diagnostics}.

\section{The Causal-Discovery Algorithm}
\label{sec:setup}
Let $X\in\R^{m\times p}$ hold $m$ i.i.d.\ centered observations of a
mean-zero $x\in\R^p$ generated by a linear SEM
$x_j=\sum_iW^\dagger_{ij}x_i+\epsilon_j$ with acyclic $W^\dagger$ and
independent, mean-zero noise with finite positive variances. With equal noise variances
$W^\dagger$ uniquely minimizes the \emph{population} least-squares loss
over acyclic supports \citep{Peters2012IdentifiabilityOG,Loh2013HighdimensionalLO,Chen2018OnCD}; Section~\ref{sec:recovery} relates the
solver's finite-sample output to it.

For $\Sigma=X^\top X/m$, we study
\begin{equation}
\begin{aligned}
\min_{W\in\mathcal W_M}\quad & F(W)=\underbrace{\tfrac1{2m}\lVert
X-XW\rVert_F^2}_{\mathcal L(W)}+\lambda\lVert W\rVert_1\\
\text{s.t.}\quad & h(W)=0,
\end{aligned}
\label{eq:objective}
\end{equation}
where $\mathcal W_M=\{W:W=M\odot W\}$ restricts $W$ to a binary
zero-diagonal allowed-edge mask $M$ ($M_{ij}=\ind\{i\ne j\}$ with no domain
knowledge), $W_{ij}$ is the weight of edge $i\to j$, and
\begin{equation}
h(W)=\tr\Big[\Big(I+\tfrac{W\odot W}N\Big)^{N}\Big]-p,
\qquad N=2^s,\quad s=\lceil\log_2p_{\max}\rceil.
\label{eq:h_def}
\end{equation}
With $N\ge p$, $h(W)=0$ exactly on matrices whose support graph $G(W)$ is
acyclic \citep{Yu2019DAGGNNDS,Wei2020DAGsWN}; we fix one $N$ for every
$p\le p_{\max}$. We solve \eqref{eq:objective} by a NOTEARS-style
augmented Lagrangian
$\mathcal J_{\rho,\alpha}(W)=\mathcal L(W)+\alpha h(W)+\tfrac\rho2h(W)^2$
with a proximal-gradient primal step \citep{Daubechies2003AnIT,Beck2009AFI}:
\begin{align}
W_{\ell+1}&=M\odot\soft_{\gamma_\ell\lambda}\big(W_\ell-\gamma_\ell
\nabla\mathcal J_{\rho_\ell,\alpha_\ell}(W_\ell)\big),
\label{eq:w_update}\\
\alpha_{\ell+1}&=\alpha_\ell+b_\ell\rho_\ell h(W_{\ell+1}),
\label{eq:alpha_update}
\end{align}
Throughout, $\lambda\ge0$, $\rho_\ell\ge0$ and $\gamma_\ell>0$; the schedule controls are externally supplied. Here $\soft_\tau(t)=\operatorname{sign}(t)\max\{|t|-\tau,0\}$, the step
size is $\gamma_\ell$, and $b_\ell\in\{0,1\}$ allows several primal steps
per multiplier update. Algorithm~\ref{alg:polyprox} collects one
iteration of \eqref{eq:w_update}--\eqref{eq:alpha_update}, and we call
this reference implementation \emph{Poly-prox}. The solver itself is
standard; our contribution is a transformer with analytically fixed
weights that executes it (Section~\ref{sec:realization}).
We choose Poly-prox because its fixed step consists of matrix products,
scalar products and ReLU soft-thresholding, allowing a finite exact
construction. Line searches and alternative optimizers are not implemented by this fixed block.

Proposition~\ref{prop:gradient} (Appendix~\ref{app:proof_gradient})
gives, with $B=W\odot W$ and $Y=I+B/N$, $\nabla\mathcal L(W)=\Sigma(W-I)$
and $\nabla h(W)=2W\odot(Y^{N-1})^\top$, both built from the same matrix
powers that compute $h$ itself; the Lasso proximal map is entrywise
soft-thresholding restricted to the allowed edges,
$M\odot\soft_{\gamma\lambda}(W)$.

A well-known feature of this constraint, shared by the
matrix-exponential version \citep{Wei2020DAGsWN}, is that its gradient
vanishes on the entire feasible set: $[\nabla h(W)]_{ij}$ is a sum over
weighted directed walks that would close a cycle through $i\to j$, so it
is nonzero exactly on edges that lie on a cycle and $\nabla h(W)=0$ as
soon as $G(W)$ is acyclic (Lemma~\ref{lem:vanishing},
Appendix~\ref{app:proof_vanishing}).

At every feasible point, then, the usual regularity condition for the
equality constraint fails, and a feasible point is a KKT point exactly
when it also minimizes the convex masked Lasso problem
(Appendix~\ref{app:proof_vanishing}); Section~\ref{sec:promptmatters} turns
the same fact into a statement about what an executor must retain.

\section{How the Transformer Executes an Update}
\label{sec:realization}\label{sec:attention}
We now construct a block that exactly executes one iteration of
Algorithm~\ref{alg:polyprox}. The prompt and the block are designed
together: the prompt is a numerical encoding of the solver's current
state, one row token per variable plus a single \emph{control token}
holding the scalar controls and $\alpha$. A \emph{register} is a
designated group of coordinates in the residual stream; a
\emph{persistent} one carries its value from one iteration to the next
(here $W$ and $\alpha$), a \emph{scratch} one is reset after each
update.

The encoded state has width $13p_{\max}+18$. Set $q=9p_{\max}+18$
and let $v=(0_{9p_{\max}},1,0,0_{16})$ contain the variable-token flag
and zero scratch. The control row uses
$\omega_\ell=(0_{9p_{\max}},0,1,\lambda,\alpha_\ell,
\rho_\ell,\gamma_\ell,b_\ell,0_{11})$.
For variable rows $i\le p$, with all matrix rows padded to $p_{\max}$,
\begin{equation}
Z_\ell=\left[\begin{array}{ccccc}
\textcolor{attnblue}{e_i} & \textcolor{attnblue}{\Sigma_{i,:}} &
\textcolor{attnblue}{M_{i,:}} & \textcolor{stategreen}{(W_\ell)_{i,:}} &
\textcolor{stategreen}{v}\\[2pt]
0_{p_{\max}} & 0_{p_{\max}} & 0_{p_{\max}} & 0_{p_{\max}} &
\textcolor{stategreen}{\omega_\ell}
\end{array}\right].
\label{eq:prompt_encoding}
\end{equation}
The last block contains nine wide scratch registers and eighteen
scalar coordinates for flags, controls and scratch. Appendix~\ref{app:coordinates} indexes these coordinates;
Appendices~\ref{app:selectors}, \ref{app:schedule}, \ref{app:invariants}
and~\ref{app:cost} give the selector maps, schedule, invariants and cost.
We use residual, \emph{unnormalized} linear heads
$O(Z)=(Z\Theta_Q)(Z\Theta_K)^\top Z\Theta_V$, without softmax or layer
normalization, and feedforward sublayers
\begin{equation}
\mathrm{FF}(Z)=\relu(ZU_1)U_2+[(ZG_1)\odot(ZG_2)]G_3,
\label{eq:ff}
\end{equation}
the second term bilinear, as in gated feedforward units \citep{Shazeer2020GLUVI}. Soft-thresholding is exact with two ReLUs per
coordinate,
\begin{equation}
\soft_\tau(t)=\relu(t-\tau)-\relu(-t-\tau),\qquad\tau\ge0,
\label{eq:relu_soft}
\end{equation}
and the update inherits the mask from its inputs, so the thresholded
output is already supported on the allowed edges
(Appendix~\ref{app:construction}). Figure~\ref{fig:polyprox-transformer}
sets the solver and transformer updates side by side.

\begin{figure*}[t]
\centering
\footnotesize
\begin{minipage}[t]{0.485\textwidth}
\vspace{0pt}
\begin{algorithm}[H]
\caption{Poly-prox causal-discovery update}
\label{alg:polyprox}
\footnotesize
\begin{algorithmic}[1]
\Require Current state $(W_\ell,\alpha_\ell)$, covariance
$\Sigma=X^\top X/m$, mask $M$, and
$\lambda,\rho_\ell,\gamma_\ell,b_\ell$
\Ensure Updated state $(W_{\ell+1},\alpha_{\ell+1})$

\State $B_\ell\gets W_\ell\odot W_\ell,\;
Y_\ell\gets I+B_\ell/N$
\State $h_\ell\gets\operatorname{tr}(Y_\ell^N)-p$
\State $H_\ell\gets2W_\ell\odot(Y_\ell^{N-1})^\top$
\State $G_\ell\gets\Sigma(W_\ell-I)
+(\alpha_\ell+\rho_\ell h_\ell)H_\ell$
\State $W_{\ell+1}\gets
M\odot\operatorname{soft}_{\gamma_\ell\lambda}
(W_\ell-\gamma_\ell G_\ell)$
\State $\alpha_{\ell+1}\gets
\alpha_\ell+b_\ell\rho_\ell h(W_{\ell+1})$
\State \Return $(W_{\ell+1},\alpha_{\ell+1})$
\end{algorithmic}
\end{algorithm}
\end{minipage}
\hfill
\begin{minipage}[t]{0.485\textwidth}
\vspace{0pt}
\begin{algorithm}[H]
\caption{Fixed-weight transformer execution}
\label{alg:transformer-update}
\footnotesize
\begin{algorithmic}[1]

\Require Encoded state $Z_\ell$ from Eq.~\eqref{eq:prompt_encoding}
\Ensure Encoded state $Z_{\ell+1}$

\State \textcolor{stategreen}{
Read $(\Sigma,M,W_\ell,\alpha_\ell)$ and controls from $Z_\ell$.}

\State \textcolor{fforange}{
Form $B_\ell=W_\ell\odot W_\ell$ in the scratch registers.}

\State \textcolor{attnblue}{
Use the repeated-squaring stages to obtain
$h(W_\ell)$ and $\nabla h(W_\ell)$.}

\State \textcolor{attnblue}{
Assemble the augmented gradient $G_\ell$ from
$\Sigma(W_\ell-I)$ and the acyclicity term.}

\State \textcolor{fforange}{
Apply masked ReLU soft-thresholding to obtain $W_{\ell+1}$.}

\State \textcolor{stategreen}{
Compute $h(W_{\ell+1})$ and update the multiplier
$\alpha_{\ell+1}$.}

\State \textcolor{stategreen}{
Write $(W_{\ell+1},\alpha_{\ell+1})$ to the persistent registers,
reset scratch space, and \Return $Z_{\ell+1}$.}

\end{algorithmic}
\end{algorithm}
\end{minipage}

\caption{\textbf{One update, two implementations.} Left: one Poly-prox
iteration. Right: the same iteration as the fixed-weight transformer
carries it out on the encoded state $Z_\ell$ of
\eqref{eq:prompt_encoding}, line for line; blue is linear attention,
orange bilinear/ReLU, green persistent state.
Theorem~\ref{thm:realization} shows both return the same
$(W_{\ell+1},\alpha_{\ell+1})$.}
\label{fig:polyprox-transformer}
\end{figure*}

\numlabel{thm:realization}
\begin{keybox}
\keytitle{Theorem~\ref{thm:realization} (Exact execution of one Poly-prox update)}
\textit{Informal statement:} there is a block of $2\lceil\log_2
p_{\max}\rceil+8=O(\log p_{\max})$ linear-attention and bilinear/ReLU
layers, one weight set for every $p\le p_{\max}$, such that, for every
valid encoded state $Z_\ell$, the block executes
Algorithm~\ref{alg:transformer-update} exactly and therefore produces the
same $(W_{\ell+1},\alpha_{\ell+1})$ as one iteration of
Algorithm~\ref{alg:polyprox}, in exact real arithmetic, for every problem
size up to $p_{\max}$ at once.
\end{keybox}

Appendix~\ref{app:proofs} contains the proofs;
Appendix~\ref{app:proof_realization} states the construction theorem in full.
The block has width $13p_{\max}+18$, at most four heads per layer,
and nonzero weight magnitudes in $\{1/N,1,2\}$.

\paragraph{How the block evaluates the constraint.}
The block computes two matrix sequences together. Starting from
$S_0=B/N$ and $Q_0=0$, set
\begin{equation*}
S_{j+1}=2S_j+S_j^2,\qquad Q_{j+1}=Q_j+S_j+Q_jS_j.
\end{equation*}
After $s$ doublings, $S_s=Y^N-I$ and $Q_s=Y^{N-1}-I$ exactly
(Appendix~\ref{app:proof_realization}), so $h(W)=\tr S_s$ and,
since $\diag W=0$, $\nabla h(W)=2W\odot Q_s^\top$. The constraint and its gradient therefore share the same $s$ doubling
stages. Each doubling is one
attention layer: with one token per row, an unnormalized head whose
query reads $S$, whose key reads the identity register and whose value
reads $S$ writes $S^2$ row by row, and a second head writes $QS$ the
same way. The rest of the update --- the least-squares gradient
$\Sigma(W-I)$, the soft-threshold, and a second squaring pass for
$h(W_{\ell+1})$ --- uses a fixed number of layers
(Appendix~\ref{app:construction}).

\begin{corollary}[Exact execution over repeated updates]
\label{cor:exact_trajectory}
Let $Z_0$ encode $(W_0,\alpha_0)$ with the fixed data
$(\Sigma,M,\lambda)$, and at each iteration let an outside schedule write
only that iteration's controls $(\rho_\ell,\gamma_\ell,b_\ell)$ into the
control token. Stacking $L$ blocks then reproduces the first $L$
iterations of Algorithm~\ref{alg:polyprox} exactly in real arithmetic, at
depth $L(2s+8)$: each copy computes $\alpha_{\ell+1}$ and the next one
reads it, so the multiplier never leaves the network.
\end{corollary}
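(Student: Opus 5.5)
The plan is an induction on the number of blocks, with Theorem~\ref{thm:realization} doing all the per-step work. The inductive hypothesis $\mathcal H_\ell$ states that the input to the $(\ell+1)$-st block (after the outside schedule has written $(\rho_\ell,\gamma_\ell,b_\ell)$) is exactly the valid encoded state $Z_\ell$ of \eqref{eq:prompt_encoding}. Concretely, its row tokens hold $e_i,\Sigma_{i,:},M_{i,:},(W_\ell)_{i,:}$ and the flag vector $v$ with zero scratch. Its control token holds $\omega_\ell=(\dots,\lambda,\alpha_\ell,\rho_\ell,\gamma_\ell,b_\ell,0_{11})$. Here $(W_\ell,\alpha_\ell)$ is the $\ell$-th iterate of Algorithm~\ref{alg:polyprox} started from $(W_0,\alpha_0)$ with the same controls. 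For $\ell=0$ this is the hypothesis on $Z_0$, once the schedule writes $(\rho_0,\gamma_0,b_0)$.

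For the step, we apply Theorem~\ref{thm:realization} to the valid state $Z_\ell$. The block outputs the same $(W_{\ell+1},\alpha_{\ell+1})$ as one iteration of Algorithm~\ref{alg:polyprox}, written to the persistent registers, namely the $W$ block of the row tokens and the $\alpha$ slot of the control token.

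The main obstacle is to check that the full output matrix is again a valid encoded state, not merely that the two persistent registers are correct. I would use the invariants of the construction (Appendix~\ref{app:invariants}) to verify four things:
\begin{itemize}
\item the data registers $e_i,\Sigma_{i,:},M_{i,:}$ and $\lambda$ pass through unchanged via the residual stream;
\item the variable flag and control-token flag are preserved;
\item the padding rows and columns for $p<p_{\max}$ stay zero;
\item every scratch register is reset to zero at the end of the block, as in the last line of Algorithm~\ref{alg:transformer-update}.
\end{itemize}

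If the full theorem only guarantees that the scratch registers are returned to zero, that is enough. The only coordinates that differ from $Z_{\ell+1}$ are then the three control slots $(\rho,\gamma,b)$. By assumption the outside schedule overwrites exactly these with $(\rho_{\ell+1},\gamma_{\ell+1},b_{\ell+1})$. The schedule does not touch $\alpha$ or $W$, so $\mathcal H_{\ell+1}$ holds.

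Chaining $L$ steps gives the first $L$ iterates exactly in real arithmetic. No approximation enters, because each block is exact on valid inputs and validity is preserved. The depth count follows from the per-block depth $2\lceil\log_2 p_{\max}\rceil+8=2s+8$ in Theorem~\ref{thm:realization}, giving $L(2s+8)$. Finally, $\alpha_{\ell+1}$ is produced by block $\ell+1$ in the control-token register and is read directly by block $\ell+2$. The schedule writes only $(\rho,\gamma,b)$, so the multiplier is never externally supplied after $\alpha_0$, which is the last claim of the corollary.
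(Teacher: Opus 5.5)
Your proposal is correct and follows essentially the same route as the paper. The paper's proof of Theorem~\ref{thm:realization} notes that the final layer clears scratch exactly, keeps $\Sigma,M,\mathsf{id}$, flags and controls fixed, and leaves forbidden and padded entries zero, so the output is a valid input encoding for the next block, and it then obtains the corollary by induction on stacked blocks, as you do. The one validity condition you leave implicit, $W_{\ell+1}\in\mathcal W_M$, follows immediately from the $M\odot$ in the update.
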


\paragraph{Why logarithmic depth is necessary.}
Any exact implementation using these operations and affine encoding and
readout also needs logarithmic depth. Here one layer consists of an
attention sublayer followed by a feedforward sublayer.

\numlabel{thm:depth_lower}
\begin{keybox}
\keytitle{Theorem~\ref{thm:depth_lower} (Minimum depth for exact execution)}
Fix $p_{\max}\ge2$ and $N=2^{\lceil\log_2p_{\max}\rceil}$. A $D$-layer
block of this type, with parameters fixed independently of its input,
any finite width and head count, and a token count fixed at each problem
size, that computes the Poly-prox primal update exactly for every valid
state satisfies
\begin{equation}
D\ge\left\lceil\log_6(4N-1)\right\rceil;
\label{eq:depth_primal_lower}
\end{equation}
if it computes the full state $(W^+,\alpha^+)$ exactly, then
\begin{equation}
D\ge\left\lceil\log_6\bigl(2N(4N-1)\bigr)\right\rceil.
\label{eq:depth_state_lower}
\end{equation}
Appendix~\ref{app:depth_lower} gives the proof.
\end{keybox}

Together with Theorem~\ref{thm:realization}, our results yield
$\Theta(\log p_{\max})$ as the least depth order for uniform exact
execution of the full fixed-$N$ update in this primitive class with
affine encoding/readout. This bound concerns the specified exponent-$N$
recurrence; it is not a lower bound for every DAG solver.
\section{Memory and Numerical Accuracy}
\label{sec:promptmatters}
The block retains $(W,\alpha)$ and recomputes the constraint in scratch
space. We now quantify the consequences of errors in this computation.
Write $z=(W,\alpha)$, $\lVert z\rVert_*=\max\{\lVert W\rVert_F,|\alpha|\}$,
and let $\Psi_\ell$ denote the exact update. A decoded transition is the
state returned by an implementation.

\numlabel{thm:tracking}
\begin{keybox}
\keytitle{Theorem~\ref{thm:tracking} (Accumulation of update errors)}
\textit{Informal statement:} if the decoded transition is
$\varepsilon_\ell$-close to the exact update $\Psi_\ell$ at every step,
and $\Psi_\ell$ is $\kappa_\ell$-Lipschitz, then, writing
$e_0=\lVert\widehat z_0-z_0\rVert_*$ for the initial state error,
\[
\lVert\widehat z_L-z_L\rVert_*\ \le\
e_0\prod_{u=0}^{L-1}\kappa_u+\sum_{t=0}^{L-1}
\varepsilon_t\prod_{u=t+1}^{L-1}\kappa_u.
\]
The proof is in Appendix~\ref{app:proof_tracking}.
\end{keybox}

\begin{remark}[Undamped multiplier error]
\label{rem:neutral}
By Lemma~\ref{lem:vanishing}, $\nabla h(W)=0$ whenever $W$ is acyclic, so
$\Psi_\ell(W,\alpha_1)-\Psi_\ell(W,\alpha_2)=(0,\alpha_1-\alpha_2)$ there
and $\kappa_\ell\ge1$. Taking $W_0=0$ and
$\lambda\ge\max_{i\ne j}|\Sigma_{ij}|$ keeps the exact trajectory at the
origin, while a decoder that adds $\varepsilon$ to $\widehat\alpha$ at
each step still meets the local-error bound yet ends at
$\lVert\widehat z_L-z_L\rVert_*=L\varepsilon$. What accumulates is
\emph{state} error, not graph error: near an acyclic state nothing pulls
$\alpha$ back, so an error made there is carried forward, and whether it
reaches $W$ depends on the next stage.
\end{remark}

Lemma~\ref{lem:distinguish} (Appendix~\ref{app:proof_distinguish}) shows
that if $[\nabla h(W)]_{ij}\ne0$ and, for one of two multiplier values
$\alpha_1\ne\alpha_2$, the pre-threshold entry $V_{ij}$ lies outside the
dead zone $[-\gamma\lambda,\gamma\lambda]$ of the soft-threshold, then the
augmented gradients differ and so do the updates; hence any
deterministic executor that returns $W^+$ exactly must retain
information distinguishing $\alpha_1$ from $\alpha_2$.

Near a stationary point, a small constraint gradient may need a large
multiplier to balance the loss gradient. The coefficient multiplying $\nabla h$ in the augmented gradient is the
\emph{effective multiplier} $c_k=\alpha_k+\rho_kh(W_k)$, which need
not have the same magnitude as $\alpha_k$ alone. All stationarity
residuals are taken \emph{inside} the linear subspace $\mathcal W_M$.
Write $d_M(W)$ for the stationarity residual of the convex \emph{masked
Lasso} problem, with acyclicity dropped, and call $W_k$
$\eta_k$-stationary when the augmented residual with coefficient $c_k$ is
at most $\eta_k$. Appendix~\ref{app:proof_multiplier} defines both
residuals and proves the following result.

\numlabel{prop:multiplier_growth}
\begin{keybox}
\keytitle{Proposition~\ref{prop:multiplier_growth} (Growth of the effective multiplier)}
\textit{Informal statement:} suppose a trajectory $W_k\to\bar W$ is
$\eta_k$-stationary for the augmented objective at every step with
$\eta_k\to0$, and $\bar W$ is feasible but \emph{not} a solution of the
masked Lasso problem. Then its effective multiplier
$c_k=\alpha_k+\rho_kh(W_k)$ must diverge in magnitude, at rate
\[
|c_k|\ \ge\ \frac{d_*}{4L_H\lVert W_k-\bar W\rVert_F}\ \longrightarrow\ \infty,
\]
where $d_*>0$ is the Lasso stationarity gap at $\bar W$ and $L_H$ is a
local Lipschitz constant for $\nabla h$ on a neighbourhood of $\bar W$.

\end{keybox}

Under these assumptions, $|c_k|$ grows as the constraint gradient
shrinks. It also multiplies any error in that gradient; error in $h$
affects $c_k$ through $\rho_k$. The next result shows when relative
accuracy keeps the resulting error bounded.

\begin{corollary}[Relative cycle accuracy at stage-stationary states]
\label{cor:relative_accuracy}
Let $C_k=\lVert M\odot\nabla\mathcal L(W_k)\rVert_F+\lambda\lVert
M\rVert_F+\eta_k$ as in the formal
Proposition~\ref{prop:multiplier_growth} of
Appendix~\ref{app:proof_multiplier}, and
\emph{assume} the covariance is exact and that the computed effective
multiplier and constraint gradient satisfy the relative bounds
$|\widetilde c_k-c_k|\le\nu|c_k|$ and
$\lVert\widetilde H-\nabla h\rVert_F\le\nu\lVert\nabla h\rVert_F$ for
some $\nu\in[0,1)$. If $W_k$ is $\eta_k$-stationary as above,
then the assembled gradient error obeys
$\lVert\widetilde g-g\rVert_F\le(2\nu+\nu^2)C_k$, so these \emph{assumed relative errors in $c_k$ and $\nabla h$}
stay controlled by $C_k$ even as $|c_k|\to\infty$.  The relative bounds are a hypothesis about the
implementation: Theorem~\ref{thm:realization} supplies $\nu=0$ in real
arithmetic, and floating point is measured in
Section~\ref{sec:experiments}.
\end{corollary}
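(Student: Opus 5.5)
The plan is to split the assembled gradient into its exact and perturbed parts and bound the perturbation by the product rule. Write $g=M\odot(\nabla\mathcal L(W_k)+c_k\nabla h(W_k))$ and $\widetilde g=M\odot(\nabla\mathcal L(W_k)+\widetilde c_k\widetilde H)$; the loss part is the same in both because the covariance is exact. Since masking is a coordinate projection, $\lVert M\odot A\rVert_F\le\lVert A\rVert_F$, so it suffices to bound $\lVert\widetilde c_k\widetilde H-c_k\nabla h\rVert_F$. I will use the three-term decomposition
\[
\widetilde c_k\widetilde H-c_k\nabla h=(\widetilde c_k-c_k)\nabla h+c_k(\widetilde H-\nabla h)+(\widetilde c_k-c_k)(\widetilde H-\nabla h).
\]
Applying the two relative hypotheses termwise bounds its norm by $(\nu+\nu+\nu^2)\,|c_k|\,\lVert\nabla h(W_k)\rVert_F$.

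What remains is the key estimate
\[
|c_k|\,\lVert M\odot\nabla h(W_k)\rVert_F\le C_k,
\]
which is where $\eta_k$-stationarity enters. The masked augmented stationarity condition says there is a subgradient $\xi$ of $\lambda\lVert\cdot\rVert_1$ at $W_k$ with $\lVert M\odot(\nabla\mathcal L+c_k\nabla h+\xi)\rVert_F\le\eta_k$. The triangle inequality then gives
\[
|c_k|\,\lVert M\odot\nabla h\rVert_F\le\eta_k+\lVert M\odot\nabla\mathcal L\rVert_F+\lVert M\odot\xi\rVert_F.
\]
Since $|\xi_{ij}|\le\lambda$ entrywise and $M$ is binary, $\lVert M\odot\xi\rVert_F\le\lambda\lVert M\rVert_F$, and the right-hand side is exactly $C_k$.

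I expect the main obstacle to be a mismatch between this estimate and the error bound. The estimate controls only the masked gradient $M\odot\nabla h$, while the error bound above uses $\lVert\nabla h\rVert_F$ unmasked. I will handle this in one of two ways. The first is to read the relative hypothesis on $\widetilde H$ inside $\mathcal W_M$, that is, as $\lVert M\odot(\widetilde H-\nabla h)\rVert_F\le\nu\lVert M\odot\nabla h\rVert_F$, which is consistent with the paper's convention that all residuals live in $\mathcal W_M$. The second is to note that $\nabla h(W)=2W\odot Q_s^\top$ is supported on $\operatorname{supp}W\subseteq\operatorname{supp}M$ for $W\in\mathcal W_M$, so $M\odot\nabla h=\nabla h$. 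Either way the masked and unmasked norms agree, and the $\widetilde H$ errors can be masked consistently.

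I also need to check against the formal definition of $\eta_k$-stationarity in Appendix~\ref{app:proof_multiplier} that the residual is measured with a subgradient in exactly this form, rather than, say, through a proximal-gradient map. If it is the prox-map residual, I will convert it to the subgradient form using the standard inequality relating the proximal residual to the distance of the negative gradient from the subdifferential, absorbing any constants into $\eta_k$. Combining the three steps gives $\lVert\widetilde g-g\rVert_F\le(2\nu+\nu^2)C_k$. This bound is uniform in $|c_k|$, which yields the claim that the error stays controlled even as $|c_k|\to\infty$.
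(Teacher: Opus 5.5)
Your proposal is correct and follows the paper's proof. The paper splits $\widetilde g-g=(\widetilde c_k-c_k)\nabla h(W_k)+\widetilde c_k(\widetilde H-\nabla h(W_k))$ and uses $|\widetilde c_k|\le(1+\nu)|c_k|$, which is your three-term split with the last two terms merged; it then applies the same triangle-inequality bound $|c_k|\lVert\nabla h(W_k)\rVert_F\le C_k$ obtained from $\eta_k$-stationarity. Your masking concern is resolved exactly as you suggest and as the paper notes, since $\nabla h(W)$ is supported on $M$ for $W\in\mathcal W_M$, and the paper's stationarity condition is already in subgradient form, so no prox-residual conversion is needed.
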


Appendix~\ref{app:proof_relacc} proves the corollary and contrasts
relative and absolute perturbations. The two-node family in
Appendix~\ref{app:proof_multiplier} has $c_t\asymp t^{-3}$, showing
that the general growth bound need not be tight.
Figure~\ref{fig:exact} illustrates the relative-error bound and absolute-error
amplification (Appendix~\ref{app:exactplots}). Figure~\ref{fig:ex3_stationarity}
shows finite-trajectory stationarity diagnostics; these do not establish
the proposition's asymptotic premises (Appendix~\ref{app:ex3_stationarity}).

\paragraph{Bounding rounding errors.}
We follow rounding errors through every operation in the block,
including the second constraint evaluation and the replacement of
stored values.

\numlabel{thm:block_rounding}
\begin{keybox}
\keytitle{Theorem~\ref{thm:block_rounding} (Rounding error in the constructed block)}
Fix the selector weights and layer schedule of
Appendix~\ref{app:construction}, the matrix-product
kernels, accumulation order and casts. Under the scalar arithmetic model
\eqref{eq:rounding_model}, with exactly represented weights, flags and
mask and finite intermediates, propagating the nonnegative rules
\eqref{eq:enclosure_matmul}--\eqref{eq:enclosure_hadamard} through every
operation in \emph{both} squaring passes yields
\[
\lVert\widehat W^+-W^+\rVert_F\le\varepsilon_W,
\qquad |\widehat\alpha^+-\alpha^+|\le\varepsilon_\alpha,
\]
with the right sides in \eqref{eq:enclosure_output}. Uniform input
bounds give a uniform block bound, and for fixed input bounds both
enclosures tend to zero with the arithmetic and conversion errors
(Appendix~\ref{app:rounding}).
\end{keybox}

Appendix~\ref{app:rounding_model} states the arithmetic assumptions,
Appendix~\ref{app:rounding_enclosure} proves the enclosure, and
Appendix~\ref{app:rounding_writeback} bounds residual overwrites. The latter matter because
subtracting an old scratch value can lose relative accuracy even with
finite intermediates. These bounds are implementation-specific.

\paragraph{How accurately must the multiplier be stored?}
Lemma~\ref{lem:distinguish} shows why the multiplier matters. We now
bound the error from storing it imprecisely. Fix the data, $W$ and the
controls, and vary only $\alpha$; its encoding can use many coordinates.

\numlabel{prop:state_information}
\begin{keybox}
\keytitle{Proposition~\ref{prop:state_information} (Error from losing multiplier information)}
Let $H=\nabla h(W)$ and $I=[a,a+A]$, $A>0$. Suppose a nonempty set $J$ of
allowed entries stays on fixed active soft-threshold branches throughout
$I$, and $g_J=\gamma\lVert H_J\rVert_F>0$, with $H_J$ equal to $H$ on $J$
and zero elsewhere. If a deterministic encoder merges
$\alpha_1,\alpha_2\in I$, some decoded primal prediction errs by at least
$g_J|\alpha_1-\alpha_2|/2$. If the encoding takes at most $K\ge1$ values
on $I$ and the decoded primal error is uniformly at most
$\varepsilon$, then $\varepsilon\ge g_JA/(2K)$; a $B$-bit encoding needs
$\varepsilon\ge g_JA/2^{B+1}$. For an $L_D$-Lipschitz decoder,
$g_J|\alpha_1-\alpha_2|\le2\varepsilon+L_D\lVert
E(\alpha_1)-E(\alpha_2)\rVert$ (Appendix~\ref{app:state_information}).
\end{keybox}

Discarding $\alpha$ is the case $K=1$. The bound concerns information,
not a unique register layout. Appendix~\ref{app:state_information}
proves the result and gives a matching midpoint-quantization bound.

\section{How Many Updates Are Enough?}
\label{sec:stage}
We now ask how many updates suffice to reach a target accuracy while
the multiplier, penalty and step size stay fixed. The bound requires
contraction, a region the updates cannot leave, and a rounding-error
bound valid throughout that region. Fix finite $\alpha$, $\rho\ge0$,
$\gamma>0$, set $b=0$, and write
$\mathcal P(W)=M\odot\soft_{\gamma\lambda}
(W-\gamma\nabla\mathcal J_{\rho,\alpha}(W))$ for the primal map.
This describes one penalty stage, before the controller changes its settings.

\paragraph{When each update reduces the error.}
Let $\mathsf K_M(W)$ denote the Hessian of $\mathcal J_{\rho,\alpha}$
restricted to the allowed-edge subspace. With $H=\nabla h$ and
$c=\alpha+\rho h$, its quadratic form is
\begin{equation}
\begin{split}
\langle D,\mathsf K_M(W)D\rangle
={}&\tr(D^\top\Sigma D)
+c\,\langle D,\nabla^2h(W)[D]\rangle\\
&+\rho\langle H(W),D\rangle^2,
\qquad D\in\mathcal W_M.
\end{split}
\label{eq:stage_hessian_main}
\end{equation}
Its last term is positive semidefinite.
If $\mu I\preceq\mathsf K_M(W)\preceq L_J I$ uniformly on a convex
region, $\mu>0$, then for $0<\gamma<2/L_J$ the primal map contracts with
\begin{equation}
q=\max\{|1-\gamma\mu|,|1-\gamma L_J|\}<1.
\label{eq:spectral_q}
\end{equation}
The choice $\gamma=2/(L_J+\mu)$ gives $q=(L_J-\mu)/(L_J+\mu)$.

\paragraph{Checking the curvature bounds.}
For $\mathcal C=\{W\in\mathcal W_M:\lVert W-U\rVert_F\le r\}$,
put $R=\lVert U\rVert_F+r$. Lemma~\ref{lem:local_constants} gives
finite-$N$ constants $c_h(R),c_H(R),L_H(R),T_H(R)$ with
$c_h=O(R^4)$, $c_H=O(R^3)$, $L_H=O(R^2)$ and $T_H=O(R)$ as
$R\downarrow0$. A global sufficient lower bound is
\begin{equation}
\mu_{\mathrm g}=\lambda_M(\Sigma)
-(|\alpha|+\rho c_h(R))L_H(R),
\label{eq:mu_global_main}
\end{equation}
where $\lambda_M$ is the least eigenvalue of the loss Hessian on
nonempty allowed columns. Alternatively, compute the actual masked
Hessian at $U$, with extreme eigenvalues $m_U,L_U$, and set
\begin{equation}
\begin{split}
\delta_K(r)&=r\big(|\alpha+\rho h(U)|T_H(R)
+3\rho c_H(R)L_H(R)\big),\\
\mu_{\mathrm{loc}}&=m_U-\delta_K(r),\qquad
L_{\mathrm{loc}}=L_U+\delta_K(r).
\end{split}
\label{eq:local_certificate_main}
\end{equation}
If $\mu_{\mathrm{loc}}>0$, these are \emph{uniform} Hessian bounds on
$\mathcal C$ (Theorem~\ref{thm:local_hessian_certificate}). A two-node example in Appendix~\ref{app:local} passes this local test
even when the global bound fails. Proposition~\ref{prop:interval_stage_certificate}
gives a third test using interval arithmetic. Appendix~\ref{app:local}
proves all three.

\numlabel{thm:stage_certificate}
\begin{keybox}
\keytitle{Theorem~\ref{thm:stage_certificate} (Convergence within a fixed stage)}
\textit{Informal statement:} suppose $\mathcal P$ is $q$-Lipschitz,
$0\le q<1$, on $\mathcal C$, and every decoded transition stays in
$\mathcal W_M$ and differs from $\mathcal P$ by at most $\varepsilon$
\emph{uniformly on $\mathcal C$}. If
$d+\varepsilon\le(1-q)r$, where
$d=\lVert\mathcal P(U)-U\rVert_F$, then exact and decoded trajectories
starting in $\mathcal C$ stay there. The exact primal map has a unique
fixed point $W_\star$ in $\mathcal C$, and after $L$ \emph{updates},
\[
\lVert\widehat W_L-W_\star\rVert_F\le
q^LB_0+\frac{\varepsilon(1-q^L)}{1-q},
\]
where $B_0=\lVert\widehat W_0-U\rVert_F+d/(1-q)$.

\end{keybox}

For $0<q<1$, target $\mathrm{tol}>\varepsilon/(1-q)$, and $B_0>0$,
a sufficient number of updates is
\begin{equation}
L_{\mathrm{tol}}=\max\left\{0,
\left\lceil
\frac{\log\big(B_0/(\mathrm{tol}-\varepsilon/(1-q))\big)}
{\log(1/q)}\right\rceil\right\}.
\label{eq:integer_budget}
\end{equation}
 Each update is one constructed block of
$D_{\mathrm{blk}}=2\lceil\log_2p_{\max}\rceil+8$ layers. Thus the
execution depth is $D_{\mathrm{blk}}L_{\mathrm{tol}}$, with any initial
encoding cost separate.

Appendix~\ref{app:proof_stage} proves the invariant-ball and residual
bounds, including the $q=0$ and $B_0=0$ cases.
Corollary~\ref{cor:precision_stage} in Appendix~\ref{app:rounding_stage}
shows how to choose precision so the updates stay in the region and
reach the target accuracy.

\paragraph{When the bound applies in practice.}
The bound controls distance to a fixed point, which may still contain
cycles. None of the three tests passes on $270$ balls around $90$ saved
states. After further optimizing each fixed objective, the interval test
passes at $46/90$ centres with the recorded step size; $25$ saved stage
endpoints lie in these balls. The update bound excludes the earlier
iterations and this additional optimization. Appendix~\ref{app:practical_t3}
gives the coverage protocol (Table~\ref{tab:practical_t3_coverage})
and the subsequent tests with repeated transformer blocks.

\begin{figure*}[t]
\centering
\includegraphics[width=0.85\textwidth]{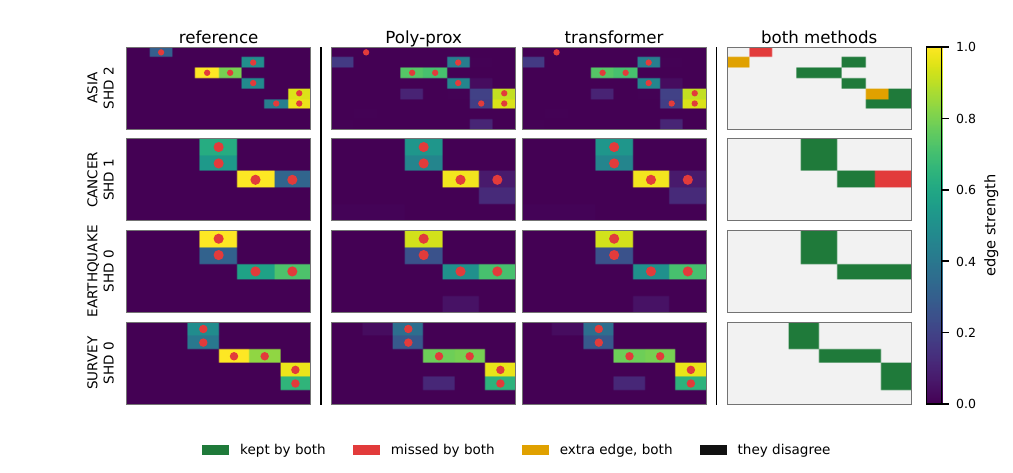}\\[4pt]
\includegraphics[width=0.85\textwidth]{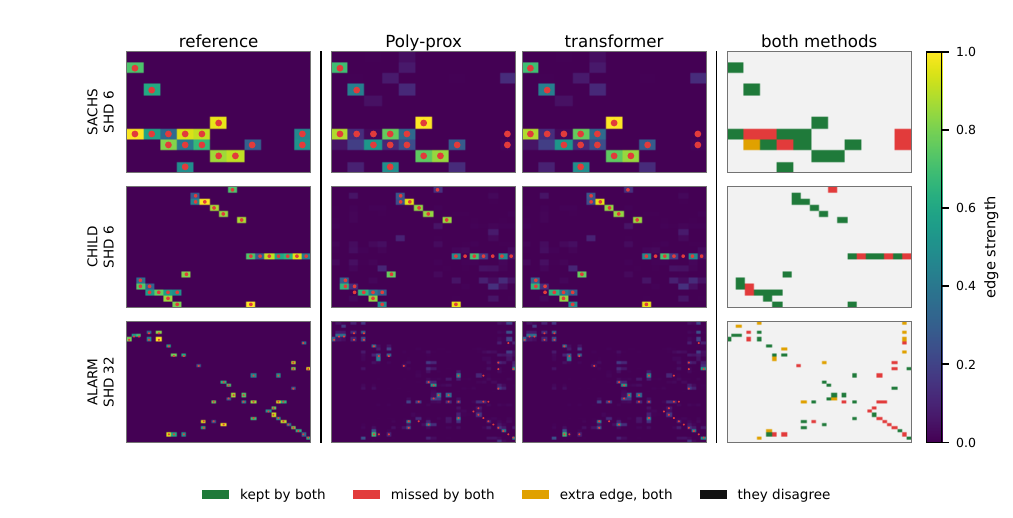}
\caption{\textbf{Benchmark topologies} (top: ASIA, CANCER, EARTHQUAKE,
SURVEY; bottom: SACHS, CHILD, ALARM). Median-SHD draw at $m=10^4$, SHD
beside each name; red dots are true edges; the column labelled ``transformer'' shows the direct arithmetic replay
of the block. Last
column, thresholded at $0.3$: green
kept by both, red missed by both, gold added by both; the black
disagreement category is empty on every row.}
\label{fig:inheritmaps}
\end{figure*}

\section{When Accurate Updates Recover the Graph}
\label{sec:recovery}
So far, we have measured agreement with Algorithm~\ref{alg:polyprox}.
Recovering the true graph also requires the solver itself to be accurate.
The next result separates error from computation, optimization and
finite data.

\numlabel{prop:recovery}
\begin{keybox}
\keytitle{Proposition~\ref{prop:recovery} (Graph recovery from an error bound)}
\textit{Informal statement:} if the total error
$\delta=\Delta_L+a_L+\zeta_m$ --- computation, optimization, and
statistical --- satisfies $\delta<\beta_{\min}/2$, where $\beta_{\min}$
is the weakest edge weight of a nonempty true graph, then thresholding the transformer's
output $\widehat W_L$ at any $\tau\in[\delta,\beta_{\min}-\delta)$ ---
keeping the entries with $\lvert\widehat W_{ij}\rvert>\tau$, so that the
endpoint $\tau=\delta$ is admissible --- recovers the true edge set
$G(W^\dagger)$ exactly.
\end{keybox}
The computation error $\Delta_L$ is what Theorem~\ref{thm:tracking}
controls; $a_L$ is optimization error against the chosen reference and
$\zeta_m$ is statistical error including regularization bias, which can
persist even with infinite data. Exact
execution sets the first term to zero, which is why
Section~\ref{sec:experiments} keeps the solver's own output
$W_{\mathrm{ref}}$ apart from the true graph $W^\dagger$.

Appendix~\ref{app:proof_recovery} defines the three errors and proves
the threshold claim, including its certified-stage specialization.

\section{Experiments}
\label{sec:experiments}
We ask how well the solver recovers graphs, whether the transformer
reproduces its updates, whether training learns those updates, and
whether replaying them yields the same graphs. In the solver comparison, Poly-prox, Poly-LBFGS and
NOTEARS use $\lambda=0.1$ and read-off threshold $0.3$; edge weights come
from $\pm[0.5,2]$. Appendix~\ref{app:experiments} gives the protocols,
metrics and failure counts.

\paragraph{How well does the reference solver recover graphs?}
On linear SEMs up to $20$ variables, continuous solvers and sort-regress
have lower structural Hamming distance (SHD) than PC and GES on raw data.
Poly-prox reaches median cycle constraint $7\cdot10^{-6}$, versus
$10^{-8}$ for L-BFGS, with slightly higher SHD but acyclic thresholded
outputs throughout the raw grid (Figure~\ref{fig:shd}). This establishes
a useful reference, not a leading estimator
(Appendix~\ref{app:solver_diagnostics}).

\paragraph{Does the transformer reproduce the solver update?}
We instantiate weights from $p_{\max}$ alone and compare literal
attention/FF execution with an independent matrix-power implementation.
Across $254$ finite-reference cases with $p_{\max}\in\{4,8,16,32\}$,
all sizes $p\le p_{\max}$, random masks and varied controls, the maximum
relative discrepancy is $3.0\cdot10^{-15}$ in $W$ and
$1.1\cdot10^{-13}$ in $\alpha$. One additional reference case overflows.
These tests encode each input anew. To test repeated execution, we also
pass the construction's full output directly through three blocks: $150$ comparisons on $50$ inputs have maximum Frobenius $W$
error $1.37\cdot10^{-16}$. Appendix~\ref{app:executor} distinguishes
these tests and documents a scratch-reset defect in the earlier
implementation. Agreement on one update does not ensure correct reuse
of the state.

The rounding bounds stay within the representable range in $129/135$
serial-arithmetic tests. In six float16 cases, the bounds exceed that
range even though the sampled outputs remain finite
(Appendix~\ref{app:new_precision}). Six float64 runs with fixed controls pass the full state between blocks,
stay inside the verified regions and reach the $10^{-6}$ distance target
within the predicted number of updates (Table~\ref{tab:practical_t3_rollouts}).
These runs test the bound within a stage; convergence of the outer
controller remains a separate question.

\begin{figure*}[t]
\centering
\includegraphics[width=0.90\textwidth]{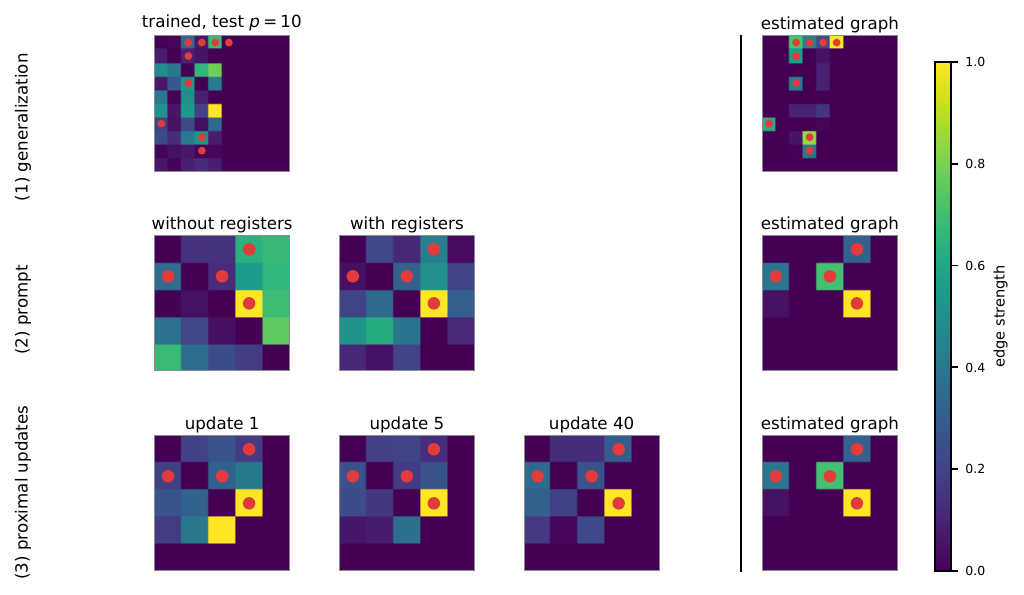}
\caption{\textbf{What the trained models produce.} Red dots mark true
edges. (1) A softmax model trained at $p=5$ and applied at $p=10$ after
$12$ updates. (2) At $p=5$, a collapsed-input linear model withholds
$W,\alpha$, while a softmax model receives them; this illustration
changes both architecture and state access. Table~\ref{tab:memory}
separately tests write-back with identical inputs and architectures.
(3) Reference-solver updates $1$, $5$ and $40$. The right column is
the solver's returned estimate. Colour shows $|W|$, scaled within each
panel; the rows use two illustrative problem instances.}
\label{fig:heatmaps}
\end{figure*}

\begin{figure*}[t]
\centering
\includegraphics[width=0.90\textwidth]{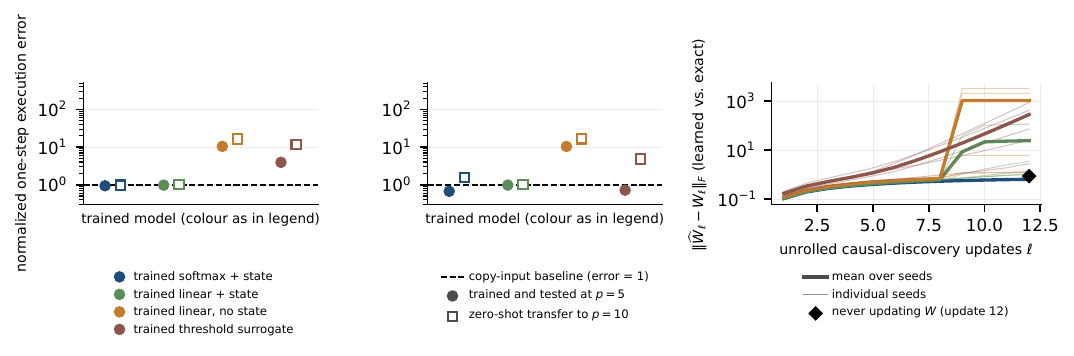}\\[4pt]
\includegraphics[width=0.62\textwidth]{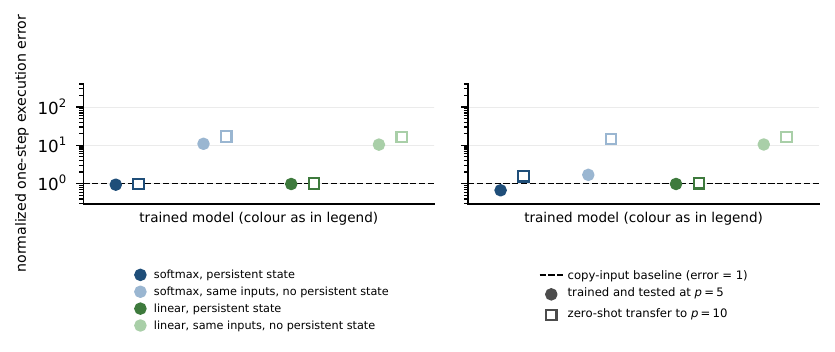}
\caption{\textbf{How close small trained transformers get to the exact update.}
One-step error is normalized so copying scores $1$ (dashed line).
Circles: trained and tested at $p=5$; squares: transfer to $p=10$.
\emph{Top:} smaller- and larger-corpus one-step errors, then smaller-corpus
rollout distances. \emph{Bottom:} identical state inputs with residual
write-back or prediction from scratch; the latter stays above copying.
Five training runs are attempted per condition; means and standard errors
use finite run summaries (failure counts in Tables~\ref{tab:train}
and~\ref{tab:memory}).}
\label{fig:execution}\label{fig:memory}
\end{figure*}

\paragraph{Can training learn the same update?}
We train width-$64$ attention blocks with four tied repeats at $p=5$,
using five seeds: $2460$ transitions over $12$ epochs or $24600$ over
$30$. The loss is mean squared error in $W^+$ plus $0.05$ times that in
$\alpha^+$. Appendix~\ref{app:learn} gives the data, optimizer and
treatment of failed runs.

One-step error is $\lVert\widehat W^+-W^+\rVert_F/
(\lVert W^+-W\rVert_F+10^{-8})$, so copying scores approximately $1$.
The $12$-step metric is an unnormalized Frobenius distance, for which
copying scores $0.86$. With the smaller corpus, state-input softmax and
linear attention score $0.94\pm0.00$ and $0.98\pm0.01$ on one step.
With the larger corpus, softmax reaches $0.67\pm0.01$ and a
learned-threshold map \citep{Gregor2010LearningFA} reaches $0.72\pm0.01$;
their trajectory errors improve to $0.40$ and $0.41$. Transfer to $p=10$
fails: one-step errors rise to $1.56\pm0.02$ and $4.8\pm0.4$,
respectively (Figure~\ref{fig:execution}, Table~\ref{tab:train}). Figure~\ref{fig:heatmaps}(1)
shows how a plausible edge pattern can still have inaccurate weights.

Holding state inputs fixed, predicting the next state from scratch
instead of adding a residual update raises the smaller-corpus errors
to $10.44\pm0.05$ (linear) and $10.98\pm0.38$ (softmax).
Figure~\ref{fig:memory} (bottom) and Table~\ref{tab:memory} isolate write-back;
Figure~\ref{fig:heatmaps}(2) separately illustrates a model whose state
inputs are withheld. These models have fewer layers and lack the construction's bilinear
sublayers. Whether training can learn the update using the full
architecture remains open.

\paragraph{Does the transformer recover the same graphs?}
Arithmetic replay uses the reference controls on ASIA, CANCER,
EARTHQUAKE, SURVEY, SACHS, CHILD and ALARM
\citep{Spiegelhalter1992Learning,Beinlich1989TheAM}. Ten Gaussian noise
draws per sample size yield $220$ datasets, with one fixed weighting per
topology (Appendix~\ref{app:benchmark_protocol}). Replay and Poly-prox
agree exactly in float64 on final weights, multipliers and thresholded
graphs across all datasets. Both recover EARTHQUAKE and SURVEY perfectly
and make identical mistakes on SACHS, CHILD and ALARM
(Figure~\ref{fig:inheritmaps}). Mean SHD on CHILD and ALARM is about
$6.3$ and $32$, versus $5$ and $21$ for NOTEARS. Replaying the updates also reproduces the solver's estimation errors.

\section{Discussion and Conclusion}
\label{sec:discussion}
A fixed-weight transformer executes the complete Poly-prox transition
in logarithmic depth, carrying the graph and multiplier between blocks.
The depth, information and rounding bounds specify its requirements;
experiments distinguish exact construction from partial learning by
ordinary attention models.

The guarantees apply to the stated update rule. The bound on the number
of updates requires contraction, a region the iterates cannot leave,
and a verified error bound. It excludes the adaptive controller and
does not by itself ensure acyclicity or graph recovery.
A learned executor should therefore be tested against the reference
transition on identical states and controls, then with its full output
fed back across updates. Graph accuracy alone cannot establish execution:
an exact executor also inherits the solver's mistakes. Learning within
the full constructed architecture remains open; initialization at the
constructed weights is one concrete direction.

\bigskip
\nocite{*}


\clearpage
\appendix
{\centering\Large\bfseries Supplementary Materials\par}
\vspace{1em}

\section{Proofs and Derivative Bounds}
\label{app:proofs}
This appendix proves the gradient, execution, state-error, conditioning,
update-count and graph-recovery results used in the main paper.

\subsection{Proof of Proposition~\ref{prop:gradient}}
\label{app:proof_gradient}
\numlabel{prop:gradient}
\noindent\textbf{Proposition~\ref{prop:gradient} (Update gradients).}
$\nabla\mathcal L(W)=\Sigma(W-I)$, and with $B=W\odot W$, $Y=I+B/N$,
\begin{equation}
\begin{aligned}
\nabla h(W)&=2W\odot(Y^{N-1})^\top,\\
\nabla\mathcal J_{\rho,\alpha}(W)&=\Sigma(W-I)+\big(\alpha+\rho h(W)\big)
  \nabla h(W).
\end{aligned}
\label{eq:grad_h}
\end{equation}
The proximal map of $\gamma\lambda\lVert W\rVert_1+\iota_{\mathcal
W_M}(W)$, where $\iota_{\mathcal W_M}$ is the convex indicator ($0$ on
$\mathcal W_M$ and $+\infty$ off it, not the $0/1$ indicator $\ind$ used
for the mask), is $M\odot\soft_{\gamma\lambda}(W)$.
\begin{proof}
$\nabla\mathcal L(W)=\Sigma(W-I)$ since
$\mathcal L(W)=\tfrac12\tr[(I-W)^\top\Sigma(I-W)]$ and $\Sigma$ is
symmetric. For $h$: $d\tr(Y^N)=N\tr(Y^{N-1}dY)$ by cyclicity of the trace,
and $dY=2(W\odot dW)/N$, so $dh=\langle2W\odot(Y^{N-1})^\top,dW\rangle$. The
chain rule gives the third identity; the proximal claim separates over
entries of $\mathcal W_M$.
\end{proof}

\subsection{Proof of Lemma~\ref{lem:vanishing}}
\label{app:proof_vanishing}
\numlabel{lem:vanishing}
\noindent\textbf{Lemma~\ref{lem:vanishing} (The constraint's gradient
vanishes exactly on DAGs).} For $i\ne j$,
\begin{equation}
[\nabla h(W)]_{ij}=2W_{ij}\sum_{k=1}^{N-1}\binom{N-1}kN^{-k}[B^k]_{ji}.
\label{eq:cycle_grad}
\end{equation}
Hence $[\nabla h(W)]_{ij}\ne0$ iff $W_{ij}\ne0$ and the edge $i\to j$ lies
on a directed cycle of $G(W)$ of length at most $N$. Consequently
$\nabla h(W)=0$ iff $h(W)=0$, i.e.\ iff $G(W)$ is acyclic.
\begin{proof}
Expand $Y^{N-1}=\sum_{k=0}^{N-1}\binom{N-1}kN^{-k}B^k$ in
\eqref{eq:grad_h}; the $k=0$ term is $I$, which vanishes off the diagonal,
giving \eqref{eq:cycle_grad}. Since $B=W\odot W\ge0$ entrywise, every
$B^k$ is entrywise non-negative and every coefficient
$\binom{N-1}kN^{-k}$ is positive, so the sum in \eqref{eq:cycle_grad}
vanishes exactly when every one of its terms does; walks of different
lengths cannot cancel. A nonzero $[B^k]_{ji}$ is a weighted
directed walk of length $k$ from $j$ to $i$ using edges of $G(W)$
(since $B_{ab}\ne0$ iff edge $a\to b$ is present); together with the edge
$i\to j$ this closes a cycle of length $k+1\le N$ through $i\to j$, and
conversely every such cycle contributes such a walk. Since every simple
cycle has length at most $p\le N$, this range of $k$ detects every
cycle in $G(W)$.
\end{proof}

\paragraph{Feasibility and KKT stationarity.}
At a feasible $W$, Lemma~\ref{lem:vanishing} removes the constraint
term from the finite-multiplier KKT equation. On $\mathcal W_M$ this
leaves $0\in M\odot\nabla\mathcal L(W)+\partial_M\psi_M(W)$, which
is equivalent to masked-Lasso optimality by convexity. For example,
take $p=N=2$, both off-diagonal entries allowed,
$\Sigma=\big(\begin{smallmatrix}1&r\\r&1\end{smallmatrix}\big)$
with $0\le\lambda<r<1$, and $W=0$. Then $h(W)=0$ and
$\nabla h(W)=0$, but both allowed least-squares gradient entries are
$-r$; no subgradient in $[-\lambda,\lambda]$ cancels them. This
feasible point is nonregular and is not a KKT point. The
stage-stationary family in Appendix~\ref{app:proof_multiplier}
approaches precisely such a non-Lasso-stationary feasible limit by
letting the effective multiplier diverge.

\subsection{Proof of Theorem~\ref{thm:realization}}
\label{app:proof_realization}
\noindent\textbf{Theorem~\ref{thm:realization} (Exact execution of one Poly-prox update).} Fix $p_{\max}\ge1$, $s=\lceil\log_2p_{\max}\rceil$,
$N=2^s$. There is a block of $2s+8$ layers, at most four linear heads
per layer of key dimension at most $p_{\max}$, embedding width
$13p_{\max}+18$, and at most $26p_{\max}+36$ ReLU units and
$\max\{p_{\max}+4,2p_{\max}+1\}$ bilinear units per feedforward sublayer, such that for
every $p\le p_{\max}$, every symmetric $\Sigma$, binary zero-diagonal
$M$, $W\in\mathcal W_M$, $\alpha\in\R$, $\rho,\gamma,\lambda\ge0$, and
$b\in\{0,1\}$, the block computes $W^+,\alpha^+$ exactly as in
\eqref{eq:w_update}--\eqref{eq:alpha_update}. The weights are the same
for every $p\le p_{\max}$ and every input, with nonzero magnitudes in
$\{1/N,1,2\}$; they depend on $p_{\max}$, not the data.
\begin{proof}
Put $Y=I+(W\odot W)/N$ and define
\begin{equation}
S_0=Y-I,\quad Q_0=0,\qquad
S_{j+1}=2S_j+S_j^2,\quad Q_{j+1}=Q_j+S_j+Q_jS_j.
\label{eq:squaring}
\end{equation}
Induction gives $I+S_j=Y^{2^j}$ and $I+Q_j=Y^{2^j-1}$: the first
recurrence squares $I+S_j$, and the second multiplies $I+Q_j$ by
$I+S_j$. Thus $h(W)=\tr S_s$ and
$\nabla h(W)=2W\odot Q_s^\top$, since $W$ has zero diagonal.
The three-head doubling layer and the eight preparation, threshold,
trace and write-back layers are specified coordinate by coordinate in
Appendix~\ref{app:construction}. They compute the least-squares gradient,
assemble the masked pre-threshold state, apply the exact ReLU identity,
then repeat the squaring pass on $W^+$ to update $\alpha$.
Two $s$-layer passes and eight other layers give depth $2s+8$.

The selector formulas in that appendix realize every matrix product,
transpose, sum and broadcast with at most four heads. Their feedforward
maps use signed ReLU pairs for linear writes and one bilinear unit per
coordinate product, giving the stated resource bounds. Forbidden primal
entries and padded columns remain zero, while the final layer clears
scratch exactly. Hence the same weights serve every $p\le p_{\max}$,
and the output encoding is a valid input to the next block. Induction
on stacked blocks proves Corollary~\ref{cor:exact_trajectory}.
\end{proof}

\subsection{Proof of Theorem~\ref{thm:tracking}}
\label{app:proof_tracking}
\noindent\textbf{Theorem~\ref{thm:tracking} (Accumulation of update errors).} If $\Psi_\ell$ is $\kappa_\ell$-Lipschitz in
$\lVert\cdot\rVert_*$ on a set $\mathcal K$ containing
$z_\ell,\widehat z_\ell$, and the decoded transition satisfies
$\lVert\widehat z_{\ell+1}-\Psi_\ell(\widehat
z_\ell)\rVert_*\le\varepsilon_\ell$, then with $e_0=\lVert\widehat z_0-
z_0\rVert_*$,
\begin{equation}
\lVert\widehat z_L-z_L\rVert_*\ \le\ e_0\prod_{u=0}^{L-1}\kappa_u+
\sum_{t=0}^{L-1}\varepsilon_t\prod_{u=t+1}^{L-1}\kappa_u.
\label{eq:track}
\end{equation}
\begin{proof}
Writing $e_\ell=\lVert\widehat z_\ell-z_\ell\rVert_*$, the triangle
inequality gives
$e_{\ell+1}\le\varepsilon_\ell+\kappa_\ell e_\ell$.
Unrolling this scalar recurrence proves \eqref{eq:track}, with empty
products equal to one and empty sums equal to zero.
\end{proof}

\subsection{Proof of Lemma~\ref{lem:distinguish}}
\label{app:proof_distinguish}
\begin{lemma}[Multiplier information must be retained]
\label{lem:distinguish}
Fix $\Sigma$, $M$, $W\in\mathcal W_M$, $\lambda$, $\rho$, $b$ and
$\gamma>0$, let $\alpha_1\ne\alpha_2$, and write
$V(\alpha)=W-\gamma\nabla\mathcal J_{\rho,\alpha}(W)$. If
$[\nabla h(W)]_{ij}\ne0$ for some $(i,j)$, and at least one of
$V_{ij}(\alpha_1)$, $V_{ij}(\alpha_2)$ lies outside the dead zone
$[-\gamma\lambda,\gamma\lambda]$ of the soft-threshold, then the
augmented gradients differ and so do the updates,
$W^+(\alpha_1)\ne W^+(\alpha_2)$. Hence any deterministic executor that
returns $W^+$ exactly must retain information distinguishing $\alpha_1$
from $\alpha_2$.
\end{lemma}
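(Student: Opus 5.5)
The plan is to write the pre-threshold matrix explicitly as an affine function of $\alpha$. Then one soft-threshold coordinate is examined on the entry $(i,j)$. By Proposition~\ref{prop:gradient},
\[
\nabla\mathcal J_{\rho,\alpha}(W)=\Sigma(W-I)+\bigl(\alpha+\rho h(W)\bigr)\nabla h(W).
\]
Here $W$, $\Sigma$, $\rho$ and $\lambda$ are fixed, so only the coefficient $\alpha+\rho h(W)$ varies. Hence
\[
\nabla\mathcal J_{\rho,\alpha_1}(W)-\nabla\mathcal J_{\rho,\alpha_2}(W)=(\alpha_1-\alpha_2)\nabla h(W),
\]
which is nonzero because $[\nabla h(W)]_{ij}\ne0$ and $\alpha_1\ne\alpha_2$. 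This gives the first conclusion, that the augmented gradients differ. It also gives
\[
V_{ij}(\alpha_1)-V_{ij}(\alpha_2)=-\gamma(\alpha_1-\alpha_2)[\nabla h(W)]_{ij}\ne0,
\]
since $\gamma>0$.

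Next we reduce to a scalar statement about $\soft_\tau$ with $\tau=\gamma\lambda\ge0$. We need $M_{ij}=1$, so that the mask does not kill the entry. Lemma~\ref{lem:vanishing} says $[\nabla h(W)]_{ij}\ne0$ forces $W_{ij}\ne0$, and $W\in\mathcal W_M$ then gives $M_{ij}=1$. So $W^+_{ij}(\alpha)=\soft_\tau(V_{ij}(\alpha))$.

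The scalar fact we need is this: if $a\ne b$ and at least one of $a,b$ lies outside $[-\tau,\tau]$, then $\soft_\tau(a)\ne\soft_\tau(b)$. We prove it by cases. Say $|a|>\tau$, so $\soft_\tau(a)=a-\tau\operatorname{sign}(a)\ne0$ and has the sign of $a$.
\begin{itemize}
\item If $|b|\le\tau$, then $\soft_\tau(b)=0\ne\soft_\tau(a)$.
\item If $|b|>\tau$ with the opposite sign to $a$, the outputs have opposite nonzero signs.
\item If $|b|>\tau$ with the same sign as $a$, the map $t\mapsto t-\tau\operatorname{sign}(t)$ is a translation on that half-line, hence injective, and $a\ne b$ gives distinct outputs.
\end{itemize}
Applying this with $a=V_{ij}(\alpha_1)$ and $b=V_{ij}(\alpha_2)$ yields $W^+_{ij}(\alpha_1)\ne W^+_{ij}(\alpha_2)$, and hence $W^+(\alpha_1)\ne W^+(\alpha_2)$.

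The final claim is then a contrapositive. Suppose a deterministic executor's internal encoding assigned the same representation to $\alpha_1$ and $\alpha_2$, with all other inputs equal. Its output would be the same in both cases, so it could not equal both $W^+(\alpha_1)$ and $W^+(\alpha_2)$.

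The only real obstacle is the dead-zone case analysis, in particular excluding the possibility that both values collapse to $0$. The hypothesis that at least one of $V_{ij}(\alpha_1)$, $V_{ij}(\alpha_2)$ lies outside the dead zone is exactly what rules this out. The mask check via Lemma~\ref{lem:vanishing} is the other point that must not be skipped.
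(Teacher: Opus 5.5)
Your proposal is correct and follows the paper's proof step for step. It uses the gradient difference $(\alpha_1-\alpha_2)\nabla h(W)$, the mask check via Lemma~\ref{lem:vanishing}, injectivity of $\soft_\tau$ off the dead zone, and the deterministic-executor contrapositive. The only difference is cosmetic: you verify the scalar injectivity by an explicit three-case analysis, whereas the paper cites that $\soft_\tau$ is nondecreasing, strictly increasing outside $[-\tau,\tau]$ and zero on it.
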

\begin{proof}
By \eqref{eq:grad_h} the least-squares term and $\rho h(W)$ do not
depend on $\alpha$, so
$\nabla\mathcal J_{\rho,\alpha_1}(W)-\nabla\mathcal
J_{\rho,\alpha_2}(W)=(\alpha_1-\alpha_2)\nabla h(W)$, which is nonzero
at $(i,j)$; hence
$V_{ij}(\alpha_1)-V_{ij}(\alpha_2)=-\gamma(\alpha_1-\alpha_2)
[\nabla h(W)]_{ij}\ne0$. By Lemma~\ref{lem:vanishing},
$[\nabla h(W)]_{ij}\ne0$ forces $W_{ij}\ne0$, so $M_{ij}=1$ and
$W^+_{ij}(\alpha)=\soft_\tau(V_{ij}(\alpha))$ with $\tau=\gamma\lambda$.
The map $\soft_\tau$ is nondecreasing, strictly increasing outside
$[-\tau,\tau]$, and zero on it; so for $a\ne b$,
$\soft_\tau(a)=\soft_\tau(b)$ only if both lie in $[-\tau,\tau]$, which
the hypothesis excludes. Thus $W^+_{ij}(\alpha_1)\ne W^+_{ij}(\alpha_2)$.
A deterministic executor given identical encodings for the two
multiplier values would return identical outputs, so its encodings of
$\alpha_1$ and $\alpha_2$ must differ.
\end{proof}

\subsection{Proof of Proposition~\ref{prop:multiplier_growth}}
\label{app:proof_multiplier}
\paragraph{Masked stationarity.}
The effective multiplier $c_k=\alpha_k+\rho_kh(W_k)$ need not have the
same magnitude as $\alpha_k$ alone. All stationarity residuals are taken
\emph{inside} the linear subspace $\mathcal W_M$. Let $\Pi_MA=M\odot A$
and $G_M(W)=\Pi_M\nabla\mathcal L(W)$. Define
$\psi_M:\mathcal W_M\to\R$ by $\psi_M(W)=\lambda\lVert W\rVert_1$, with
subspace subdifferential
\begin{equation}
\begin{aligned}
\partial_M\psi_M(W)&=\{\xi\in\mathcal W_M:\
\xi_{ij}=\lambda\operatorname{sign}(W_{ij})\ (W_{ij}\ne0),\\
&\qquad|\xi_{ij}|\le\lambda\ (M_{ij}=1,\ W_{ij}=0)\}.
\end{aligned}
\label{eq:masked_subgradient}
\end{equation}
Every $\xi$ is zero on forbidden entries and has norm at most
$\lambda\lVert M\rVert_F$. Set
\begin{equation}
d_M(W)=\min_{\xi\in\partial_M\psi_M(W)}\lVert G_M(W)+\xi\rVert_F,
\label{eq:masked_residual}
\end{equation}
the stationarity residual of the convex masked Lasso problem with
acyclicity dropped, so $d_M(W)=0$ exactly at its minimizers. For
$\eta_k\ge0$, $W_k$ is $\eta_k$-stationary if some
$\xi\in\partial_M\psi_M(W_k)$ satisfies
\begin{equation}
\lVert G_M(W_k)+c_k\nabla h(W_k)+\xi\rVert_F\le\eta_k.
\label{eq:masked_eta_residual}
\end{equation}
We use the subspace subgradient rather than the unbounded full-space
normal cone of an indicator of $\mathcal W_M$
(Appendix~\ref{app:proof_multiplier}).

Throughout this proof, $\psi_M$, $\partial_M\psi_M$ and $d_M$ are the
subspace quantities in \eqref{eq:masked_subgradient}--\eqref{eq:masked_residual}.
Their subgradients are bounded by $\lambda\lVert M\rVert_F$; no
full-space normal-cone term is included. Since $W\in\mathcal W_M$,
$\nabla h(W)$ is itself supported on $M$.
\noindent\textbf{Proposition~\ref{prop:multiplier_growth} (Growth of the effective multiplier).} Let $W_k\in\mathcal W_M$ and
$c_k=\alpha_k+\rho_kh(W_k)\in\R$, with no sign assumed, and suppose
$W_k$ is $\eta_k$-stationary for the augmented objective: there is
$\xi_k\in\partial_M\psi_M(W_k)$ with
\begin{equation}
\lVert M\odot\nabla\mathcal L(W_k)+c_k\nabla h(W_k)+\xi_k\rVert_F\le
\eta_k.
\label{eq:eta_stationary}
\end{equation}
Let $C_k=\lVert M\odot\nabla\mathcal L(W_k)\rVert_F+\lambda\lVert
M\rVert_F+\eta_k$. Then
\begin{equation}
d_M(W_k)-\eta_k\ \le\ |c_k|\,\lVert\nabla h(W_k)\rVert_F\ \le\ C_k.
\label{eq:sandwich}
\end{equation}
If $W_k\to\bar W$ with $h(\bar W)=0$, $\eta_k\to0$, and $\bar W$ is not a
masked-Lasso minimizer (so $d_*:=d_M(\bar W)>0$), then for any Lipschitz
bound $L_H$ of $\nabla h$ near $\bar W$,
\begin{equation}
|c_k|\ \ge\ \frac{d_*}{4L_H\lVert W_k-\bar W\rVert_F}\ \longrightarrow\
\infty
\label{eq:rate}
\end{equation}
for all large $k$; in particular $W_k\ne\bar W$ for large $k$.
\begin{proof}
Since $\xi_k\in\partial_M\psi_M(W_k)$, the residual $d_M(W_k)$ is at
most $\lVert M\odot\nabla\mathcal L(W_k)+\xi_k\rVert_F$. Inserting
$c_k\nabla h(W_k)$ and splitting it off, then using
\eqref{eq:eta_stationary} on what remains,
\begin{equation*}
d_M(W_k)\ \le\ \eta_k+|c_k|\,\lVert\nabla h(W_k)\rVert_F.
\end{equation*}
In the other direction,
$|c_k|\lVert\nabla h(W_k)\rVert_F=\lVert c_k\nabla h(W_k)\rVert_F\le
\eta_k+\lVert M\odot\nabla\mathcal L(W_k)\rVert_F+\lVert\xi_k\rVert_F
\le C_k$, because every element of $\partial_M\psi_M$ has Frobenius norm
at most $\lambda\lVert M\rVert_F$. This is \eqref{eq:sandwich}; only
$|c_k|$ enters, so the sign of $c_k$ plays no role.

The minimum in \eqref{eq:masked_residual} is attained on a nonempty
compact subgradient set. To see lower semicontinuity, take any
convergent sequence and a subsequence realizing the liminf of $d_M$;
choose a minimizing subgradient at each point. Their common norm bound
gives a further convergent subsequence, and the closed graph of
$\partial_M\psi_M$ places its limit in the subgradient set at the limit
point. Continuity of $G_M$ then shows that the liminf is at least the
limiting point's residual. Thus $d_*>0$ gives $d_M(W_k)\ge d_*/2$
for large $k$, and with $\eta_k\le d_*/4$ eventually, the left half of
\eqref{eq:sandwich} gives $|c_k|\lVert\nabla h(W_k)\rVert_F\ge d_*/4>0$.
In particular $\nabla h(W_k)\ne0$. Since $\nabla h(\bar W)=0$
(Lemma~\ref{lem:vanishing}) and $\nabla h$ is $L_H$-Lipschitz near $\bar
W$, $\lVert\nabla h(W_k)\rVert_F=\lVert\nabla h(W_k)-\nabla h(\bar
W)\rVert_F\le L_H\lVert W_k-\bar W\rVert_F$ for $W_k$ near $\bar W$, so
\begin{equation*}
|c_k|\ \ge\ \frac{d_*/4}{\lVert\nabla h(W_k)\rVert_F}\ \ge\
\frac{d_*}{4L_H\lVert W_k-\bar W\rVert_F},
\end{equation*}
which is \eqref{eq:rate}. The right side is finite only if
$W_k\ne\bar W$, which the previous step already guarantees, and it is
unbounded as $W_k\to\bar W$, so $|c_k|\to\infty$.
\end{proof}

\paragraph{A two-node example with faster multiplier growth.}
Take $p=N=2$, $M_{12}=M_{21}=1$,
$\Sigma=\big(\begin{smallmatrix}1&r\\r&1\end{smallmatrix}\big)$,
$0<\lambda<r<1$, and $W_t=\big(\begin{smallmatrix}0&t\\t&0
\end{smallmatrix}\big)$. Then $h(W_t)=t^4/2$, both allowed entries of
$\nabla h(W_t)$ equal $t^3$, and setting $c_t=(r-\lambda-t)/t^3$ makes
every allowed stationarity equation exact for small $t>0$. We see that $c_t\asymp t^{-3}$ as $W_t\to0$, faster than the general
$1/\mathrm{dist}$ rate above, so that rate is a lower bound rather than
the exact order everywhere. This example realizes the proposition's stationarity hypothesis exactly.

\subsection{Proof of Corollary~\ref{cor:relative_accuracy}}
\label{app:proof_relacc}
\noindent\textbf{Corollary~\ref{cor:relative_accuracy} (Relative cycle
accuracy).} Under the hypotheses stated there,
$\lVert\widetilde g-g\rVert_F\le(2\nu+\nu^2)C_k$.
\begin{proof}
Write $g=M\odot\nabla\mathcal L(W_k)+c_k\nabla h(W_k)$ for the assembled
gradient and $\widetilde g=M\odot\nabla\mathcal L(W_k)+\widetilde c_k
\widetilde H$ for the computed one; the covariance is exact by
hypothesis, so the least-squares term cancels and
\[
\widetilde g-g=(\widetilde c_k-c_k)\nabla h(W_k)
+\widetilde c_k\big(\widetilde H-\nabla h(W_k)\big).
\]
The first term has norm at most $\nu|c_k|\lVert\nabla h(W_k)\rVert_F$.
For the second, $|\widetilde c_k|\le(1+\nu)|c_k|$, so its norm is at
most $(1+\nu)|c_k|\cdot\nu\lVert\nabla h(W_k)\rVert_F$. Adding,
\[
\lVert\widetilde g-g\rVert_F\le\nu(2+\nu)\,|c_k|\,
\lVert\nabla h(W_k)\rVert_F.
\]
The right half of \eqref{eq:sandwich}, which uses only
$\eta_k$-stationarity \eqref{eq:eta_stationary} and no sign of $c_k$,
bounds $|c_k|\lVert\nabla h(W_k)\rVert_F$ by $C_k$, which is the claim.
\end{proof}
Relative accuracy of $h$ alone does not imply relative accuracy of
$c=\alpha+\rho h$, because the sum may cancel for signed $\alpha$.
For exact $c$ and absolute gradient error at most $\epsilon$, the bound
is instead $|c|\epsilon$, which permits amplification as $|c|$ grows.
Neither model forces a particular realized error. The literal block's
operation-level absolute enclosure is given in Appendix~\ref{app:rounding}.

\subsection{Proof of Theorem~\ref{thm:stage_certificate}}
\label{app:proof_stage}
\noindent\textbf{Theorem~\ref{thm:stage_certificate} (Convergence within a fixed stage).} Let $\mathcal C=\{W\in\mathcal
W_M:\lVert W-U\rVert_F\le r\}$, suppose $\mathcal P$ is $q$-Lipschitz on
$\mathcal C$ with $0\le q<1$, let $d=\lVert\mathcal P(U)-U\rVert_F$, and
suppose every approximate transition stays in $\mathcal W_M$ and is
within $\varepsilon$ of $\mathcal P$ on $\mathcal C$. If
$d+\varepsilon\le(1-q)r$, both exact and approximate iterations from
$\mathcal C$ stay in $\mathcal C$; $\mathcal P$ has a unique fixed point
$W_\star\in\mathcal C$; and, writing
$B_0=\lVert\widehat W_0-U\rVert_F+d/(1-q)$,
\begin{equation}
\lVert\widehat W_L-W_\star\rVert_F\le q^LB_0+
\frac{\varepsilon(1-q^L)}{1-q}.
\label{eq:stage_bound}
\end{equation}
For any $W\in\mathcal C$, $\lVert W-W_\star\rVert_F\le
\lVert W-\mathcal P(W)\rVert_F/(1-q)$. If only a decoded next state
$\widehat{\mathcal P}(W)$ is available, the bound is
$\big(\lVert W-\widehat{\mathcal P}(W)\rVert_F+\varepsilon\big)/(1-q)$:
the local error is added in the numerator.
\begin{proof}
For $W\in\mathcal C$,
$\lVert\widehat{\mathcal P}(W)-U\rVert_F\le\varepsilon+qr+d\le r$;
setting $\varepsilon=0$ proves exact invariance too. Banach's theorem
gives a unique fixed point of $\mathcal P$ in this closed ball, with
$\lVert W_\star-U\rVert_F\le q\lVert W_\star-U\rVert_F+d$, hence
$\lVert W_\star-U\rVert_F\le d/(1-q)$.
The one-step error satisfies
$e_{\ell+1}\le\varepsilon+qe_\ell$ for
$e_\ell=\lVert\widehat W_\ell-W_\star\rVert_F$.
Summing the geometric series and using $e_0\le B_0$ proves
\eqref{eq:stage_bound}. Finally,
$\lVert W-W_\star\rVert_F\le\lVert W-\mathcal P(W)\rVert_F
+q\lVert W-W_\star\rVert_F$ gives the residual certificate.
Replacing $\mathcal P(W)$ by its approximate value adds at most
$\varepsilon$ to the numerator.
\end{proof}

If $B_0=0$, zero updates suffice. If $q=0$, one update has error at most
$\varepsilon$, unless the initial bound already meets the target.
The fixed point may be cyclic; the update bound assumes fixed controls.

\subsection{Proof of Proposition~\ref{prop:recovery}}
\label{app:proof_recovery}
\noindent\textbf{Proposition~\ref{prop:recovery} (Graph recovery from an error bound).} Let $W^\circ$ be any reference matrix (e.g.\ an empirical
minimizer or the numerical limit) with $\normmax{W_L-W^\circ}\le a_L$
and $\normmax{W^\circ-W^\dagger}\le\zeta_m$. Under
Theorem~\ref{thm:tracking}, $\normmax{\widehat W_L-W^\dagger}\le
\Delta_L+a_L+\zeta_m=:\delta$, where $\Delta_L$ bounds
$\lVert\widehat W_L-W_L\rVert_F$. If the true graph is nonempty with
$\beta_{\min}=\min_{W^\dagger_{ij}\ne0}|W^\dagger_{ij}|$ and
$\delta<\beta_{\min}/2$, every threshold $\delta\le\tau<\beta_{\min}-
\delta$ recovers the exact support under the strict rule
$\widehat G=\{(i,j):\lvert\widehat W_{L,ij}\rvert>\tau\}$, and hence the
acyclic graph $G(W^\dagger)$. The rule must be strict at the left
endpoint: a true zero can attain $\lvert\widehat W_{L,ij}\rvert=\delta$,
so an inclusive test would keep it when $\tau=\delta$.
\begin{proof}
By the triangle inequality and
$\normmax{\widehat W_L-W_L}\le\lVert\widehat W_L-W_L\rVert_F\le\Delta_L$,
every true zero is estimated within $\delta\le\tau$ and every true edge
within at least $\beta_{\min}-\delta>\tau$.
\end{proof}

\begin{corollary}[Graph recovery within a verified stage]
\label{cor:certified_support}
Suppose Theorem~\ref{thm:stage_certificate} holds with error
$\varepsilon$, contraction $q<1$ and fixed point $W_\star$. Assume the
true graph is nonempty and there is a reference $W^\circ$ with
$\normmax{W_\star-W^\circ}\le a_\star$ and
$\normmax{W^\circ-W^\dagger}\le\zeta_m$. If
$\varepsilon/(1-q)+a_\star+\zeta_m<\beta_{\min}/2$, choose any
$\varepsilon/(1-q)<\mathrm{tol}<\beta_{\min}/2-a_\star-\zeta_m$ and use
the update bound \eqref{eq:integer_budget}, with the stated $q=0$ and $B_0=0$
conventions. Then thresholding $\widehat W_L$ at any
$\tau\in[\delta,\beta_{\min}-\delta)$, where
$\delta=\mathrm{tol}+a_\star+\zeta_m$, recovers the exact support under
the strict rule $|\widehat W_{L,ij}|>\tau$.
\end{corollary}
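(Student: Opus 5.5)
The plan is to reduce the corollary to Theorem~\ref{thm:stage_certificate} followed by Proposition~\ref{prop:recovery}, with the fixed point $W_\star$ playing the role of the optimization target.

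\textbf{Step 1: reach the target accuracy.} Because $\mathrm{tol}>\varepsilon/(1-q)$, the quantity $\mathrm{tol}-\varepsilon/(1-q)$ is positive, so the logarithm in \eqref{eq:integer_budget} is well defined when $0<q<1$ and $B_0>0$.

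\emph{Case $0<q<1$, $B_0>0$.} Take $L=L_{\mathrm{tol}}$. Then $L\log(1/q)\ge\log\bigl(B_0/(\mathrm{tol}-\varepsilon/(1-q))\bigr)$, which gives $q^LB_0\le\mathrm{tol}-\varepsilon/(1-q)$. Inserting this into \eqref{eq:stage_bound} and using $1-q^L\le1$, we get
\[
\lVert\widehat W_L-W_\star\rVert_F\le q^LB_0+\frac{\varepsilon}{1-q}\le\mathrm{tol}.
\]

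\emph{Case $B_0=0$.} Zero updates suffice, since $\lVert\widehat W_0-W_\star\rVert_F\le B_0=0$.

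\emph{Case $q=0$.} One update gives error at most $\varepsilon<\mathrm{tol}$, unless the initial bound $B_0$ already meets the target.

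In every case $\lVert\widehat W_L-W_\star\rVert_F\le\mathrm{tol}$. Since the max norm is dominated by the Frobenius norm, this also gives $\normmax{\widehat W_L-W_\star}\le\mathrm{tol}$.

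\textbf{Step 2: triangle inequality.} Combining Step 1 with the two reference bounds in the hypotheses,
\[
\normmax{\widehat W_L-W^\dagger}\le\mathrm{tol}+a_\star+\zeta_m=\delta.
\]
The upper constraint on $\mathrm{tol}$ ensures $\delta<\beta_{\min}/2$. Hence the interval $[\delta,\beta_{\min}-\delta)$ is nonempty.

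\textbf{Step 3: threshold argument.} Fix $\tau$ in $[\delta,\beta_{\min}-\delta)$ and argue as in the proof of Proposition~\ref{prop:recovery}.
\begin{itemize}
\item For a true zero, $|\widehat W_{L,ij}|\le\delta\le\tau$, so the strict rule $|\widehat W_{L,ij}|>\tau$ excludes it.
\item For a true edge, $|\widehat W_{L,ij}|\ge\beta_{\min}-\delta>\tau$, so the rule keeps it.
\end{itemize}
The thresholded support therefore equals $G(W^\dagger)$ exactly.

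The only delicate point is Step 1. There one must check that the ceiling in \eqref{eq:integer_budget} really delivers $q^LB_0\le\mathrm{tol}-\varepsilon/(1-q)$, and handle the degenerate conventions $q=0$ and $B_0=0$ separately. Everything after Step 1 is a direct reuse of the triangle-inequality argument behind Proposition~\ref{prop:recovery}.
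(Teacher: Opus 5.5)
Your proposal is correct and follows the paper's proof. Theorem~\ref{thm:stage_certificate} with the budget \eqref{eq:integer_budget} gives $\lVert\widehat W_L-W_\star\rVert_F\le\mathrm{tol}$, and adding the two reference errors reduces the claim to the threshold argument of Proposition~\ref{prop:recovery}; you additionally spell out the ceiling computation and the $q=0$, $B_0=0$ conventions, and rerun the thresholding step directly with $W_\star$ in place of the exact iterate, which the paper leaves implicit.
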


\begin{proof}
The stage bound gives $\lVert\widehat W_L-W_\star\rVert_F\le\mathrm{tol}$.
Add the two reference errors and apply Proposition~\ref{prop:recovery}.
For an empty true graph, any strict threshold $\tau\ge\delta$ removes
all entries. The reference-error bounds remain assumptions.
\end{proof}

\subsection{Derivative Bounds and Convergence Within a Stage}
\label{app:local}
All derivatives in this subsection act on $\mathcal W_M$, with its
Frobenius inner product, and $M$ is binary and zero-diagonal. Let
$\mathcal B_R^M=\{W\in\mathcal W_M:\lVert W\rVert_F\le R\}$.
Assume at least one edge is allowed, hence $p\ge2$ and $N\ge2$;
if none is allowed, the primal state space is $\{0\}$ and its dynamics
are trivial. For $R\ge0$, set $x=R^2/N$ and define
\begin{align}
c_h(R)&=(1+x)^N-1-R^2,\label{eq:sharp_ch}\\
a_N(R)&=(1+x)^{N-1}-1,\qquad
b_N(R)=\frac{N-1}{N}(1+x)^{N-2},\label{eq:sharp_ab}\\
d_N(R)&=
\begin{cases}
0,&N=2,\\
\dfrac{(N-1)(N-2)}{N^2}(1+x)^{N-3},&N\ge3,
\end{cases}\label{eq:sharp_dn}\\
c_H(R)&=2R a_N(R),\qquad
L_H(R)=2a_N(R)+4R^2 b_N(R),\label{eq:sharp_H}\\
T_H(R)&=12R b_N(R)+8R^3d_N(R).\label{eq:sharp_TH}
\end{align}
The powers are polynomial bounds, so at $R=0$ these formulas are
interpreted directly. The quantities $\nabla_M^2h$ and $D_M^2H$
below are the derivatives of $H=\nabla h$ along allowed directions,
with the output projected to $\mathcal W_M$ if needed.

\begin{lemma}[Zero-diagonal derivative bounds]
\label{lem:local_constants}
For $W,V\in\mathcal B_R^M$ and $D,E\in\mathcal W_M$,
\begin{align}
0\le h(W)&\le c_h(R),\qquad
\lVert H(W)\rVert_F\le c_H(R),\label{eq:sharp_value_bounds}\\
\lVert\nabla_M^2h(W)\rVert_{\mathrm{op}}&\le L_H(R),\qquad
\lVert D_M^2H(W)[D,E]\rVert_F
\le T_H(R)\lVert D\rVert_F\lVert E\rVert_F,\label{eq:sharp_derivative_bounds}\\
\lVert H(W)-H(V)\rVert_F&\le L_H(R)\lVert W-V\rVert_F,\label{eq:sharp_grad_lipschitz}\\
\lVert\nabla_M^2h(W)-\nabla_M^2h(V)\rVert_{\mathrm{op}}
&\le T_H(R)\lVert W-V\rVert_F.\label{eq:sharp_hessian_lipschitz}
\end{align}
Consequently, for fixed $\alpha\in\R$ and $\rho\ge0$, the masked
gradient of $\mathcal J_{\rho,\alpha}$ is Lipschitz on
$\mathcal B_R^M$ with
\begin{equation}
L_J(R)=\lVert\Sigma\rVert_2+
(|\alpha|+\rho c_h(R))L_H(R)+\rho c_H(R)^2.
\label{eq:sharp_LJ}
\end{equation}
The constants are no larger than the simpler, $N$-independent bounds
\begin{align*}
c_h(R)&\le e^{R^2}-1-R^2,&
c_H(R)&\le2R(e^{R^2}-1),\\
L_H(R)&\le2(e^{R^2}-1)+4R^2e^{R^2},&
T_H(R)&\le(12R+8R^3)e^{R^2}.
\end{align*}
In particular their orders at the origin are $R^4,R^3,R^2,R$,
respectively, uniformly in $N\ge2$ on bounded ranges of $R$.
\end{lemma}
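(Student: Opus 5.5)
The strategy is to expand everything in powers of $B=W\odot W$ with nonnegative coefficients and to compare with scalar majorant series in $\lVert B\rVert$. First I use $\lVert B\rVert_F\le\lVert W\rVert_F^2\le R^2$; the first inequality holds because $\sum W_{ij}^4\le(\sum W_{ij}^2)^2$. Submultiplicativity then gives $\lVert B^k\rVert_F\le R^{2k}$. Write $Y^N=\sum_k\binom Nk N^{-k}B^k$. The $k=0$ term contributes $p$ and the $k=1$ term contributes $\tr B/N\cdot N=\lVert W\rVert_F^2$, but this trace vanishes since $\diag W=0$ gives $\diag B=0$. Hence $h(W)=\sum_{k\ge2}\binom Nk N^{-k}\tr B^k$. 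Each trace is nonnegative because $B\ge0$ entrywise, and $|\tr B^k|\le\lVert B\rVert_F\lVert B^{k-1}\rVert_F\le R^{2k}$. Summing gives $0\le h\le(1+x)^N-1-R^2=c_h(R)$.

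For the gradient, $H=2W\odot(Y^{N-1})^\top$, and the $k=0$ identity term drops off the diagonal as in Lemma~\ref{lem:vanishing}. So $H=2W\odot(Y^{N-1}-I)^\top$, and $\lVert A\odot C\rVert_F\le\lVert A\rVert_F\lVert C\rVert_F$ gives $\lVert H\rVert_F\le2R\,a_N(R)$.

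For the Hessian and third derivative, I differentiate $H(W)=2W\odot(Y^{N-1}-I)^\top$ along $D$. This gives $2D\odot(Y^{N-1}-I)^\top$ plus $2W\odot\bigl(\sum_{a+b=N-2}Y^aE_DY^b\bigr)^\top$, where $E_D=2(W\odot D)/N$. I bound $\lVert Y^a\rVert\le(1+x)^a$ via the series. I also bound $\lVert W\odot D\rVert_F\le\lVert W\rVert_{\max}\lVert D\rVert_F$, and $\lVert W\rVert_{\max}\le R$; more carefully, the $(1,1)$-type products are controlled by $R\lVert D\rVert_F$. The sum has $N-1$ terms, so the second piece is at most $2R\cdot(N-1)\cdot2R\lVert D\rVert_F(1+x)^{N-2}/N=4R^2b_N\lVert D\rVert_F$. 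This yields $L_H$. Differentiating once more in direction $E$ produces terms of three types. The first is $D\odot(\cdot)$ paired with the $E$-derivative, which appears twice by symmetry. The second is $W\odot(\cdot)$ with a single factor $2(D\odot E)/N$. The third is $W\odot(\cdot)$ with two inner derivatives, whose count is $\binom{N-1}{2}\cdot2$ and which carry $(1+x)^{N-3}$. Counting, the first and second types give $8Rb_N+4Rb_N=12Rb_N$, and the third gives $8R^3d_N$; this gives $T_H$. The Lipschitz statements \eqref{eq:sharp_grad_lipschitz}--\eqref{eq:sharp_hessian_lipschitz} follow from the mean value inequality on the convex ball $\mathcal B_R^M$.

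The main obstacle is the careful use of the Hadamard-product norm inequality in the factors $W\odot D$. It must be arranged so that each $W$ contributes $R$ and each direction contributes its Frobenius norm without extra factors. I would consistently use $\lVert A\odot C\rVert_F\le\lVert A\rVert_F\lVert C\rVert_F$ and $\lVert AC\rVert_F\le\lVert A\rVert_F\lVert C\rVert_F$, and verify the coefficient bookkeeping against the explicit $N=2$ case.

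For $L_J$, the masked gradient is $\Sigma(W-I)+(\alpha+\rho h)H$. Its derivative in direction $D$ is $\Sigma D+(\alpha+\rho h)\nabla^2h[D]+\rho\langle H,D\rangle H$. Bounding the three pieces by $\lVert\Sigma\rVert_2$, $(|\alpha|+\rho c_h)L_H$ and $\rho c_H^2$ gives \eqref{eq:sharp_LJ}, with $h\ge0$ used for the middle term. Finally, the $N$-independent bounds follow from $(1+x)^{N-j}\le e^{R^2}$ and $(N-1)/N\le1$. The orders at the origin are read off from $a_N(R)=O(R^2)$, which gives $R^4,R^3,R^2,R$ uniformly on bounded ranges of $R$.
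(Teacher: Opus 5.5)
Your proposal is correct and follows essentially the same route as the paper's proof: the binomial expansion of $h$ with the vanishing first-order trace and $|\tr B^k|\le R^{2k}$, the identity cancellation $H=2W\odot(Y^{N-1}-I)^\top$ on $\mathcal W_M$, placement counting for the first and second derivatives of the matrix power, and integration along segments of the convex ball. Two small corrections: first, the $k=1$ term is $\tr B=\sum_i W_{ii}^2=0$, not $\lVert W\rVert_F^2$; second, the factors $Y^a$ must be bounded in operator norm, via $\lVert Y^aFY^b\rVert_F\le\lVert Y\rVert_2^{a+b}\lVert F\rVert_F$ with $\lVert Y\rVert_2\le 1+x$, rather than by the Frobenius submultiplicativity you say you would use throughout, because $\lVert Y^a\rVert_F\ge\sqrt p$.
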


\begin{proof}
Put $B=W\odot W$ and $A(B)=(I+B/N)^{N-1}-I$. Since
$\lVert B\rVert_2\le\lVert B\rVert_F\le R^2$ and $\tr B=0$,
\[
h(W)=\sum_{k=2}^N\binom NkN^{-k}\tr(B^k).
\]
All traces are nonnegative because $B$ is entrywise nonnegative.
For $k\ge2$, the Frobenius trace inequality and submultiplicativity give
$|\tr(B^k)|\le\lVert B\rVert_F\lVert B^{k-1}\rVert_F
\le\lVert B\rVert_F^2\lVert B\rVert_2^{k-2}\le R^{2k}$.
Summing gives \eqref{eq:sharp_ch}. The linear term disappears because the diagonal is zero; on
unrestricted matrices a bound of this order can fail.

Write $n=N-1$. Expansion and the induced matrix norm give
$\lVert A(B)\rVert_2\le a_N(R)$. Because $W\odot I=0$ on
$\mathcal W_M$, the exact gradient is
$H(W)=2W\odot A(B)^\top$, and hence
$\lVert H(W)\rVert_F\le2R a_N(R)$, using
$\lVert C\odot F\rVert_F\le\lVert C\rVert_F\lVert F\rVert_2$.
The identity cancellation is valid along every allowed direction.

For arbitrary matrix directions $F,G$, differentiate the finite power
without assuming commutativity. There are $n$ placements of its first
derivative and $n(n-1)$ ordered placements of its second derivative.
With $Y=I+B/N$ and $\lVert Y\rVert_2\le1+x$, this gives
\begin{align*}
\lVert DA(B)[F]\rVert_F&\le b_N(R)\lVert F\rVert_F,\\
\lVert D^2A(B)[F,G]\rVert_F&\le
d_N(R)\lVert F\rVert_F\lVert G\rVert_F.
\end{align*}
For $N=2$ the second derivative is zero. The directions of $B(W)$ are
$DB(W)[D]=2W\odot D$ and $D^2B(W)[D,E]=2D\odot E$, so their norms
are bounded by $2R\lVert D\rVert_F$ and
$2\lVert D\rVert_F\lVert E\rVert_F$, respectively. Thus
\[
DH(W)[D]=2D\odot A^\top
+2W\odot DA(B)[DB(W)[D]]^\top
\]
has norm at most $L_H(R)\lVert D\rVert_F$. Differentiating again yields
\begin{align*}
D^2H(W)[D,E]={}&2D\odot DA(B)[DB(W)[E]]^\top\\
&+2E\odot DA(B)[DB(W)[D]]^\top\\
&+2W\odot\big(D^2A(B)[DB(W)[D],DB(W)[E]]
+DA(B)[D^2B(W)[D,E]]\big)^\top.
\end{align*}
The first two terms contribute $8R b_N(R)$ to the bilinear operator
bound, and the last contributes $8R^3d_N(R)+4R b_N(R)$.
This proves the bound $T_H(R)$. Projection to allowed coordinates is
nonexpansive. Integration along the line segment between $W$ and $V$
in the convex ball proves the two Lipschitz inequalities.

For completeness, the full smooth Hessian, restricted after taking
directions in $\mathcal W_M$, is
\begin{equation}
\mathsf K_M(W)=\mathsf S_M+
(\alpha+\rho h(W))\nabla_M^2h(W)+\rho H(W)\otimes H(W),
\label{eq:full_masked_hessian}
\end{equation}
where $\mathsf S_M D=\Pi_M(\Sigma D)$ and
$(H\otimes H)D=\langle H,D\rangle H$. Its norm is bounded by
\eqref{eq:sharp_LJ}, proving the masked-gradient Lipschitz claim.
Finally, the binomial bounds $(1+R^2/N)^N\le e^{R^2}$ and
$\binom NkN^{-k}\le1/k!$ give the exponential bounds, and their
Taylor expansions give the stated orders. No finite-precision
assumption is used in these derivative bounds.
\end{proof}

\begin{corollary}[Stage contraction and global sufficient curvature]
\label{cor:stage_contraction}
Fix $\alpha\in\R$, $\rho\ge0$, $\lambda\ge0$ and $b=0$. If a convex
$\mathcal C\subseteq\mathcal W_M$ has masked strong-monotonicity
constant $\mu>0$ and masked-gradient Lipschitz constant $L_J$, then
$0<\gamma<2\mu/L_J^2$ gives contraction with
$q=(1-2\gamma\mu+\gamma^2L_J^2)^{1/2}<1$.
If, more specifically,
$\mu I\preceq\mathsf K_M(W)\preceq L_J I$ on $\mathcal C$, then
every $0<\gamma<2/L_J$ gives the sharper spectral factor
$q=\max\{|1-\gamma\mu|,|1-\gamma L_J|\}<1$.
For $\mathcal C\subseteq\mathcal B_R^M$, sufficient Hessian bounds are
$L_J=L_J(R)$ and
\begin{equation}
\mu_{\mathrm g}=\lambda_M(\Sigma)
-(|\alpha|+\rho c_h(R))L_H(R),\qquad
\lambda_M(\Sigma)=\min_{j:\pi_j\ne\varnothing}
\lambda_{\min}(\Sigma_{\pi_j\pi_j}),
\quad\pi_j=\{i:M_{ij}=1\},
\label{eq:sharp_mu}
\end{equation}
whenever $\mu_{\mathrm g}>0$. Empty allowed columns are omitted.
\end{corollary}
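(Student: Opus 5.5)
The corollary has two parts. The first is a general contraction statement for the proximal-gradient map under strong monotonicity and Lipschitz gradients. The second is a sufficient global curvature bound, built from Lemma~\ref{lem:local_constants}.

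\textbf{Step 1 (nonexpansive prox).} By Proposition~\ref{prop:gradient}, the map $V\mapsto M\odot\soft_{\gamma\lambda}(V)$ is the proximal map of a proper closed convex function on $\mathcal W_M$. Hence it is firmly nonexpansive in the Frobenius norm. So
\[
\lVert\mathcal P(W)-\mathcal P(V)\rVert_F\le\lVert T(W)-T(V)\rVert_F,
\qquad T(W)=W-\gamma\,\Pi_M\nabla\mathcal J_{\rho,\alpha}(W).
\]
Here I use that for $W\in\mathcal W_M$ only the masked gradient matters, because the mask zeroes the forbidden entries.

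\textbf{Step 2 (contraction of $T$).} Write $g=\Pi_M\nabla\mathcal J$ and $D=W-V$. Expanding the square gives
\[
\lVert T(W)-T(V)\rVert_F^2=\lVert D\rVert_F^2-2\gamma\langle g(W)-g(V),D\rangle+\gamma^2\lVert g(W)-g(V)\rVert_F^2.
\]
Strong monotonicity and Lipschitz continuity bound this by $(1-2\gamma\mu+\gamma^2L_J^2)\lVert D\rVert_F^2$. The factor is less than one exactly when $\gamma<2\mu/L_J^2$.

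For the Hessian version, convexity of $\mathcal C$ lets me write
\[
g(W)-g(V)=\Big(\int_0^1\mathsf K_M(V+tD)\,dt\Big)D.
\]
The averaged operator $\bar K$ is self-adjoint on $\mathcal W_M$ with spectrum in $[\mu,L_J]$. Then $T(W)-T(V)=(I-\gamma\bar K)D$, and the operator norm of $I-\gamma\bar K$ is at most $\max\{|1-\gamma\mu|,|1-\gamma L_J|\}$. This is less than one for $0<\gamma<2/L_J$.

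\textbf{Step 3 (sufficient bounds).} The upper bound $L_J(R)$ is exactly \eqref{eq:sharp_LJ}, taken from Lemma~\ref{lem:local_constants}. For the lower bound I use the decomposition \eqref{eq:full_masked_hessian}. The rank-one term $\rho H\otimes H$ is positive semidefinite, so I drop it. The middle term satisfies
\[
|c|\,\lVert\nabla_M^2h\rVert_{\mathrm{op}}\le(|\alpha|+\rho c_h(R))L_H(R),
\]
using $0\le h\le c_h$ and \eqref{eq:sharp_derivative_bounds}.

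For the loss term, $\langle D,\Pi_M(\Sigma D)\rangle=\tr(D^\top\Sigma D)=\sum_j D_{:,j}^\top\Sigma D_{:,j}$. Column $j$ of $D$ is supported on $\pi_j$, so this sum is at least $\sum_j\lambda_{\min}(\Sigma_{\pi_j\pi_j})\lVert D_{:,j}\rVert^2\ge\lambda_M(\Sigma)\lVert D\rVert_F^2$. Empty columns contribute nothing and are omitted. Combining the three terms gives $\mathsf K_M\succeq\mu_{\mathrm g}I$.

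\textbf{Main obstacle.} The delicate step is Step 2's Hessian route. It needs the restricted Hessian to be self-adjoint on $\mathcal W_M$; this holds because $\mathcal J$ is $C^2$ and the Hessian is restricted to a subspace. It also needs the integral averaging to preserve the spectral bounds, which follows from convexity of $\mathcal C$ and the uniform bounds along the segment. The rest is routine.
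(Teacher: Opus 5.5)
Your proposal is correct and follows the paper's proof essentially step for step. The shared steps are the nonexpansiveness of the masked soft-threshold, the expanded-square bound under strong monotonicity and Lipschitzness, and the averaged Hessian $\int_0^1\mathsf K_M(V+tD)\,dt$ for the spectral factor, together with the decomposition \eqref{eq:full_masked_hessian}, where the rank-one term is dropped from the lower bound and the middle term is bounded by $(|\alpha|+\rho c_h(R))L_H(R)$; your column-by-column argument for $\tr(D^\top\Sigma D)\ge\lambda_M(\Sigma)\lVert D\rVert_F^2$ also spells out a step the paper only asserts.
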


\begin{proof}
The allowed-edge proximal map is nonexpansive in the Frobenius norm.
For $D=W-V$ and $\Delta G=\Pi_M(\nabla\mathcal J(W)-
\nabla\mathcal J(V))$, strong monotonicity and Lipschitzness imply
\[
\lVert\mathcal P(W)-\mathcal P(V)\rVert_F^2
\le\lVert D-\gamma\Delta G\rVert_F^2
\le(1-2\gamma\mu+\gamma^2L_J^2)\lVert D\rVert_F^2.
\]
Under uniform Hessian bounds, instead write
$\Delta G=\overline{\mathsf K}D$, with
$\overline{\mathsf K}=\int_0^1\mathsf K_M(V+tD)\,dt$.
This symmetric operator has spectrum in $[\mu,L_J]$, so
$\lVert I-\gamma\overline{\mathsf K}\rVert_{\mathrm{op}}
\le\max\{|1-\gamma\mu|,|1-\gamma L_J|\}$.
The argument holds on every branch of the soft-threshold map.

For \eqref{eq:sharp_mu}, the loss quadratic form obeys
$\tr(D^\top\Sigma D)\ge\lambda_M(\Sigma)\lVert D\rVert_F^2$.
In \eqref{eq:full_masked_hessian},
$\rho\langle H,D\rangle^2\ge0$ is explicitly dropped only from
the lower bound, while
$|(\alpha+\rho h)\langle D,\nabla_M^2h[D]\rangle|
\le(|\alpha|+\rho c_h)L_H\lVert D\rVert_F^2$.
The upper bound retains the rank-one term through $\rho c_H^2$.
\end{proof}

For this \emph{global sufficient test}, positivity requires positive
definiteness of each nonempty $\Sigma_{\pi_j\pi_j}$, and therefore
$m\ge\max_j|\pi_j|$. The local stage Hessian also carries penalty curvature, so this
requirement belongs to the global test alone.
For fixed finite $\alpha,\rho$, $\mu_{\mathrm g}$ tends to
$\lambda_M(\Sigma)$ as $R\downarrow0$. The unsharpened bound $2e^{R^2}(1+2R^2)$ imposes the radius-independent
restriction $|\alpha|<\lambda_M(\Sigma)/2$; the sharpened $L_H(R)$
removes it.
A suitable radius around a given nonzero stage centre still has to be
checked.

\begin{theorem}[Uniform local Hessian certificate]
\label{thm:local_hessian_certificate}
Let $U\in\mathcal W_M$, $r>0$,
$\mathcal C=\{W\in\mathcal W_M:\lVert W-U\rVert_F\le r\}$ and
$R=\lVert U\rVert_F+r$. Let $m_U$ and $L_U$ be the least and greatest
eigenvalues of the \emph{actual} masked Hessian $\mathsf K_M(U)$,
including its rank-one penalty term. Put $c_U=\alpha+\rho h(U)$ and
\begin{equation}
\delta_K=r\big(|c_U|T_H(R)+3\rho c_H(R)L_H(R)\big).
\label{eq:hessian_variation_radius}
\end{equation}
Then for every $W\in\mathcal C$,
\begin{equation}
\lVert\mathsf K_M(W)-\mathsf K_M(U)\rVert_{\mathrm{op}}\le\delta_K,
\qquad
(m_U-\delta_K)I\preceq\mathsf K_M(W)
\preceq(L_U+\delta_K)I.
\label{eq:uniform_hessian_bounds}
\end{equation}
If $\mu_{\mathrm{loc}}=m_U-\delta_K>0$, set
$L_{\mathrm{loc}}=L_U+\delta_K$ and choose
$0<\gamma<2/L_{\mathrm{loc}}$. The primal map contracts on
$\mathcal C$ with the spectral factor from
Corollary~\ref{cor:stage_contraction}. If in addition a decoded block
has uniform error at most $\varepsilon$ on $\mathcal C$ and
$\lVert\mathcal P(U)-U\rVert_F+\varepsilon\le(1-q)r$, then all
conclusions of Theorem~\ref{thm:stage_certificate} apply, with total depth $(2\lceil\log_2p_{\max}\rceil+8)L_{\mathrm{tol}}$.
\end{theorem}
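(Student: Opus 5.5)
The argument splits the Hessian variation into three pieces using the explicit form \eqref{eq:full_masked_hessian}, bounds each with Lemma~\ref{lem:local_constants} on $\mathcal B_R^M$ (valid since $\mathcal C\subseteq\mathcal B_R^M$ by the triangle inequality, $\lVert W\rVert_F\le\lVert U\rVert_F+r=R$), and then feeds the resulting uniform spectral bounds into Corollary~\ref{cor:stage_contraction} and Theorem~\ref{thm:stage_certificate}. Write $D=W-U$ with $\lVert D\rVert_F\le r$, $c_W=\alpha+\rho h(W)$, $H_W=H(W)$. Since $\mathsf S_M$ is constant, it cancels:
\begin{align*}
\mathsf K_M(W)-\mathsf K_M(U)={}&c_U\big(\nabla_M^2h(W)-\nabla_M^2h(U)\big)
+(c_W-c_U)\nabla_M^2h(W)\\
&+\rho\big(H_W\otimes H_W-H_U\otimes H_U\big).
\end{align*}

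The three pieces are bounded as follows.
\begin{itemize}
\item[(i)] The first term has operator norm at most $|c_U|T_H(R)r$, by \eqref{eq:sharp_hessian_lipschitz}.
\item[(ii)] For the second, the mean value theorem along the segment (convex ball, $\nabla h=H$) gives $|h(W)-h(U)|\le c_H(R)r$. Hence $|c_W-c_U|=\rho|h(W)-h(U)|\le\rho c_H(R)r$, and with $\lVert\nabla_M^2h(W)\rVert_{\mathrm{op}}\le L_H(R)$ this term is at most $\rho c_H L_H r$.
\item[(iii)] For the rank-one difference, write $H_W\otimes H_W-H_U\otimes H_U=(H_W-H_U)\otimes H_W+H_U\otimes(H_W-H_U)$, where $(a\otimes b)D=\langle a,D\rangle b$ has operator norm $\lVert a\rVert_F\lVert b\rVert_F$. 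Using \eqref{eq:sharp_grad_lipschitz} and $\lVert H\rVert_F\le c_H(R)$, this is at most $2\rho c_H L_H r$.
\end{itemize}
Summing gives $r(|c_U|T_H+3\rho c_HL_H)=\delta_K$, which is the first claim of \eqref{eq:uniform_hessian_bounds}.

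The sandwich bounds follow from Weyl's inequality for symmetric operators on $\mathcal W_M$. Each Hessian is self-adjoint with respect to the Frobenius inner product: it is a Hessian of a $C^2$ function restricted to a subspace. Therefore its eigenvalues lie within $\delta_K$ of those of $\mathsf K_M(U)$, giving $(m_U-\delta_K)I\preceq\mathsf K_M(W)\preceq(L_U+\delta_K)I$.

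When $\mu_{\mathrm{loc}}>0$, the second part of Corollary~\ref{cor:stage_contraction} applies with $\mu=\mu_{\mathrm{loc}}$ and $L_J=L_{\mathrm{loc}}$. Its proof uses only the averaged Hessian along segments in the convex set $\mathcal C$ and nonexpansiveness of the masked prox, so for $0<\gamma<2/L_{\mathrm{loc}}$ we get the spectral factor $q<1$. With this $q$, together with the uniform $\varepsilon$ and the hypothesis $\lVert\mathcal P(U)-U\rVert_F+\varepsilon\le(1-q)r$, the hypotheses of Theorem~\ref{thm:stage_certificate} hold verbatim. Its invariance, fixed-point and error conclusions follow, and the update count \eqref{eq:integer_budget} times the per-block depth $2\lceil\log_2p_{\max}\rceil+8$ from Theorem~\ref{thm:realization} gives the stated total depth.

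The main point needing care is (iii) together with checking that \eqref{eq:sharp_hessian_lipschitz} is stated for the projected Hessian on $\mathcal W_M$, the same operator appearing in $\mathsf K_M$. I would verify that the projection $\Pi_M$ commutes with the differences used, which holds because $H(W)$ is already supported on $M$ for $W\in\mathcal W_M$. The constant $3$ then comes out as one copy from (ii) plus two from (iii); the only thing left to confirm is that no sharper or looser accounting is implied by the statement.
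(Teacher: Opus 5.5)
Your proposal is correct and follows the paper's proof essentially step for step. You use the same three-term split of $\mathsf K_M(W)-\mathsf K_M(U)$ with $c_U$ frozen and the same bounds $|c_U|T_H r$, $\rho c_H L_H r$ and $2\rho c_H L_H r$, which give the constant $3$; you then get the spectral sandwich (via Weyl's inequality, where the paper evaluates the quadratic form on unit allowed directions, an immaterial difference) and finish with Corollary~\ref{cor:stage_contraction} and Theorem~\ref{thm:stage_certificate}.
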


\begin{proof}
The loss Hessian is constant. Write $K_h=\nabla_M^2h$ and
$c(W)=\alpha+\rho h(W)$. The Hessian difference in
\eqref{eq:full_masked_hessian} is
\[
c_U\big(K_h(W)-K_h(U)\big)
+\rho\big(h(W)-h(U)\big)K_h(W)
+\rho\big(H(W)\otimes H(W)-H(U)\otimes H(U)\big).
\]
Every segment from $U$ to $W\in\mathcal C$ is inside
$\mathcal B_R^M$. The first term is bounded by
$|c_U|T_H(R)\lVert W-U\rVert_F$. For the second, the mean value
bound $|h(W)-h(U)|\le c_H(R)\lVert W-U\rVert_F$ gives
$\rho c_H(R)L_H(R)\lVert W-U\rVert_F$. The rank-one difference
has norm at most
$(\lVert H(W)\rVert_F+\lVert H(U)\rVert_F)
\lVert H(W)-H(U)\rVert_F$, hence at most
$2c_H(R)L_H(R)\lVert W-U\rVert_F$ before multiplication by $\rho$.
Adding proves the operator norm bound. Applying it to unit allowed
directions proves both spectral bounds. The contraction corollary and
the separate invariance hypothesis then give the remaining conclusions.
\end{proof}

\begin{remark}[Point curvature and numerically computed Hessians]
The local test uses the centre eigenvalues \emph{and} the variation
radius $\delta_K$. Invariance of the chosen ball is a separate condition from a positive
eigenvalue at $U$. If the computed symmetric
masked Hessian $\widehat{\mathsf K}_M(U)$ has certified operator error
at most $\eta_U$, one may use
$m_U\ge\lambda_{\min}(\widehat{\mathsf K}_M(U))-\eta_U$ and
$L_U\le\lambda_{\max}(\widehat{\mathsf K}_M(U))+\eta_U$.
Certified upper bounds for the other constants and for
$\lVert\mathcal P(U)-U\rVert_F$ are also needed for a numerical
certificate. Merely evaluating these quantities in floating point
without error enclosures is an empirical diagnostic.
\end{remark}

\begin{proposition}[Verified interval Hessian certificate]
\label{prop:interval_stage_certificate}
Fix finite stage controls $\alpha\in\R$, $\rho,\lambda\ge0$, $\gamma>0$
and $b=0$. Let $U\in\mathcal W_M$, $r>0$, and
$\mathcal C=\{W\in\mathcal W_M:\lVert W-U\rVert_F\le r\}$.
Assume the number $d_M$ of allowed entries is positive and use one
coordinate per allowed entry. Let $[K^-,K^+]$ be a finite entrywise
enclosure of the actual masked Hessian $\mathsf K_M(W)$ for every
$W$ in the coordinate box $|W_{ij}-U_{ij}|\le r$ on allowed entries,
with forbidden entries fixed at zero. Let $Q\in\R^{d_M\times d_M}$
be any recorded matrix with finite entries for which a verified bound
gives $\lVert Q^\top Q-I\rVert_2\le e$, $0\le e<1$.
Enclose every $Q^\top KQ$, $K\in[K^-,K^+]$, entrywise by a finite
interval matrix $[B^-,B^+]$, and set
\[
\begin{split}
\underline b&=\min_i\left(B^-_{ii}-\sum_{j\ne i}
\max\{|B^-_{ij}|,|B^+_{ij}|\}\right),\\
\overline b&=\max_i\left(B^+_{ii}+\sum_{j\ne i}
\max\{|B^-_{ij}|,|B^+_{ij}|\}\right).
\end{split}
\]
If $\underline b>0$, the bounds
\[
\mu=\frac{\underline b}{1+e},\qquad
L_J=\frac{\overline b}{1-e}
\]
give $\mu I\preceq\mathsf K_M(W)\preceq L_JI$ uniformly on
$\mathcal C$. Thus the contraction and update bounds of
Theorem~\ref{thm:stage_certificate} apply whenever the recorded step
size gives $q=\max\{|1-\gamma\mu|,|1-\gamma L_J|\}<1$, a verified
centre displacement $\overline d\ge\lVert\mathcal P(U)-U\rVert_F$
and a uniform primal block error $\varepsilon_W$ satisfy
$\overline d+\varepsilon_W\le(1-q)r$, and the implementation
preserves the mask and valid persistent encoding. For the literal
block, the latter error and encoding hypotheses are supplied under the
kernel and range assumptions of Theorem~\ref{thm:block_rounding} and
Corollary~\ref{cor:precision_stage}. An approximate eigenvector matrix
may be used as $Q$; no unverified eigenvalue is used in this conclusion.
\end{proposition}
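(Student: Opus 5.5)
The argument has three stages: transfer the entrywise enclosure to the rotated matrices $Q^\top KQ$, bound their spectra by Gershgorin, and then undo the near-orthogonal change of basis using $\lVert Q^\top Q-I\rVert_2\le e$. The box $|W_{ij}-U_{ij}|\le r$ on allowed entries contains the Frobenius ball $\mathcal C$, because $\lVert W-U\rVert_F\le r$ forces every entry difference to be at most $r$. Hence, for every $W\in\mathcal C$, the actual masked Hessian $K=\mathsf K_M(W)$ from \eqref{eq:full_masked_hessian}, written in the $d_M$ allowed coordinates, lies in $[K^-,K^+]$. By hypothesis $Q^\top KQ$ then lies in $[B^-,B^+]$ entrywise.

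Next, $K$ is symmetric, so $B:=Q^\top KQ$ is symmetric with real eigenvalues. Each off-diagonal entry satisfies $|B_{ij}|\le\max\{|B^-_{ij}|,|B^+_{ij}|\}$, and each diagonal entry satisfies $B^-_{ii}\le B_{ii}\le B^+_{ii}$. Gershgorin's theorem therefore places every eigenvalue of $B$ in $[\underline b,\overline b]$, giving $\underline b\,I\preceq Q^\top KQ\preceq\overline b\,I$.

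The third stage is the main step. Since $\lVert Q^\top Q-I\rVert_2\le e<1$, the singular values of $Q$ satisfy $\sigma^2\in[1-e,1+e]$, so $Q$ is square and invertible. For any $x$, set $y=Q^{-1}x$. Then
\[
x^\top Kx=y^\top(Q^\top KQ)y\ge\underline b\,\lVert y\rVert^2 .
\]
Since $\lVert x\rVert^2=y^\top Q^\top Qy\le(1+e)\lVert y\rVert^2$, we have $\lVert y\rVert^2\ge\lVert x\rVert^2/(1+e)$, and with $\underline b>0$ this yields $x^\top Kx\ge\mu\lVert x\rVert^2$. For the upper bound, $x^\top Kx\le\overline b\lVert y\rVert^2$ and $\lVert y\rVert^2\le\lVert x\rVert^2/(1-e)$. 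Using this when $\overline b\ge0$ gives $L_J$. The case $\overline b<0$ cannot occur, because $\overline b\ge\underline b>0$. Care is needed to divide by the correct factor in each direction. The bounds hold uniformly over $W\in\mathcal C$, and at no point is an unverified eigenvalue of the approximate eigenvector matrix $Q$ used.

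With $\mu I\preceq\mathsf K_M(W)\preceq L_JI$ on the convex set $\mathcal C$, Corollary~\ref{cor:stage_contraction} (spectral version) gives contraction with the stated $q$ whenever $q<1$. Here $\mathcal P$ is $q$-Lipschitz on $\mathcal C$, because the averaged Hessian along segments stays inside $\mathcal C$ and the masked proximal map is nonexpansive. The condition $\overline d+\varepsilon_W\le(1-q)r$ implies $d+\varepsilon_W\le(1-q)r$. Together with mask preservation, this supplies every hypothesis of Theorem~\ref{thm:stage_certificate}, whose conclusions then follow. For the literal block, the error and encoding hypotheses are quoted from Theorem~\ref{thm:block_rounding} and Corollary~\ref{cor:precision_stage}.
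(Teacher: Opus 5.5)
Your proposal is correct and follows essentially the same route as the paper: the Frobenius ball sits inside the coordinate box, Gershgorin applies to the symmetric matrix $Q^\top\mathsf K_M(W)Q$, and the Gram bound $(1-e)I\preceq Q^\top Q\preceq(1+e)I$ converts those spectral bounds into $\mu I\preceq\mathsf K_M(W)\preceq L_JI$. After that, Corollary~\ref{cor:stage_contraction} and Theorem~\ref{thm:stage_certificate} finish the argument; the paper substitutes $v=Qx$ where you write $y=Q^{-1}x$, which is the same comparison.
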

\begin{proof}
The Frobenius ball is contained in the stated coordinate box. Each actual
masked Hessian is symmetric, so $B(W)=Q^\top\mathsf K_M(W)Q$ is
symmetric. Its entries lie in $[B^-,B^+]$, and Gershgorin's theorem
therefore gives $\underline b I\preceq B(W)\preceq\overline b I$.
The Gram bound implies
$(1-e)I\preceq Q^\top Q\preceq(1+e)I$, hence $Q$ is invertible.
For $v=Qx$, compare $v^\top\mathsf K_M(W)v=x^\top B(W)x$
with $\lVert v\rVert^2=x^\top Q^\top Qx$. Positivity of
$\underline b$ gives the lower ratio $\underline b/(1+e)$ and the
upper ratio $\overline b/(1-e)$, proving the claimed Hessian bounds.
Corollary~\ref{cor:stage_contraction} and the separate displacement,
error and invariance tests give the remaining conclusions.
\end{proof}

\paragraph{Obtaining the interval matrix.}
For the finite polynomial constraint, put $Y=I+(W\odot W)/N$.
For allowed entries $(i,j)$ and $(a,b)$, the exact derivative formulas are
\[
\begin{split}
H_{ij}&=2W_{ij}(Y^{N-1})_{ji},\\
(K_h)_{ij,ab}&=2\mathbf 1_{(i,j)=(a,b)}(Y^{N-1})_{ji}\\
&\quad+\frac{4W_{ij}W_{ab}}N
\sum_{\ell=0}^{N-2}(Y^\ell)_{ja}(Y^{N-2-\ell})_{bi}.
\end{split}
\]
The trace derivative gives the first identity. Differentiating it with
$\partial Y/\partial W_{ab}=(2W_{ab}/N)E_{ab}$ and the product rule
for $Y^{N-1}$ gives the second. The constant loss Hessian has entries
$(\mathsf S_M)_{ij,ab}=\mathbf 1_{j=b}\Sigma_{ia}$.
Evaluating these finite sums, products and powers with outward interval
arithmetic, then using
\[
\mathsf K_M=\mathsf S_M+(\alpha+\rho h)K_h+\rho H\otimes H,
\]
gives the required entrywise enclosure. The value $h$ can be enclosed
without subtracting nearly equal traces by starting
$S_0=(W\odot W)/N$, repeatedly setting $S_{k+1}=2S_k+S_k^2$,
and taking $h=\tr S_s$ for $N=2^s$.
The verification must record operation order, outward arithmetic and
finite endpoints; evaluating these formulas in ordinary floating point
alone is not an interval certificate.

For the spectral step, an outward maximum absolute row-sum bound on the
symmetric matrix $Q^\top Q-I$ supplies $e$. The formulas for
$\underline b,\overline b,\mu,L_J$ and $q$ must also be enclosed
outward in a numerical implementation. For the centre displacement,
exact rational evaluation of the nominal binary inputs and the
soft-threshold branch, followed by an outward square-root bound,
supplies $\overline d$. These computations concern the recorded
finite-sample covariance and fixed controls, not the population graph
or the changing controller. If there are no allowed entries, the primal
space is a singleton and the spectral test is unnecessary.

For example, with $p=N=2$, $\lambda=0$, $\alpha=10$, $\rho=1$,
$U_{12}=U_{21}=1/5$ and $\Sigma_{12}=\Sigma_{21}=0.2800064$,
$\Sigma_{11}=\Sigma_{22}=1$, the centre is a fixed point. Its masked
Hessian eigenvalues are $0.599968$ and $2.200224$. At radius $0.01$,
$\delta_K<0.177$, so $\mu=0.42$, $L_J=2.38$ and $\gamma=5/7$
give $q=0.7$. The centre displacement is zero and the error allowance
is $0.003$. The global test is negative even with the sharper constants.
This illustrates a certified neighborhood of a cyclic fixed point;
it supplies no graph-recovery guarantee.

\section{Transformer Construction and Computational Cost}
\label{app:construction}
This appendix gives the weights and state layout used in
Figure~\ref{fig:polyprox-transformer} and Theorem~\ref{thm:realization},
including how temporary values are cleared before the next update.

\subsection{Where the State Is Stored}
\label{app:coordinates}
Put $P=p_{\max}$ and $d=13P+18$. Use zero-based coordinate indices.
For $r=0,\ldots,12$, the wide register $A_r$ occupies
$rP,\ldots,(r+1)P-1$, in the order
\begin{equation}
(A_0,\ldots,A_{12})=
(\mathsf{id},\Sigma,M,W,B,Q,S,\mathsf{SW},\mathsf{ST},
\mathsf{HS},X,V,W^{\mathrm n}).
\label{eq:audit_wide_indices}
\end{equation}
Scalar coordinate $13P+j$, $j=0,\ldots,17$, has the following meaning:
\begin{center}
\begin{tabular}{@{}clcl@{}}
\toprule
$j$ & Register & $j$ & Register\\
\midrule
0 & $\iota_{\rm var}$ & 9 & $r_\gamma=\gamma\rho$\\
1 & $\iota_{\rm ctl}$ & 10 & $r_b=b\rho$\\
2 & $\lambda$ & 11 & $d_i$\\
3 & $\alpha$ & 12 & $h$\\
4 & $\rho$ & 13 & $\varsigma$\\
5 & $\gamma$ & 14 & $h^+$\\
6 & $b$ & 15 & $\bar\gamma$\\
7 & $\tau=\gamma\lambda$ & 16 & $\bar\tau$\\
8 & $a_\gamma=\gamma\alpha$ & 17 & $\bar\varsigma$\\
\bottomrule
\end{tabular}
\end{center}
The flags and five input controls occupy the first seven scalar
coordinates. All eleven remaining scalar coordinates and all nine
wide scratch registers start at zero. Variable row $i$ has
$\iota_{\rm var}=1$, $\iota_{\rm ctl}=0$, identity $e_i$, and padded
rows of $\Sigma,M,W$. The control row has the reverse flags, the five
controls, and zero wide registers. The barred scalars are broadcasts
on variable rows, separate from the corresponding control-row
quantities; this prevents a reset from deleting an input control.

\subsection{Attention and Feedforward Weights}
\label{app:selectors}
For a wide register $A$, let $P_A\in\mathbb R^{d\times P}$ select its
columns, and $E_A=P_A^\top$. For a scalar register $r$, let $p_r$ be
its $d$-dimensional selector column, and $e_r=p_r^\top$.
Use the following fixed heads, all read from the same pre-attention
residual stream within a layer:
\begin{align}
\mathcal H(A,B;C\to T):\quad
\Theta_Q&=P_A,\quad\Theta_K=P_B,\quad
\Theta_V=P_CE_T,\label{eq:audit_head}\\
\mathcal S(r\to t):\quad
\Theta_Q&=p_{\rm ctl},\quad\Theta_K=p_{\rm var},\quad
\Theta_V=p_re_t,\label{eq:audit_sum}\\
\mathcal B(r_1,\ldots\to t_1,\ldots):\quad
\Theta_Q&=p_{\rm var},\quad\Theta_K=p_{\rm ctl},\quad
\Theta_V=\sum_kp_{r_k}e_{t_k}.\label{eq:audit_broadcast}
\end{align}
A value map may route to several target registers; for example
$P_S(E_S+E_Q)$ copies $S$ into both $S$ and $Q$. The first pattern
adds $AB^\top C$. Thus $B=\mathsf{id}$ gives $AC$, and
$A=C=\mathsf{id}$ gives $B^\top$. Control-row identity is zero,
and $\mathsf{id}_{\rm var}$ is the padded identity. The other two
patterns sum a per-variable scalar into the control row and broadcast
control scalars onto variable rows, respectively.

To add a fixed linear residual increment $ZL$, use the ReLU branch
\begin{equation}
U_1=[L,-L],\qquad U_2=[I_d;-I_d],
\qquad \relu(ZU_1)U_2=ZL.
\label{eq:audit_linear_ff}
\end{equation}
Zero target columns can be omitted, so this needs at most $2d$ ReLU
units. For a product of source coordinates $u,v$, use one bilinear
unit with $G_1=p_u$, $G_2=p_v$. Its row in $G_3$ is the desired fixed
output routing vector. In particular one square unit can write both
$B_k=W_k^2$ and $S_k=W_k^2/N$. Several products concatenate their
selector columns. A residual overwrite $T\leftarrow f$ includes the
linear increment $-T$ and the routed expression $f$, evaluated from
the same pre-FF stream. Soft-thresholding uses the two preactivations
$V_k-\bar\tau$ and $-V_k-\bar\tau$, routed to $W^{\mathrm n}_k$ with
coefficients $1,-1$. Here $W^{\mathrm n}$ is still zero on entry to
the threshold layer.

\subsection{What Each Layer Computes}
\label{app:schedule}
The following list specifies every head and every product. Linear
resets and assignments use \eqref{eq:audit_linear_ff}; unlisted
registers are preserved. Attention precedes FF in each complete layer.
\begin{enumerate}
\item Initialize $B,S,Q$ and the four scalar products. There is no
attention. The FF products are $W_kW_k$ ($P$ units), routed to
$B_k,S_k$ with coefficients $1,1/N$, and
$\gamma\lambda,\gamma\alpha,\gamma\rho,b\rho$ (four units), routed
to $\tau,a_\gamma,r_\gamma,r_b$. Reset $Q=0$.
\item Repeat $s$ times, with no FF: heads
$\mathcal H(S,\mathsf{id};S\to S)$,
$\mathcal H(Q,\mathsf{id};S\to Q)$, and
$\mathcal H(\mathsf{id},\mathsf{id};S\to S,Q)$ add
$S^2,QS,S$ to their specified targets. With the existing residuals
this is exactly $S\leftarrow2S+S^2$ and
$Q\leftarrow Q+S+QS$. All three heads read the old $S,Q$.
\item With no attention, overwrite $d_i$ by
$\sum_{k=1}^P\mathsf{id}_{ik}S_{ik}$, using $P$ bilinear units.
The control-row result is zero.
\item Use four heads:
$\mathcal S(d\to h)$,
$\mathcal H(\mathsf{id},Q;\mathsf{id}\to\mathsf{ST})$,
$\mathcal H(\Sigma,\mathsf{id};W\to\mathsf{SW})$, and
$\mathcal B(\gamma,\tau\to\bar\gamma,\bar\tau)$.
Their initially zero targets receive $h=\tr S$, $Q^\top$, $\Sigma W$
and the two broadcasts. The following FF has $2P+1$ products:
$M_k(\mathsf{SW}_k-\Sigma_k)$ into $X_k$,
$W_k\mathsf{ST}_k$ into $\mathsf{HS}_k$, and $r_\gamma h$ into
$\varsigma$, with the linear addition $a_\gamma$.
Thus $\varsigma=\gamma\alpha+\gamma\rho h$ is formed in the
control row only \emph{after} the sum head has supplied $h$.
\item Use $\mathcal B(\varsigma\to\bar\varsigma)$, then the FF
overwrite
$V=W-\bar\gamma X-2\bar\varsigma\mathsf{HS}$.
It uses $2P$ products. The multiplier-weighted assembly follows the
previous layer's computation of $\varsigma$.
\item With no attention or bilinear units, write
$W^{\mathrm n}=\relu(V-\bar\tau)-\relu(-V-\bar\tau)$,
using $2P$ ReLU units.
\item With no attention, overwrite $B,S$ with the square products
of $W^{\mathrm n}$, routed with coefficients $1,1/N$, and reset
$Q=0$. This uses $P$ bilinear units. Old $B,S,Q$ are removed by
their linear residual increments.
\item Repeat the same $s$ three-head layers on these new $S,Q$.
\item Overwrite $d$ by the new diagonal readout, again with $P$
bilinear units. This reset removes the first pass's old $d$.
\item Use $\mathcal S(d\to h^+)$, followed by one product $r_bh^+$
into $\alpha$. In the same FF sublayer add $W^{\mathrm n}-W$ to
the primal register and erase all nine wide and eleven scalar
scratch registers. The product reads $r_b,h^+$ before they are erased,
so the control row becomes $\alpha+b\rho h(W^{\mathrm n})$.
\end{enumerate}
There are eight single layers and two $s$-layer passes. This list
proves that every input to an operation has already been formed;
there is no sequential computation hidden within one FF branch.

\subsection{State Preservation, Width and Depth}
\label{app:invariants}
Only variable rows carry the wide matrices. Wide control-row scratch
remains zero, and only the designated scalar broadcasts are nonzero
on variable rows. Every padded \emph{matrix column} remains zero:
the identity, data, mask and initial primal state have this property,
and row products, transposes and coordinate products preserve it.
The allowed-edge mask is required only for the persistent primal
register and the masked registers $B,\mathsf{HS},X,V,W^{\mathrm n}$.
It is not imposed on $S,Q,\mathsf{SW},\mathsf{ST}$; those scratch
matrices may contain nonzero entries at forbidden primal positions.

At the threshold layer, $X$ is masked by its explicit product with $M$,
$\mathsf{HS}$ is masked by its product with $W$, and hence $V$ is
masked. Since $\bar\tau\ge0$ on variable rows, soft-thresholding
preserves those zeros, including all padding. The final layer clears
scratch exactly in real arithmetic, keeps $\Sigma,M,\mathsf{id}$,
flags and controls fixed, and writes $W^+,\alpha^+$ to persistent
coordinates. It therefore establishes the clean-input invariant for
the next block. In floating point, resets and overwrite errors are
analyzed by Appendix~\ref{app:rounding}, rather than assumed absent.

The indexed schedule has at most four heads per layer, key dimension
at most $P$, $2d=26P+36$ ReLU units, and
\begin{equation}
B_{\max}(P)=\max\{P+4,\,2P+1\}
\label{eq:audit_bilinear_count}
\end{equation}
bilinear units per FF sublayer. The first layer needs $P+4$ products;
the gradient-preparation layer needs $2P+1$. The other counts are
$P,2P,0,P,P,1$ as listed above. Consequently the bound $2P+1$ holds for $P\ge3$, and the general literal
schedule needs $5$ and $6$ products at $P=1,2$, where $2P+1$ would give
$3$ and $5$. The count is an upper count for this schedule; the
single-variable primal problem itself is trivial. The depth,
embedding width, head bound and asymptotic resource orders are unchanged.

Every selector entry, sign, reset coefficient and routing coefficient
has magnitude $1$ or $2$, except the square-to-$S$ coefficient $1/N$.
The ReLU linear realization copies these magnitudes and signs. Thus
all nonzero parameters have magnitudes in $\{1/N,1,2\}$ and depend on
$P$ alone, including the unused padded selector coordinates.

\subsection{Arithmetic Cost and Parameter Storage}
\label{app:cost}
For $p\le P$, there are $p+1$ active tokens and $d=O(P)$ coordinates.
Each complete layer has a constant number of $O(P)$-wide
projections and at most four heads. Dense projection/product kernels
use at most $O(P^3)$ operations at the largest served size, giving
$O(P^3\log P)$ per update for $P\ge2$. Working activations can be
streamed with $O(P^2)$ storage; retaining every layer's activations
would instead use $O(P^2\log P)$.

Dense untied parameter matrices occupy $O(P^2\log P)$ numbers.
The explicit selector representation has $O(P)$ nonzero coefficients
per layer, and hence $O(P\log P)$ in the full stored schedule.
The identical doubling layers can also share one parameter template.
An $O(P^2)$ working-memory statement therefore does not include
materializing separate dense parameters or saving all activations.
These counts are arithmetic/number-storage bounds, not finite-bit
precision bounds. For $P=1$ the counts are constant.

\section{Why Exact Execution Needs Logarithmic Depth}
\label{app:depth_lower}
One counted layer is a residual linear-attention sublayer followed by a
residual bilinear/ReLU sublayer as in \eqref{eq:ff}. All heads within a
sublayer read the same input. Width, head count and token count are
finite and fixed along the scalar input family below; parameters are
independent of that input. Fixed affine biases or output projections
are allowed. Encoding and readout are affine, with no external writes
inside the block. Softmax, normalization and division by a variable
are outside this class.

\paragraph{How polynomial degree grows with depth.}
If the input coordinates are polynomials of degree at most $d$, the
queries, keys and values have degree at most $d$, and each attention
output has degree at most $3d$. On an interval where all ReLU signs are
fixed, the ReLU branch does not raise this degree, while the bilinear
branch raises it to at most $6d$. Residual sums and affine maps do not
increase these bounds. Each nonzero polynomial preactivation has
finitely many roots; identically zero ones add no boundaries. A finite
network therefore has a finite partition into open intervals and
boundary points, with polynomial output degree at most $6^D$ on each
open interval for affine input. Increasing finite width or head count
changes the partition, but not the degree bound.

\paragraph{Inputs that require high-degree outputs.}
Let $p=2$, $A=\bigl(\begin{smallmatrix}0&1\\1&0\end{smallmatrix}\bigr)$,
$M=A$, $W(t)=tA$, $\Sigma=I_2$, and
$\alpha=0$, $\rho=\gamma=b=1$, with any fixed $\lambda\ge0$.
The exponent $N=2^{\lceil\log_2p_{\max}\rceil}$ is even for
$p_{\max}\ge2$. With $a=t^2/N$, the eigenvalues of $Y=I_2+aA$ are
$1+a$ and $1-a$. Thus
\[
\psi_N(t):=h(W(t))=(1+a)^N+(1-a)^N-2,\qquad
\kappa_N(t):=H_{12}(W(t))=H_{21}(W(t))
=t\big[(1+a)^{N-1}-(1-a)^{N-1}\big].
\]
Their degrees are $2N$ and $2N-1$, with leading coefficients
$2/N^N$ and $2/N^{N-1}$. Hence $q_N=\psi_N\kappa_N$ has degree
$4N-1$, positive leading coefficient $4/N^{2N-1}$ and tends to
infinity as $t\to+\infty$. Choose an open interval with $q_N(t)>\lambda$.
The allowed loss-gradient entries are $t$, so the pre-threshold
entries are $-q_N(t)$, on the negative active branch. Therefore
\[
W^+(t)=w_N(t)A,\quad w_N(t)=\lambda-q_N(t),\qquad
\alpha^+(t)=\psi_N(w_N(t)).
\]
The primal degree is $4N-1$ with nonzero leading coefficient
$-4/N^{2N-1}$. The multiplier degree is $2N(4N-1)$ with leading
coefficient $2\,4^{2N}/N^{N(4N-1)}>0$. Lower-degree terms cannot
cancel either coefficient.

\begin{proof}[Proof of Theorem~\ref{thm:depth_lower}]
On a nonempty open subinterval of the active family, the network's
affine readout is polynomial of degree at most $6^D$. Exactness on
that interval implies polynomial identity with $w_N$ for the primal
output, and with $\psi_N\circ w_N$ for the multiplier. Thus
$6^D\ge4N-1$ for primal execution and
$6^D\ge2N(4N-1)$ for full-state execution, proving the integer bounds.
Since $p_{\max}\le N<2p_{\max}$, the upper construction and lower
bound both have order $\log p_{\max}$.
\end{proof}

The hard family has two variables but uses the specified exponent $N$;
this is not a lower bound for every two-variable DAG solver. Uncounted
nonlinear preprocessing or readout can change the degree argument.
Approximation, exactness on a finite test set, and normalized attention
are not covered. For $N=2$, $\lambda=0$, the outputs reduce to
$w_2=-t^7/2$ and $\alpha^+=t^{28}/32$, verifying both degrees directly.

\section{Error from Losing Multiplier Information}
\label{app:state_information}
Fix the data, mask, primal state and controls, and let
$T(\alpha)=W^+(\alpha)$ and $H=\nabla h(W)$. On every fixed active
branch in $J$, soft-thresholding subtracts a constant signed threshold.
Thus its output difference on $J$ is
$-\gamma H_J(\alpha_1-\alpha_2)$. Nonexpansiveness of the mask and
soft-threshold gives the complementary upper bound:
\[
g_J|\alpha_1-\alpha_2|
\le\lVert T(\alpha_1)-T(\alpha_2)\rVert_F
\le\gamma\lVert H\rVert_F|\alpha_1-\alpha_2|.
\]
The lower bound needs the active-branch hypothesis; the upper bound
holds for every pair of multipliers.

\begin{proof}[Proof of Proposition~\ref{prop:state_information}]
If the encoder merges two multipliers, their common prediction cannot
be within less than half their output separation of both targets.
This proves the paired-state claim. If a decoder is $L_D$-Lipschitz
and each prediction has error at most $\varepsilon$, the triangle
inequality through both predictions gives
\[
g_J|\alpha_1-\alpha_2|\le2\varepsilon+
L_D\lVert E(\alpha_1)-E(\alpha_2)\rVert.
\]
For an encoder taking at most $K$ values, each nonempty fibre therefore
has diameter at most $2\varepsilon/g_J$. Its closed interval hull has
that length, and at most $K$ such hulls cover $I$ of length $A$.
Consequently $A\le2K\varepsilon/g_J$. This requires neither measurable
nor contiguous fibres. Substituting $K=2^B$ gives the bit bound.
\end{proof}

\paragraph{When the lower bound is attained.}
Partition $I$ into $K$ equal bins and decode each bin to the exact
update at its midpoint. The upper separation bound gives uniform
error at most $\gamma\lVert H\rVert_FA/(2K)$. When all nonzero-gradient
entries remain on fixed active branches, $g_J=\gamma\lVert H\rVert_F$,
so this attains the lower bound. For instance, at $p=N=2$,
$W=t\bigl(\begin{smallmatrix}0&1\\1&0\end{smallmatrix}\bigr)$,
$\Sigma=I$, $\gamma=\rho=1$, $\lambda=0$ and $\alpha\in[0,A]$,
$T(\alpha)=-(\alpha+t^4/2)t^3
\bigl(\begin{smallmatrix}0&1\\1&0\end{smallmatrix}\bigr)$ for $t>0$.
Both branches are active and $g_J=\sqrt2t^3$.
The bits count the whole multiplier-dependent code; the claim does
not prescribe a register layout or an average-case learning rate.

\section{Rounding Bounds for the Constructed Block}
\label{app:rounding}
We bound rounding errors through the operations in
Appendix~\ref{app:construction}, including both constraint evaluations
and all changes to the stored state. The bound is computed from the
weights and input bounds, so it applies beyond the sampled executions.
It may be conservative and depends on the specified arithmetic
implementation with fixed controls. Section~\ref{sec:stage} uses it to
bound the number of updates needed for a target accuracy.

\subsection{Arithmetic and inputs}
\label{app:rounding_model}
Let $u$ be the unit roundoff. For each scalar addition, subtraction and
multiplication assume
\begin{equation}
\operatorname{fl}(a\mathbin{\circ}b)
=(a\mathbin{\circ}b)(1+\delta)+\eta,
\qquad |\delta|\le u,\quad |\eta|\le\omega,
\label{eq:rounding_model}
\end{equation}
with finite intermediates, exact negation, and exact zero operations
(zero products/results and addition of zero). Stored controls satisfy
$\lambda,\rho\ge0$, $\gamma>0$, and the stored threshold is nonnegative.
The allowance $\omega\ge0$ covers a specified underflow mode; setting
$\omega=0$ requires the relative model to hold for every nonzero
intermediate. A flush-to-zero kernel needs its own valid $\omega$.
ReLU is exact on stored values. Matrix products are classical dot
products in the recorded order, or kernels with the same proved bound.
Unmodeled casts, reduced-precision products, or reassociation require a
new graph or a replacement kernel bound. No softmax or normalization is
part of this architecture. The dyadic weights $0,\pm1/N,\pm1,\pm2$,
identity, flags, mask, and $b$ are exactly represented; otherwise their
conversion errors must be included as inputs.

For a dot product of length $k$, define
\begin{equation}
\Gamma_k=\frac{ku}{1-ku},\qquad
\zeta_k=2k(1+u)^k\omega,\qquad ku<1.
\label{eq:rounding_dot_constants}
\end{equation}
Entrywise absolute values and inequalities are used below. A classical
matrix product obeys
\begin{equation}
|\operatorname{fl}(AB)-AB|
\le\Gamma_k|A||B|+\zeta_k\boldsymbol 1,
\label{eq:rounding_dot}
\end{equation}
where $k$ is its inner dimension and $\boldsymbol 1$ has its output
shape. Products on the right are exact nonnegative matrix products.
The usual relative dot-product proof bounds the product of at most $k$
rounding factors by $\Gamma_k$. There are at most $2k-1$ additive
rounding noises from products and sums, each amplified by at most
$(1+u)^k$; this proves the stated, slightly loose, underflow allowance.
Exactly zero or monomial selector products may instead be treated as
exact when that fact follows from their operands and the kernel.

The nominal state is $z=(W,\alpha)$ with the fixed covariance and
controls of the intended real-arithmetic update. The encoded input has
clean scratch. Input bounds include covariance/control conversion when
the intended values differ from the stored ones. To compare one update
at its current decoded floating-point $W$, that $W$ is the nominal
input, so its input error is zero; the difference from an earlier exact
trajectory is handled by Theorem~\ref{thm:tracking}, not counted again
as a new conversion error.

\subsection{Bounding the Error in Each Layer}
\label{app:rounding_enclosure}
For a nominal graph node $A$ and its computed value $\widehat A$, carry
two nonnegative arrays
\[
|A|\le B_A,\qquad |\widehat A-A|\le E_A.
\]
Constants have $B_A=|A|$, $E_A=0$ when exact. Given valid input bounds,
the following rules propagate them. Write
\begin{align*}
T_{AB}&=B_AE_B+E_AB_B+E_AE_B,\\
U_{AB}&=(B_A+E_A)(B_B+E_B).
\end{align*}
For a matrix product $C=AB$ of inner dimension $k$, use
\begin{equation}
B_C=B_AB_B,\qquad
E_C=T_{AB}+\Gamma_kU_{AB}+\zeta_k\boldsymbol 1.
\label{eq:enclosure_matmul}
\end{equation}
For $C=A\pm B$, use
\begin{equation}
\begin{split}
B_C&=B_A+B_B,\\
E_C&=E_A+E_B+u(B_A+B_B+E_A+E_B)
+\omega\boldsymbol 1.
\end{split}
\label{eq:enclosure_add}
\end{equation}
For an entrywise product $C=A\odot B$, use
\begin{equation}
\begin{split}
B_C&=B_A\odot B_B,\\
E_C&=B_A\odot E_B+E_A\odot B_B+E_A\odot E_B\\
&\quad+u(B_A+E_A)\odot(B_B+E_B)+\omega\boldsymbol 1.
\end{split}
\label{eq:enclosure_hadamard}
\end{equation}
ReLU permits $B_C=B_A$, $E_C=E_A$, since it is 1-Lipschitz.
Transpose, exact copy, and exact negation permute or retain these
bounds. A cast with parameters $u_c,\omega_c$ permits
$B_C=B_A$ and
$E_C=E_A+u_c(B_A+E_A)+\omega_c\boldsymbol 1$.
Use the actual signs and exact-zero structure to tighten these rules
where justified, rather than replacing a literal operation by an
algebraically different solver replay.

\paragraph{Expanding each layer into arithmetic operations.}
For each head expand
\[
A_Q=Z\Theta_Q,\quad A_K=Z\Theta_K,\quad A_V=Z\Theta_V,
\qquad O=(A_QA_K^\top)A_V.
\]
Bound the three projections, the score product and the value product
in that order. Sum the heads and add the residual in the recorded
order. From that attended state bound $ZU_1$, its ReLU, and the
product with $U_2$; separately bound $ZG_1$, $ZG_2$, their entrywise
product and its product with $G_3$. Sum the two feedforward branches
and add the residual in the recorded order. These are precisely the
linear-attention and bilinear/ReLU layers of
Section~\ref{sec:realization}. The same rules cover both squaring
passes, the loss product $\Sigma W$, scalar assembly and broadcasts,
the two threshold preactivations, the old-to-new scratch overwrites,
and the final $W,\alpha$ writeback. Inner dimensions are read from
these products; in particular, a dense projection's dimension must
not be silently replaced by $p$.

\begin{proof}[Proof of Theorem~\ref{thm:block_rounding}]
For multiplication,
$\widehat A\widehat B-AB=A(\widehat B-B)
+(\widehat A-A)B+(\widehat A-A)(\widehat B-B)$.
Taking entrywise absolute values gives $T_{AB}$. Equation
\eqref{eq:rounding_dot} bounds the arithmetic error at the computed
operands by $\Gamma_kU_{AB}+\zeta_k\boldsymbol 1$.
The scalar arithmetic model proves the sum and entrywise-product
rules. ReLU's Lipschitz property proves its rule even at a threshold
crossing. Induction over the finite directed computation graph proves
every enclosure, including all residual additions. Extracting the
output coordinates gives
\begin{equation}
\varepsilon_W=\lVert E_{W,\mathrm{out}}\rVert_F,\qquad
\varepsilon_\alpha=E_{\alpha,\mathrm{out}},\qquad
\varepsilon_*=\max\{\varepsilon_W,\varepsilon_\alpha\}.
\label{eq:enclosure_output}
\end{equation}
Thus the decoded state error in the product norm of
Section~\ref{sec:promptmatters} is at most $\varepsilon_*$.

For a uniform certificate on a specified region, choose uniform input
$B,E$ and apply these monotone nonnegative rules once. For example, on
$\mathcal C=\{W\in\mathcal W_M:\lVert W-U\rVert_F\le r\}$ one may
use $B_W=|U|+rM$, or the looser $B_W=(\lVert U\rVert_F+r)M$,
with fixed covariance and controls bounded explicitly. This bounds
all representable decoded inputs in $\mathcal C$; include input
rounding if the implementation first converts arbitrary real inputs.
Compute the bounds exactly or with outward rounding. A rounded
estimate of an enclosure is not itself a certified upper bound.

With fixed finite input bounds, each $E$ tends to zero as
$u,\omega$ and the input/weight error bounds tend to zero: this follows
by induction from the formulas, $\Gamma_k\to0$, and $\zeta_k\to0$.
This continuity assertion presumes an admissible arithmetic family
with sufficient range. To justify the finite-intermediate hypothesis,
also check the range of products and partial sums inside each kernel,
using absolute-sum majorants or the scalar-expanded graph. Bounded
mathematical inputs alone do not exclude implementation overflow.
\end{proof}

\paragraph{Passing the state to the next block.}
Immutable input columns, identity, flags and mask receive no writes and
are preserved exactly by the selector construction. Forbidden entries
of $W$ remain exactly zero: the loss term is multiplied by $M$, the
cycle term is multiplied by $W$, their pre-threshold combination is
zero there, and the two ReLU branches give $\soft_\tau(0)=0$ for the
stored $\tau\ge0$. Zero padding is preserved for the same reason.
Choose the final scratch clears as pure coordinate negations followed
by the residual addition: $\operatorname{fl}(x+(-x))=0$ for finite
stored $x$. They are exact, so the next block starts with clean
scratch. At $b=0$, the multiplier increment is exactly zero and
$\alpha$ stays fixed. All these statements require finite operands;
in particular the second pass still runs when $b=0$, and $0\cdot
\mathrm{Inf}$ cannot be used as a zero increment. If a different
implementation perturbs supposedly immutable columns or does not
clear scratch exactly, enclose and carry those extra state coordinates
between blocks instead of claiming the two-register state invariant.

\subsection{Replacing Stored Values and Recomputing the Constraint}
\label{app:rounding_writeback}
A new value in a residual stream is generally written by subtracting
the old one and adding the residual. This is not an exact assignment
in floating point and must appear in the graph enclosure.

\begin{lemma}[Residual writeback]
\label{lem:rounding_writeback}
For a finite stored vector $x$, a target $y$, and a computed target
$\widehat y$ with $\lVert\widehat y-y\rVert_F\le e$, suppose the
writeback is $\widehat y_{\mathrm{out}}
=\operatorname{fl}(x+\operatorname{fl}(\widehat y-x))$.
For $m$ written coordinates under \eqref{eq:rounding_model},
\begin{equation}
\begin{split}
\lVert\widehat y_{\mathrm{out}}-y\rVert_F
\le{}&(1+u)^2e+u(1+u)\lVert y-x\rVert_F\\
&+u\lVert y\rVert_F+(2+u)\sqrt m\,\omega.
\end{split}
\label{eq:rounding_writeback}
\end{equation}
\end{lemma}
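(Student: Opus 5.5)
We apply the scalar model \eqref{eq:rounding_model} coordinatewise, first to the inner subtraction and then to the outer addition, and collect the error terms. For each written coordinate $k$ set $t_k=\operatorname{fl}(\widehat y_k-x_k)=(\widehat y_k-x_k)(1+\delta_k)+\eta_k$ and $\widehat y_{\mathrm{out},k}=(x_k+t_k)(1+\delta'_k)+\eta'_k$, with $|\delta_k|,|\delta'_k|\le u$ and $|\eta_k|,|\eta'_k|\le\omega$. The exact-zero conventions are consistent with these formulas, so no case split is needed.

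\textbf{Expansion.} Expanding gives
\[
x_k+t_k=\widehat y_k+\delta_k(\widehat y_k-x_k)+\eta_k ,
\]
so that
\[
\widehat y_{\mathrm{out},k}-y_k=(\widehat y_k-y_k)(1+\delta'_k)+\delta'_k y_k+\delta_k(1+\delta'_k)(\widehat y_k-x_k)+\eta_k(1+\delta'_k)+\eta'_k .
\]
We then split $\widehat y_k-x_k=(\widehat y_k-y_k)+(y_k-x_k)$. The computed-target error $\widehat y-y$ now carries the coefficient $(1+\delta'_k)(1+\delta_k)$, whose magnitude is at most $(1+u)^2$. The displacement $y-x$ carries $\delta_k(1+\delta'_k)$, bounded by $u(1+u)$. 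The target $y$ carries $\delta'_k$, bounded by $u$. The additive noise is at most $(1+u)\omega+\omega=(2+u)\omega$ per coordinate.

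\textbf{Norms.} We take Frobenius norms over the $m$ written coordinates and use the triangle inequality on the four vector pieces. Each piece is an entrywise product of a vector with a coefficient vector whose entries are bounded by the stated constants. This gives $(1+u)^2e$, $u(1+u)\lVert y-x\rVert_F$ and $u\lVert y\rVert_F$. The noise vector has entries of size at most $(2+u)\omega$, so its norm is at most $(2+u)\sqrt m\,\omega$. Summing the four terms yields \eqref{eq:rounding_writeback}.

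\textbf{Main obstacle.} The only delicate step is the regrouping. Rounding in the outer addition must be attributed to $y$ rather than to $\widehat y$, otherwise a stray $ue$ term appears. Likewise, the inner rounding must be split so that the displacement term measures the exact jump $y-x$. This split is exactly what exhibits the loss of relative accuracy when $\lVert y-x\rVert_F$ is large compared with $\lVert y\rVert_F$, as in an old scratch overwrite. Once the expression is written in the form above, the remaining bounds are routine.
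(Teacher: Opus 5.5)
Your proof is correct and is essentially the paper's argument: both apply the scalar model first to the subtraction and then to the addition, and split $\widehat y-x=(\widehat y-y)+(y-x)$; the paper simply bounds $|\widehat y_{\mathrm{out}}-\widehat y|$ first and then uses $|\widehat y|\le|y|+|\widehat y-y|$. This also shows your ``main obstacle'' is not a real one, because attributing the outer rounding to $\widehat y$, as the paper does, produces a $ue$ term that is exactly the middle term of $(1+u)^2=1+u(1+u)+u$ rather than a stray one.
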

\begin{proof}
Let $t=\operatorname{fl}(\widehat y-x)$. Componentwise,
$|t-(\widehat y-x)|\le u|\widehat y-x|+\omega\boldsymbol 1$.
The next addition then gives
$|\widehat y_{\mathrm{out}}-\widehat y|
\le u(1+u)|\widehat y-x|+u|\widehat y|
+(2+u)\omega\boldsymbol 1$.
Use $|\widehat y-x|\le|y-x|+|\widehat y-y|$ and
$|\widehat y|\le|y|+|\widehat y-y|$, and take Frobenius norms.
The coefficient of $e$ is $1+u(1+u)+u=(1+u)^2$.
\end{proof}

For example, binary32 round-to-nearest with $x=1$, $y=2^{-30}$
gives $\operatorname{fl}(y-x)=-1$ and
$\operatorname{fl}(x-1)=0$. Binary64 has the same example with
$y=2^{-60}$. The intermediates are normal and finite. Thus a
relative-error bound depending only on the new value cannot cover
general residual writeback.

In the block, this applies to $W\leftarrow W^{\mathrm n}$, to the
second initialization $S\leftarrow (W^{\mathrm n}\odot
W^{\mathrm n})/N$, and to overwritten diagonal scratch. The old
$S$ is the first-pass polynomial value and need not be small relative
to the new $S_0$. The first-pass relative model therefore cannot simply be restarted.
Use its actual absolute initialization error and
the full graph rules \eqref{eq:enclosure_matmul}--\eqref{eq:enclosure_hadamard}
through the second pass, trace, multiplier increment and final
writeback. Clearing $Q$ alone does not remove these other errors.
These terms are why Theorem~\ref{thm:block_rounding} analyzes the
literal residual layers rather than replacing the reset and assignment
by exact operations in an arithmetic replay.

\subsection{Choosing Precision for a Target Accuracy}
\label{app:rounding_stage}
\begin{corollary}[Precision needed for a target accuracy]
\label{cor:precision_stage}
Fix the stage controls with $b=0$ and a ball $\mathcal C$ with uniform
curvature and contraction factor $q<1$ from Section~\ref{sec:stage}.
Obtain $\varepsilon_W(u,\omega)$ from Theorem~\ref{thm:block_rounding}
using uniform input bounds on $\mathcal C$, including any covariance and
control conversion errors, and verify the range of \emph{all} operations
in both passes. Use the exact mask, padding and scratch invariants of
Appendix~\ref{app:rounding}. With $d=\lVert\mathcal P(U)-U\rVert_F$, a
sufficient precision test for an invariant ball and target
$\mathrm{tol}$ is
\begin{equation}
\varepsilon_W(u,\omega)\le(1-q)r-d,\qquad
\varepsilon_W(u,\omega)<(1-q)\mathrm{tol}.
\label{eq:precision_selection}
\end{equation}
Then Theorem~\ref{thm:stage_certificate} and \eqref{eq:integer_budget}
apply with $\varepsilon=\varepsilon_W$: the error floor is
$\varepsilon_W/(1-q)$ and each update uses $2\lceil\log_2p_{\max}\rceil+8$
transformer layers. If the invariance slack is strictly positive and
conversion errors tend to zero, sufficiently accurate members of an
arithmetic family with adequate range meet this test. No fixed dtype is
asserted to work for every input or stage.
\end{corollary}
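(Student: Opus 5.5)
The plan is to reduce Corollary~\ref{cor:precision_stage} to Theorem~\ref{thm:stage_certificate} by checking its hypotheses one at a time, with $\varepsilon=\varepsilon_W(u,\omega)$ taken from Theorem~\ref{thm:block_rounding}.

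First, fix uniform input bounds on $\mathcal C$, for example $B_W=|U|+rM$, together with the bounded covariance and controls and any conversion errors. Then apply the monotone enclosure rules \eqref{eq:enclosure_matmul}--\eqref{eq:enclosure_hadamard} once. Because these rules are monotone in the input bounds, the resulting $\varepsilon_W$ bounds $\lVert\widehat{\mathcal P}(W)-\mathcal P(W)\rVert_F$ \emph{uniformly} for every representable decoded $W\in\mathcal C$. This is exactly the uniform-error hypothesis of Theorem~\ref{thm:stage_certificate}. The finite-intermediate hypothesis is the assumed range check on all operations of both passes.

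Second, the condition that every decoded transition stays in $\mathcal W_M$ and is a valid encoding for the next block follows from the exact-invariant paragraph of Appendix~\ref{app:rounding}. Forbidden entries and padding stay exactly zero, immutable columns receive no writes, and scratch clears are exact negations. Since $b=0$, the multiplier increment is exactly zero, so $\alpha$ stays fixed and the decoded state is a primal-only iteration of $\widehat{\mathcal P}$ with the same fixed controls. This is precisely the setting of the stage theorem.

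Third, the first inequality in \eqref{eq:precision_selection} is literally $d+\varepsilon\le(1-q)r$. Theorem~\ref{thm:stage_certificate} therefore gives invariance, a unique fixed point $W_\star$, and the bound $q^LB_0+\varepsilon(1-q^L)/(1-q)$. The second inequality makes $\mathrm{tol}>\varepsilon/(1-q)$, so \eqref{eq:integer_budget} is well defined (with the $q=0$ and $B_0=0$ conventions). A direct computation then shows that $L\ge L_{\mathrm{tol}}$ gives $q^LB_0\le\mathrm{tol}-\varepsilon/(1-q)$, hence error at most $\mathrm{tol}$. The error floor is $\varepsilon_W/(1-q)$. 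The depth per update is $2\lceil\log_2p_{\max}\rceil+8$ by Theorem~\ref{thm:realization}.

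Finally, for the asymptotic claim, let $u,\omega$ and the conversion errors tend to zero within an admissible family with adequate range. The continuity part of Theorem~\ref{thm:block_rounding} gives $\varepsilon_W\to0$. Since the slack $(1-q)r-d>0$ and $(1-q)\mathrm{tol}>0$ are fixed, both inequalities eventually hold.

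The main obstacle I expect is the uniformity step. The enclosure must dominate the errors at \emph{every} point of $\mathcal C$, including across soft-threshold branch changes, and the residual-writeback terms of Lemma~\ref{lem:rounding_writeback} must remain bounded there. I would handle this by noting that the ReLU rule is Lipschitz and therefore branch-free, and that the writeback terms depend only on $B$-bounds, which are uniform on $\mathcal C$.
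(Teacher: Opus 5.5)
Your proposal is correct and follows essentially the same route as the paper. A single uniform enclosure on $\mathcal C$ supplies $\varepsilon_W$ at every representable decoded iterate, and the exact-zero and $b=0$ invariants give the mask, clean-scratch and fixed-multiplier hypotheses. The first inequality in \eqref{eq:precision_selection} is the invariance condition of Theorem~\ref{thm:stage_certificate}; the paper phrases this by re-running $\lVert\widehat{\mathcal P}(W)-U\rVert_F\le\varepsilon_W+q\lVert W-U\rVert_F+d\le r$ at each decoded iterate. The existence claim then follows because the enclosure tends to zero while the slack and $(1-q)\mathrm{tol}$ stay fixed and positive.
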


\begin{proof}[Proof of Corollary~\ref{cor:precision_stage}]
The uniform input enclosure on $\mathcal C$, with the stated kernel
and range checks, supplies a bound $\varepsilon_W$ for every
implemented transition at a representable decoded input in that ball.
The exact-zero properties above give the mask and clean-scratch
invariants; $b=0$ preserves the fixed multiplier exactly.
The radius argument of Theorem~\ref{thm:stage_certificate} is
$\lVert\widehat{\mathcal P}(W)-U\rVert_F
\le\varepsilon_W+q\lVert W-U\rVert_F+d\le r$.
It therefore applies at each decoded iterate. The error recurrence
and sufficient update count follow from that theorem with
$\varepsilon=\varepsilon_W$.

If $d<(1-q)r$ and the target tolerance is positive, there is a positive
error allowance satisfying both inequalities in
\eqref{eq:precision_selection}. The enclosure tends to zero with
arithmetic and conversion errors for fixed finite bounds. An admissible
family with sufficient range therefore has sufficiently accurate
members meeting that allowance. The corollary is an existence statement together with a computable
selection test, and each dtype and kernel still needs its own range
audit. A small $u$ cannot compensate for an insufficient exponent range,
and changes made by the outer controller lie outside the certificate.
\end{proof}

\section{Experimental Details and Additional Results}
\label{app:experiments}
This appendix gives the protocols and additional results for
Section~\ref{sec:experiments}. Measurements from repeated blocks,
snapshots and different radii are dependent. We count nonfinite results
as failures and state when averages exclude them.

\subsection{Single Updates and Repeated Execution}
\label{app:executor}
The $254$ finite-reference checks use the legacy \texttt{executor.run}
wrapper, which freshly encodes each input, executes attention/FF layers,
and decodes $(W^+,\alpha^+)$. Capacities are $4,8,16,32$, with all
$p\le p_{\max}$, three mask densities, varied scalar controls and both
values of $b$. The independent reference uses matrix powers. Maximum
relative errors are $3.0\cdot10^{-15}$ in $W$ and $1.1\cdot10^{-13}$
in $\alpha$. One additional reference case overflows in the second
constraint evaluation: a large step sends a finite input to
$\lVert W^+\rVert_{\max}\approx3.4\cdot10^6$ at $N=32$.

The indexed construction in Appendix~\ref{app:construction} is checked
separately at capacities $1,2,4,8,16$, ten input seeds each. Its $50$
one-block comparisons have maximum Frobenius $W$ error
$9.56\cdot10^{-17}$ and zero multiplier error. Three-block reuse of the
full output encoding gives $150$ comparisons, with maximum $W$ error
$1.37\cdot10^{-16}$ and multiplier error $2.78\cdot10^{-17}$; the
maximum discrepancy from a clean encoding is $5.56\cdot10^{-17}$.
These are sampled float64 checks, not uniform certificates.

\paragraph{Why repeated execution needs a separate test.}
Direct reuse of the legacy implementation's full residual stream leaves
scratch scalars that contaminate later blocks. A three-block audit has
maximum errors $1.43\cdot10^{-4}$ in $W$ and $0.101$ in $\alpha$;
an explicit final-scratch reset reduces them to $1.52\cdot10^{-16}$
and $1.53\cdot10^{-14}$. The indexed specification includes these
clears. The historical $180$-case bridge to arithmetic replay reports
finite full states in $178$ cases and two failures on re-audit. We
therefore use indexed full-stream comparisons and the verified serial
continuations below as persistence evidence. Fresh-encoding agreement
and arithmetic replay alone do not establish that property.

\subsection{Rounding Errors at Different Precisions}
\label{app:new_precision}
The grid uses $(p,p_{\max})=(2,2),(3,4),(5,8)$, five input seeds,
input scales $0.01,0.1,0.5$, radius $0.01$, and float16/32/64:
$135$ cases in total. Evaluation is serial, unfused and left-to-right;
exact rational arithmetic at the nominal binary64 inputs supplies the
reference. Point and uniform-ball enclosures include input conversion.
All sampled executions are finite and below their point-error envelopes.
Maximum $W$ errors are $7.03\cdot10^{-4}$, $1.68\cdot10^{-7}$ and
$2.14\cdot10^{-16}$ for float16/32/64, respectively; multiplier maxima
are $8.39\cdot10^{-4}$, $2.52\cdot10^{-7}$ and $3.14\cdot10^{-16}$.

The uniform range test passes $129/135$ cases. The six failures are
float16 at scale $0.5$: $(p,p_{\max})=(3,4)$ with seed $4$, and $(5,8)$
with seeds $0$--$4$. Their sampled intermediates remain finite, but
uniform majorants exceed $65504$. All reset/immutable-coordinate
discrepancies are zero. Median point-bound/error ratios are approximately
$6960$, $6270$ and $8270$ by dtype, so these sufficient bounds are
conservative. Neither sampled containment nor this serial arithmetic
certificate supplies a uniform guarantee for an unspecified kernel.

\subsection{Testing the Update Bound with Repeated Blocks}
\label{app:practical_t3}
The captured corpus has $30$ Gaussian linear-SEM instances: $p=3,5,10$,
ten seeds per dimension, $m=500$, Erd\H os--R\'enyi graphs of expected
degree two, and $\lambda=0.1$. Three snapshots per trajectory give
$90$ centres. Freezing recorded $\alpha,\rho,\gamma$ with $b=0$, neither
the sharp-global nor local-Hessian test certifies a ball at radii
$0.001,0.01,0.1$; the verified interval retest also gives $0/270$.
Six two-node control balls pass the analytic tests. Snapshots and radius
attempts are dependent measurements, not independent instances.

For each snapshot, take the last saved state $V$ with matching
$\alpha,\rho$, and polish that fixed objective by L-BFGS-B on split
positive/negative variables for at most $1500$ iterations. This proposes
centre $U$; the update bound excludes these preliminary optimization steps.
Test radii $0.1,0.03,0.01,0.003,0.001,0.0003,0.0001,0.00003,
0.00001,0.000003,0.000001$ in descending order. One policy retains
recorded $\gamma$; the other chooses a binary64 value near $2/(\mu+L_J)$,
which defines a new fixed map. Stop at the first passing radius and
retain failed attempts in the denominator.

Outward binary64 interval products enclose the Hessian and basis Gram
error; Gershgorin gives uniform curvature. Rational evaluation of the
nominal proximal map bounds the centre displacement $\overline d$.
The literal serial graph gives $\varepsilon_W$ and range bounds.
Acceptance requires $\mu>0$, $q<1$, finite enclosures,
$\overline d+\varepsilon_W\le(1-q)r$, and
$\varepsilon_W/(1-q)<10^{-6}$. The update counts are verified using outward-rounded
bounds on $q^L$. Table~\ref{tab:practical_t3_coverage} reports coverage.

\begin{table}[htbp]
\centering
\begin{tabular}{lrrrrrr}
\toprule
Protocol & \shortstack{Radius\\attempts} & \shortstack{Passing\\snapshots} & $p=3$ & $p=5$ & $p=10$ & \shortstack{Saved $V$\\inside}\\
\midrule
Original centres, recorded $\gamma$ & 270 & 0/90 & 0 & 0 & 0 & 0\\
Polished centres, recorded $\gamma$ & 725 & 46/90 & 21 & 12 & 13 & 25\\
Polished centres, new $\gamma$ & 636 & 67/90 & 30 & 20 & 17 & 27\\
\bottomrule
\end{tabular}
\caption{Verified interval fixed-stage tests. Each dimension has $30$
snapshots from ten SEM instances; snapshots and radius attempts are not
independent experimental units. The polished policies retain one passing
ball per successful snapshot. The last column counts saved matching-control
endpoints in that policy's passing ball, not automatically successful
continuations of the original adaptive controller. Only $2$ and $4$
original snapshot states, respectively, lie in the polished passing balls.}
\label{tab:practical_t3_coverage}
\end{table}
For starts at half the ball radius, the sufficient number of updates
ranges from $43$ to $379087$ with the recorded step size, and from $24$
to $269405$ with the new one. The largest passing error floors are
$8.60\cdot10^{-8}$ and $6.17\cdot10^{-8}$, respectively. Passing the test
can therefore still require many updates. Failing it does not establish
that the map is noncontractive.

\begin{table}[htbp]
\centering
\begin{tabular}{lrrrrr}
\toprule
Starting policy & $p$ & Seed & Snapshot & \shortstack{Certified\\updates} & \shortstack{Final verified\\distance bound}\\
\midrule
Seeded half-radius perturbation & 3 & 5 & 0 & 134 & $7.41\cdot10^{-16}$\\
Seeded half-radius perturbation & 5 & 0 & 0 & 43 & $6.54\cdot10^{-14}$\\
Seeded half-radius perturbation & 10 & 3 & 0 & 978 & $1.83\cdot10^{-12}$\\
Saved matching-control endpoint & 3 & 8 & 0 & 186 & $3.43\cdot10^{-12}$\\
Saved matching-control endpoint & 5 & 9 & 0 & 426 & $5.97\cdot10^{-11}$\\
Saved matching-control endpoint & 10 & 3 & 0 & 546 & $2.51\cdot10^{-11}$\\
\bottomrule
\end{tabular}
\caption{Repeated execution of the constructed block in serial float64 arithmetic. All use the
recorded step size and $b=0$, with controls fixed throughout. Within each
starting policy and dimension, the selector prefers the recorded-step
policy and selects the smallest positive predicted update count, up to $5000$.
All six trajectories remain finite, stay inside their verified balls,
retain exactly clean scratch/immutable coordinates, and meet the
$10^{-6}$ fixed-point distance target by the predicted update count.
Final bounds use exact rational a posteriori residuals divided by $1-q$,
or the verified triangle bound through the centre, whichever is smaller.}
\label{tab:practical_t3_rollouts}
\end{table}
Each continuation feeds back the full residual stream and checks ball
membership and clean encoding at every transition. Perturbation starts
use a seeded masked direction at half-radius; endpoint starts encode
$V$ once. Both use $B_0=\lVert W_0-U\rVert_F+\overline d/(1-q)$.
These six runs reach their targets within the predicted update counts.
They test these specific fixed-control cases; they do not validate all
passing balls or the adaptive controller. Their targets are stage fixed
points, which need not be acyclic or statistically correct.

\subsection{Additional diagnostics for the reference solver}
\label{app:solver_diagnostics}

We compare Poly-prox, the direct implementation of
Algorithm~\ref{alg:polyprox}, with Poly-LBFGS (same polynomial
constraint, L-BFGS optimizer) and NOTEARS (L-BFGS, matrix-exponential
constraint), which separate the effect of the constraint from that of
the optimizer, and with sort-regress, PC and GES. The grid covers
Erd\H os--R\'enyi and scale-free linear SEMs with expected degree $1$,
$2$ and $4$, Gaussian and Laplace noise, $p\le20$ and
$m\in\{10^2,10^3\}$, each on raw data and after rescaling columns to
unit variance; $m=10^4$ is used only at expected degree $2$, Gaussian
noise and $p\in\{5,10\}$.

Each grid cell has three input seeds; raw and standardized versions
share a data draw. Poly-prox starts at $W=0$, $\alpha=0$, $\rho=1$,
uses at most $12$ outer stages and $400$ proximal steps per stage, with
acyclicity tolerance $10^{-8}$ and stationarity tolerance $10^{-6}$.
The reference L-BFGS methods allow $40$ outer stages and $200$ inner
iterations, with acyclicity tolerance $10^{-8}$. All three use
$\lambda=0.1$ and threshold $0.3$. PC uses stable Fisher-$z$ tests at
level $0.05$; GES uses the BIC score. Sort-regress regresses each variable
on those with smaller marginal variance. The implementations are
\texttt{solver.py} and \texttt{baselines.py}; the grid runner is
\texttt{run\_reference.py}.
Greedy score search provides a classical alternative \citep{Chickering2002OptimalSI}.
Continuous constraints can create optimization difficulties
\citep{Kaiser2021UnsuitabilityON,Seng2024LearningLD}; convergence of
augmented-Lagrangian methods requires separate hypotheses
\citep{Ng2020OnTC}. Raw-scale performance can reflect variance ordering
\citep{Reisach2021BewareOT,Reisach2023ASS}, and comparisons depend on
optimization and evaluation choices \citep{Ng2023StructureLW}.
Regularized DAG scores can fail to recover the target even at the
population level \citep{Jin2024RevisitingDS}. Published biological
networks provide another evaluation setting \citep{Sachs2005CausalPN};
our retained recovery experiment uses synthetic observations on fixed
benchmark topologies.

Figure~\ref{fig:shd} gives the mean SHD at $m=10^3$. On raw data
varsortability averages $0.96$ and the continuous solvers and
sort-regress stay ahead of PC and GES; rescaling each column to unit
variance drops it to $0.52$ and makes sort-regress the worst method at
$p=20$, while PC and GES, unaffected by column scaling, do not move.
Poly-prox pays a small price for its optimizer, reaching a median
constraint value near $7\cdot10^{-6}$ against the $10^{-8}$ tolerance
both L-BFGS solvers meet, so its SHD sits a little above theirs, though
its thresholded output is acyclic in every raw simulation.

\begin{figure}[htbp]
\centering
\includegraphics[width=0.75\textwidth]{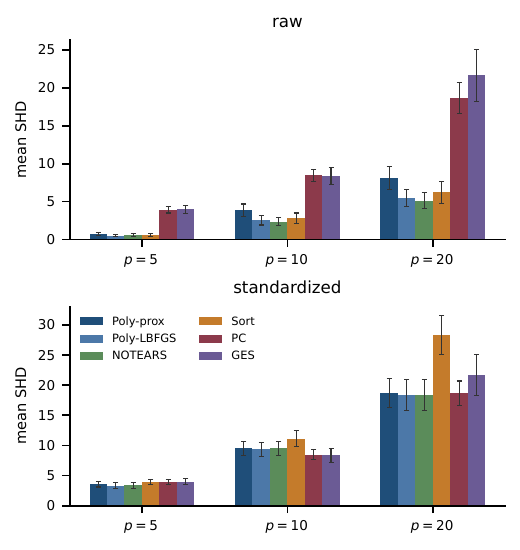}
\caption{\textbf{Accuracy of graph recovery.}
Lower structural Hamming distance (SHD) means fewer edge mistakes.
Each bar averages $36$ simulations with $1000$ observations; error bars
show standard errors. On raw data, the continuous solvers and
sort-regress outperform PC and GES. Rescaling each variable to unit
variance makes sort-regress the worst method at $20$ variables.
PC and GES are unchanged by this rescaling.}
\label{fig:shd}
\end{figure}

\subsection{Benchmark topology protocol}
\label{app:benchmark_protocol}
The recovery experiment keeps each benchmark's directed support and uses
one fixed weighted linear Gaussian SEM per topology, with edge magnitudes
in $[0.5,2]$ and random signs. Sample sizes are $m\in\{1000,5000,10000\}$ on every topology,
plus $m=500$ on CHILD. Noise is Gaussian with variance one; columns
are centered without standardization. The fixed topology order is
ASIA, CANCER, EARTHQUAKE, SURVEY, SACHS, CHILD, ALARM.
For zero-based topology index $i$, weight and data seeds are
\texttt{SeedSequence([1000+i,1])} and
\texttt{SeedSequence([2000+i,m,draw])}. Ten draws per sample size produce repeated observations of each fixed graph, not independent graph
weightings. The experiment uses direct arithmetic replay
under the reference solver's accepted controls and multiplier events;
it does not run the attention/FF residual graph for all datasets.

Across $220$ datasets, the final Frobenius and largest-entry weight gaps,
multiplier gap and thresholded graph disagreement are exactly zero in
float64. Figure~\ref{fig:inheritmaps} displays the median-SHD draw at
$m=10^4$. SHD is computed after thresholding at $0.3$.
The agreement establishes inheritance of these reference computations;
it does not certify recovery of the underlying graph or transfer to
other weightings of the same topology.

\subsection{Training Models with and without State Updates}
\label{app:learn}
We train ordinary tied-weight attention models on Poly-prox transition
labels, with five independent training runs per reported condition, and
compare their decoded outputs with the reference update. These models
use four tied attention/ReLU repeats; they do not instantiate the
full-depth bilinear architecture of Theorem~\ref{thm:realization}.
The experiments measure output agreement for these architectures and
budgets, rather than learnability of the guaranteed construction.
Table~\ref{tab:train} includes a collapsed-input model that withholds
$W,\alpha$, whereas Table~\ref{tab:memory} holds state access fixed and
changes residual write-back.

\paragraph{How we summarize runs and report failures.}
For one-step error, each training run is summarized by its median test
error, and tables report the mean and standard error of finite run
summaries. The trajectory metric is the unnormalized $12$-update
Frobenius error and is aggregated separately. All conditions have five
attempted training runs. A bracketed $[k]$ marks $k$ nonfinite run
summaries excluded from that cell's mean; ``non-finite (all)'' means all
five failed, and standard errors require at least two finite summaries. Finite-only means are conditional summaries,
not a success rate or an unconditional expected error.

\paragraph{Objective and data.} The independently drawn random-transition component uses a random DAG
on $p$ nodes, Erd\H os--R\'enyi or scale-free
with equal probability and expected degree from $\{1,2,4\}$;
$m\in\{200,500,1000\}$ observations from the linear SEM of
Section~\ref{sec:setup} with Gaussian or Laplace noise, each with
probability $\tfrac12$, and column standardization applied with
probability $\tfrac12$; a state $W$ with i.i.d.\ $\mathcal N(0,0.2^2)$
off-diagonal entries; and controls $\alpha\sim\mathrm{U}[0,2]$,
$\rho\in\{0.1,1,10\}$, $\gamma\sim\mathrm{U}[0.02,0.2]/\lVert
\Sigma\rVert_2$. The label is the exact Poly-prox update of that state.
Each corpus also contains states the solver actually visits: twelve
evenly spaced iterates from each of several short proximal runs, labelled
the same way, which is $960$ of the $2460$ transitions and $9600$ of the
$24600$. States from a shared solver run are dependent observations.
Random-transition labels use $b=1$, while solver-derived primal labels
use $b=0$. The ordinary models receive no explicit $b$ input, so their
multiplier supervision mixes two update policies. This does not alter
the primal target, but the shared training loss can still affect fitted
parameters; these experiments do not isolate why primal learning fails.
The loss is the mean squared error in the entries of
$\widehat W_{\ell+1}$ plus $0.05$ times the mean squared error in
$\widehat\alpha_{\ell+1}$, minimized with AdamW at learning rate
$3\cdot10^{-4}$, batch size $64$ and gradient-norm clipping at $1$.
Models are trained at $p=5$ with rows padded to $p_{\max}=10$ and
evaluated at $p=10$ without retraining. The five-seed runner is \texttt{experiments/run\_train\_seeds.py}, using
models from \texttt{run\_train.py}. Held-out tests are fixed across
training seeds: $200$ random transitions at $p=5$ (input seed $1$),
$120$ at $p=10$ (seed $2$), and $24$ twelve-step trajectories at $p=5$
(seed $3$). Training seeds vary initialization, training data and
minibatch order; the test corpus is shared.

Figure~\ref{fig:heatmaps} uses \texttt{plot\_heatmaps.py}: a five-node
instance with seed $22$ and a ten-node instance with seed $83$, both
with $800$ Gaussian observations, expected degree two and raw scale.
The displayed model outputs follow $12$ updates. The right column is
the reference solver's returned estimate, while red dots mark true
edges. These examples are illustrative, not averages.

\begin{table}[htbp]
\centering
\scriptsize
\resizebox{\columnwidth}{!}{
\begin{tabular}{@{}lccc@{}}
\toprule
Model & $p{=}5$ error & $p{=}10$ error (zero-shot) & trajectory error, 12 updates \\
\midrule
\multicolumn{4}{@{}l}{\emph{paper-size corpus}} \\
\quad softmax, with memory & $0.94\pm0.00$ & $0.97\pm0.00$ & $0.64\pm0.01$ \\
\quad linear, with memory & $0.98\pm0.01$ & $1.01\pm0.01$ & $24.24\pm22.50$ \\
\quad linear, no memory (collapsed input) & $10.48\pm0.04$ & $16.48\pm0.05$ & $1.1{\times}10^{3}\pm0.7{\times}10^{3}$ \\
\quad learned-threshold unroll & $3.94\pm0.06$ & $11.81\pm0.25$ & $2.8{\times}10^{2}\pm1.7{\times}10^{2}$ \\
\midrule
\multicolumn{4}{@{}l}{\emph{$\sim$10$\times$ corpus, 2.5$\times$ epochs}} \\
\quad softmax, with memory & $0.67\pm0.01$ & $1.56\pm0.02$ & $0.40\pm0.01$ \\
\quad linear, with memory & $0.98\pm0.01$ & $1.02\pm0.02$ & $1.26\pm0.23$\,[2] \\
\quad linear, no memory (collapsed input) & $10.40\pm0.07$\,[1] & $16.62\pm0.07$\,[1] & $2.3{\times}10^{22}\pm2.3{\times}10^{22}$\,[1] \\
\quad learned-threshold unroll & $0.72\pm0.01$ & $4.82\pm0.36$ & $0.41\pm0.01$ \\
\bottomrule
\end{tabular}
}
\caption{\textbf{Trained update executors, mean $\pm$ standard error over
five attempted training runs.} Step error is normalized so that copying the input scores
$1$; the median over test cases is taken within a seed. A bracketed
count gives the seeds whose value was non-finite, which are excluded from
that mean; trajectory means are dominated by a few divergent seeds.}
\label{tab:train}
\end{table}

\paragraph{Updating the current state or predicting it from scratch.}
The collapsed-input model in Table~\ref{tab:train} withholds $W,\alpha$
and predicts the next state from scratch. Its inputs do not determine
the target update. Table~\ref{tab:memory} therefore holds state access,
architecture, parameter count and training recipe fixed, changing only
whether the model predicts a residual update or the whole next state.
At the smaller corpus, removing write-back raises one-step errors from
$0.98$ to $10.44$ for linear attention and from $0.94$ to $10.98$ for
softmax. At the larger corpus, softmax closes part of this gap at $p=5$
($1.70$ versus $0.67$), but not at $p=10$ ($14.82$ versus $1.56$).
Because step error is normalized by a potentially small true update,
imperfectly reconstructing the incoming $W$ can incur a large score.
The comparison supports residual parameterization under this recipe;
it does not establish necessity of a particular state encoding.

\begin{table}[htbp]
\centering
\scriptsize
\resizebox{\columnwidth}{!}{
\begin{tabular}{@{}llccc@{}}
\toprule
Attention & Residual write-back & $p{=}5$ error & $p{=}10$ error & trajectory error \\
\midrule
softmax (paper-size) & persistent ($\widehat W=W+\delta$) & $0.94\pm0.00$ & $0.97\pm0.00$ & $0.64\pm0.01$ \\
softmax (paper-size) & absent (output from scratch) & $10.98\pm0.38$ & $17.01\pm0.39$ & $1.12\pm0.12$ \\
linear (paper-size) & persistent ($\widehat W=W+\delta$) & $0.98\pm0.01$ & $1.01\pm0.01$ & $24.24\pm22.50$ \\
linear (paper-size) & absent (output from scratch) & $10.44\pm0.05$ & $16.49\pm0.03$ & $1.37\pm0.22$ \\
\midrule
softmax (large corpus) & persistent ($\widehat W=W+\delta$) & $0.67\pm0.01$ & $1.56\pm0.02$ & $0.40\pm0.01$ \\
softmax (large corpus) & absent (output from scratch) & $1.70\pm0.13$ & $14.82\pm0.21$ & $0.65\pm0.03$ \\
linear (large corpus) & persistent ($\widehat W=W+\delta$) & $0.98\pm0.01$ & $1.02\pm0.02$ & $1.26\pm0.23$\,[2] \\
linear (large corpus) & absent (output from scratch) & $10.53\pm0.02$ & $16.48\pm0.06$ & $3.16\pm1.23$\,[3] \\
\bottomrule
\end{tabular}
}
\caption{\textbf{Residual write-back with identical inputs, five training runs.} ``Persistent'' writes the
update back into the state registers, $\widehat W=W+\delta$; ``absent''
gives the model the same inputs and the same parameter count but produces
$\widehat W$ and $\widehat\alpha$ from scratch. One-step errors are
normalized so that copying scores $1$; the trajectory column is the
unnormalized $\lVert\widehat W_{12}-W_{12}\rVert_F$ after twelve
unrolled updates. A bracketed count gives the seeds whose value was
non-finite, excluded from that mean. These are the numbers quoted in
Section~\ref{sec:experiments}.}
\label{tab:memory}
\end{table}

\subsection{Checking the Relative-Error Bound}
\label{app:exactplots}
Figure~\ref{fig:exact} connects the conditioning results to numerical
examples. Panel (1) compares direct arithmetic replay with an independent
matrix-power update on eight states per $p\in\{3,5,10,20\}$, at capacity
$p_{\max}=20$ and $N=32$. This is a replay check, separate from the
literal attention/FF tests in Appendix~\ref{app:executor}.
Panels (2)--(3) use the stationary two-node family in
Appendix~\ref{app:proof_multiplier}, with $r=0.6$, $\lambda=0.2$,
$\rho=0$ and exact $c=\alpha=(r-\lambda-t)/t^3$.
Multiplying $\nabla h$ by $1+\nu$, with $\nu=0.1$, keeps the assembled
gradient error below $(2\nu+\nu^2)C_k$ as in
Corollary~\ref{cor:relative_accuracy}. Because $\rho=0$, perturbing $h$
has no effect in this example: it tests relative gradient accuracy with
an exact coefficient. Adding $10^{-4}$ to each off-diagonal gradient
entry instead amplifies the error from $0.0035$ to $6.7$ as $c$ grows
from $25$ to $4.75\cdot10^4$. Panel (4) adds $10^{-3}$ to the multiplier
after each update at the origin, realizing the undamped drift in
Remark~\ref{rem:neutral}. The protocol and saved values are in
\texttt{experiments/run\_exact.py} and
\texttt{experiments/results/exact\_execution.json}.

\begin{figure}[htbp]
\centering
\includegraphics[width=0.85\textwidth]{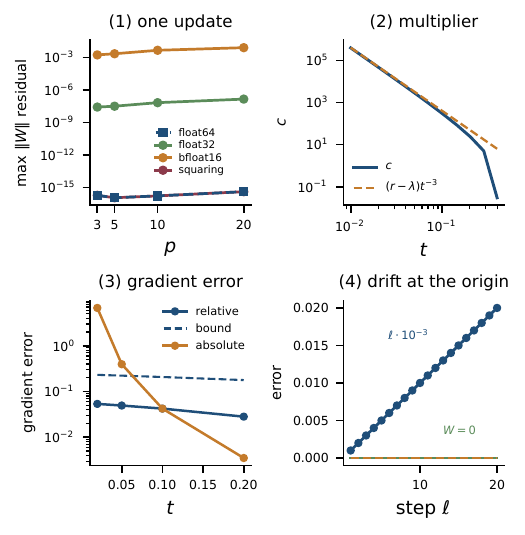}
\caption{\textbf{Arithmetic replay and conditioning checks.}
(1) One-update replay error across eight states per dimension.
(2) Effective-multiplier growth in the two-node stationary family.
(3) Relative gradient perturbations remain below the bound of
Corollary~\ref{cor:relative_accuracy}; fixed absolute perturbations are
amplified as $c$ grows. (4) Undamped multiplier error accumulates as
$\ell\cdot10^{-3}$ while $W=0$.}
\label{fig:exact}
\end{figure}

\clearpage
\subsection{How Stationarity Changes During Optimization}
\label{app:ex3_stationarity}
Figure~\ref{fig:ex3_stationarity} summarizes $30$ saved Gaussian linear-SEM
trajectories: ten input seeds for each $p\in\{3,5,10\}$, with $m=500$,
$\lambda=0.1$ and polynomial exponents $N=4,8,16$, respectively.
There are $44508$, $46590$ and $47282$ accepted primal outputs by
dimension ($138380$ total). These are arithmetic-replay states, not
literal-transformer activations. Each output uses its own recorded
pre-controller $\alpha,\rho,\gamma$; later controller changes can alter
$c=\alpha+\rho h$ and the augmented residual without changing $W$.

The masked-Lasso residual $d_M$ and the smallest augmented residual
$\eta_{\min}$ use the definitions in Appendix~\ref{app:proof_multiplier}.
For the relevant smooth gradient $g$, each allowed nonzero entry has
residual $g_{ij}+\lambda\operatorname{sign}(W_{ij})$; at an exactly zero
entry it is $\operatorname{sign}(g_{ij})(|g_{ij}|-\lambda)_+$.
Forbidden entries are excluded, and the Frobenius norm gives the
residual. No support threshold replaces exact zeros.
At the last primal output, median $\eta_{\min}/d_M$ is $0.00338$,
$0.00661$ and $0.0136$ by dimension. None of these endpoints has
$\eta_{\min}\le10^{-6}$ in absolute norm. The plots show finite
compensation of the Lasso residual by the cycle term, while changing
controls can increase the augmented residual. They do not demonstrate
$\eta_k\to0$, convergence to an acyclic non-Lasso limit, or multiplier
divergence, and do not certify the adaptive outer controller.

\begin{figure}[htbp]
\centering
\includegraphics[width=0.98\textwidth]{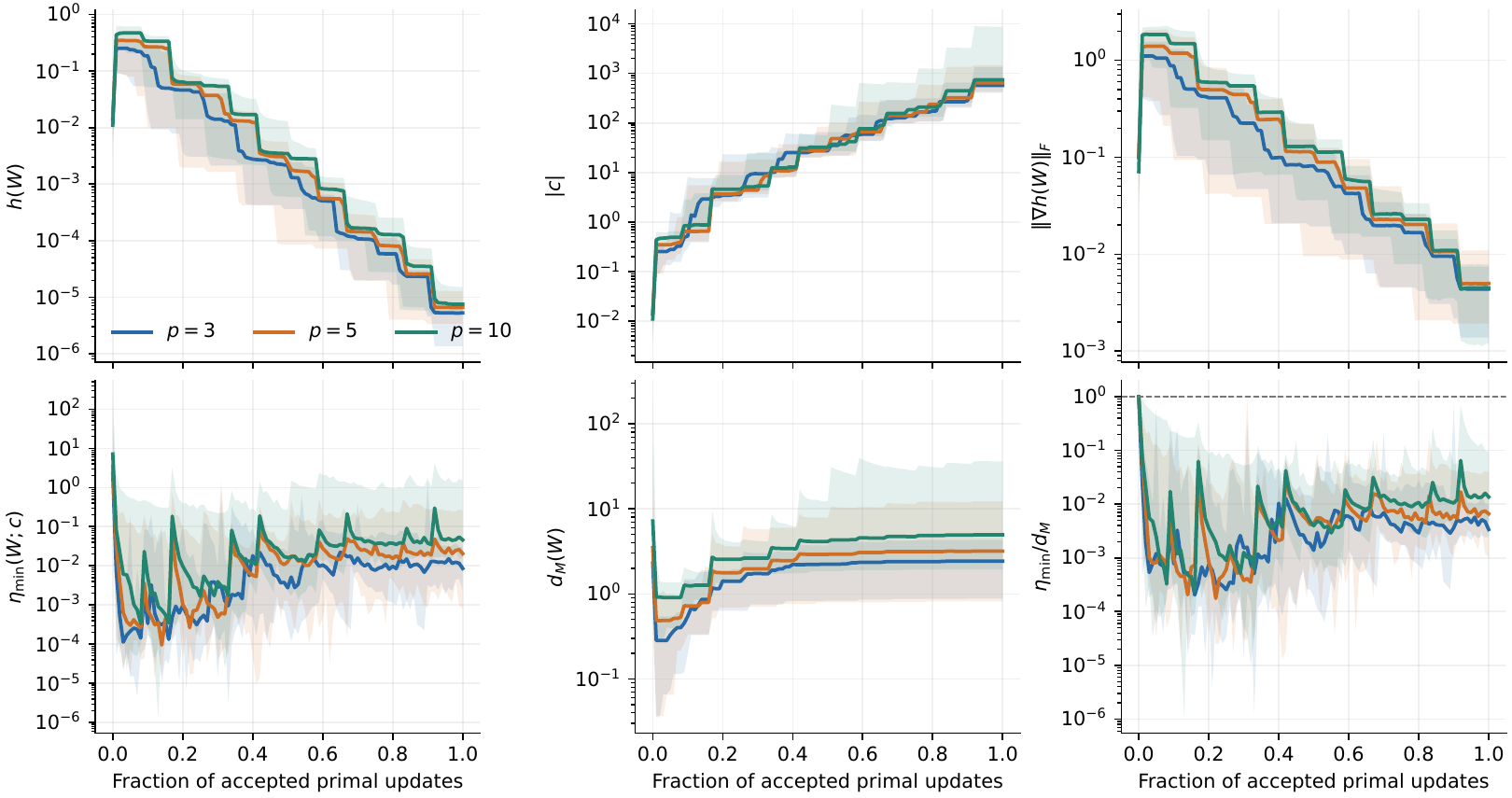}
\caption{\textbf{Stationarity along saved accepted primal outputs,}
using each update's pre-controller controls. At each of $101$ normalized
progress fractions $f$, trajectory $j$ contributes its saved index
$\lfloor f(n_j-1)\rfloor$. Lines are medians and bands are minima/maxima
over ten input-seed trajectories per dimension, not confidence
intervals. The horizontal axes do not align outer-stage boundaries.
The plotted first point is the first primal output; at the separately
recorded initial $W=0$, $h=\nabla h=c=0$ and $\eta_{\min}=d_M$.
Finite control changes can increase the augmented residual as the
constraint and its gradient decrease.}
\label{fig:ex3_stationarity}
\end{figure}


\begin{thebibliography}{}

\bibitem[Ahn et~al., 2023]{Ahn2023TransformersLT}
Ahn, K., Cheng, X., Daneshmand, H., and Sra, S. (2023).
\newblock Transformers learn to implement preconditioned gradient descent for
  in-context learning.
\newblock {\em ArXiv}, abs/2306.00297.

\bibitem[Aky{\"u}rek et~al., 2022]{Akyrek2022WhatLA}
Aky{\"u}rek, E., Schuurmans, D., Andreas, J., Ma, T.-Y., and Zhou, D. (2022).
\newblock What learning algorithm is in-context learning? investigations with
  linear models.
\newblock {\em ArXiv}, abs/2211.15661.

\bibitem[Aragam et~al., 2019]{Aragam2019GloballyOS}
Aragam, B., Amini, A.~A., and Zhou, Q. (2019).
\newblock Globally optimal score-based learning of directed acyclic graphs in
  high-dimensions.
\newblock In {\em Neural Information Processing Systems}.

\bibitem[Bai et~al., 2023]{Bai2023TransformersAS}
Bai, Y., Chen, F., Wang, H., Xiong, C.-M., and Mei, S. (2023).
\newblock Transformers as statisticians: Provable in-context learning with
  in-context algorithm selection.
\newblock {\em ArXiv}, abs/2306.04637.

\bibitem[Beck and Teboulle, 2009]{Beck2009AFI}
Beck, A. and Teboulle, M. (2009).
\newblock A fast iterative shrinkage-thresholding algorithm for linear inverse
  problems.
\newblock {\em SIAM J. Imaging Sci.}, 2:183--202.

\bibitem[Beinlich et~al., 1989]{Beinlich1989TheAM}
Beinlich, I.~A., Suermondt, H.~J., Chavez, R.~M., and Cooper, G.~F. (1989).
\newblock The alarm monitoring system: A case study with two probabilistic
  inference techniques for belief networks.
\newblock In {\em Conference on Artificial Intelligence in Medicine in Europe}.

\bibitem[Bello et~al., 2022]{Bello2022DAGMALD}
Bello, K., Aragam, B., and Ravikumar, P.~K. (2022).
\newblock Dagma: Learning dags via m-matrices and a log-determinant acyclicity
  characterization.
\newblock {\em ArXiv}, abs/2209.08037.

\bibitem[Chen et~al., 2023]{Chen2023TransformersLT}
Chen, T., Fu, D., Jia, R., and Sharan, V. (2023).
\newblock Transformers learn to achieve second-order convergence rates for
  in-context linear regression.
\newblock {\em Advances in Neural Information Processing Systems 37}.

\bibitem[Chen et~al., 2018]{Chen2018OnCD}
Chen, W., Drton, M., and Wang, Y.~S. (2018).
\newblock On causal discovery with an equal-variance assumption.
\newblock {\em Biometrika}.

\bibitem[Chickering, 2002]{Chickering2002OptimalSI}
Chickering, D.~M. (2002).
\newblock Optimal structure identification with greedy search.
\newblock {\em J. Mach. Learn. Res.}, 3:507--554.

\bibitem[Colombo and Maathuis, 2012]{Colombo2012OrderindependentCC}
Colombo, D. and Maathuis, M.~H. (2012).
\newblock Order-independent constraint-based causal structure learning.
\newblock {\em J. Mach. Learn. Res.}, 15:3741--3782.

\bibitem[Daneshmand, 2026]{daneshmand2026incontext}
Daneshmand, H. (2026).
\newblock In-context learning for discrete optimal transport: Can transformers
  sort?
\newblock In {\em The 29th International Conference on Artificial Intelligence
  and Statistics}.

\bibitem[Daubechies et~al., 2003]{Daubechies2003AnIT}
Daubechies, I., Defrise, M., and Mol, C.~D. (2003).
\newblock An iterative thresholding algorithm for linear inverse problems with
  a sparsity constraint.
\newblock {\em Communications on Pure and Applied Mathematics}, 57.

\bibitem[de~Luca and Fountoulakis, 2024]{deLuca2024SimulationOG}
de~Luca, A.~B. and Fountoulakis, K. (2024).
\newblock Simulation of graph algorithms with looped transformers.
\newblock {\em ArXiv}, abs/2402.01107.

\bibitem[Deng et~al., 2023]{Deng2023OptimizingNO}
Deng, C., Bello, K., Aragam, B., and Ravikumar, P.~K. (2023).
\newblock Optimizing notears objectives via topological swaps.
\newblock In {\em International Conference on Machine Learning}.

\bibitem[Garg et~al., 2022]{Garg2022WhatCT}
Garg, S., Tsipras, D., Liang, P., and Valiant, G. (2022).
\newblock What can transformers learn in-context? a case study of simple
  function classes.
\newblock {\em ArXiv}, abs/2208.01066.

\bibitem[Giannou et~al., 2023]{Giannou2023LoopedTA}
Giannou, A., Rajput, S., yong Sohn, J., Lee, K., Lee, J.~D., and
  Papailiopoulos, D. (2023).
\newblock Looped transformers as programmable computers.
\newblock {\em ArXiv}, abs/2301.13196.

\bibitem[Gregor and LeCun, 2010]{Gregor2010LearningFA}
Gregor, K. and LeCun, Y. (2010).
\newblock Learning fast approximations of sparse coding.
\newblock In {\em International Conference on Machine Learning}.

\bibitem[Jin et~al., 2024]{Jin2024RevisitingDS}
Jin, K., Ng, I., Zhang, K., and Huang, B. (2024).
\newblock Revisiting differentiable structure learning: Inconsistency of ℓ1
  penalty and beyond.
\newblock {\em ArXiv}, abs/2410.18396.

\bibitem[Kaiser and Sipos, 2021]{Kaiser2021UnsuitabilityON}
Kaiser, M. and Sipos, M. (2021).
\newblock Unsuitability of notears for causal graph discovery when dealing with
  dimensional quantities.
\newblock {\em Neural Processing Letters}, 54:1587 -- 1595.

\bibitem[Ke et~al., 2022]{Ke2022LearningTI}
Ke, N.~R., Chiappa, S., Wang, J.~X., Bornschein, J., Goyal, A., Rey, M., Weber,
  T., Botvinick, M.~M., Mozer, M.~C., and Rezende, D.~J. (2022).
\newblock Learning to induce causal structure.
\newblock {\em ArXiv}, abs/2204.04875.

\bibitem[Loh and B{\"u}hlmann, 2013]{Loh2013HighdimensionalLO}
Loh, P.-L. and B{\"u}hlmann, P. (2013).
\newblock High-dimensional learning of linear causal networks via inverse
  covariance estimation.
\newblock {\em J. Mach. Learn. Res.}, 15:3065--3105.

\bibitem[Lorch et~al., 2022]{Lorch2022AmortizedIF}
Lorch, L., Sussex, S., Rothfuss, J., Krause, A., and Sch{\"o}lkopf, B. (2022).
\newblock Amortized inference for causal structure learning.
\newblock {\em ArXiv}, abs/2205.12934.

\bibitem[Monga et~al., 2019]{Monga2019AlgorithmUI}
Monga, V., Li, Y., and Eldar, Y.~C. (2019).
\newblock Algorithm unrolling: Interpretable, efficient deep learning for
  signal and image processing.
\newblock {\em IEEE Signal Processing Magazine}, 38:18--44.

\bibitem[Nazaret et~al., 2023]{Nazaret2023StableDC}
Nazaret, A., Hong, J., Azizi, E., and Blei, D.~M. (2023).
\newblock Stable differentiable causal discovery.
\newblock {\em ArXiv}, abs/2311.10263.

\bibitem[Ng et~al., 2020a]{Ng2020OnTR}
Ng, I., Ghassami, A., and Zhang, K. (2020a).
\newblock On the role of sparsity and dag constraints for learning linear dags.
\newblock {\em ArXiv}, abs/2006.10201.

\bibitem[Ng et~al., 2023]{Ng2023StructureLW}
Ng, I., Huang, B., and Zhang, K. (2023).
\newblock Structure learning with continuous optimization: A sober look and
  beyond.
\newblock In {\em CLEaR}.

\bibitem[Ng et~al., 2020b]{Ng2020OnTC}
Ng, I., Lachapelle, S., Ke, N.~R., Lacoste-Julien, S., and Zhang, K. (2020b).
\newblock On the convergence of continuous constrained optimization for
  structure learning.
\newblock In {\em International Conference on Artificial Intelligence and
  Statistics}.

\bibitem[Nichani et~al., 2024]{Nichani2024HowTL}
Nichani, E., Damian, A., and Lee, J.~D. (2024).
\newblock How transformers learn causal structure with gradient descent.
\newblock {\em ArXiv}, abs/2402.14735.

\bibitem[Peters and Buhlmann, 2012]{Peters2012IdentifiabilityOG}
Peters, J. and Buhlmann, P. (2012).
\newblock Identifiability of gaussian structural equation models with equal
  error variances.
\newblock {\em Biometrika}, 101:219--228.

\bibitem[Peters and B{\"u}hlmann, 2015]{Peters2015StructuralID}
Peters, J. and B{\"u}hlmann, P. (2015).
\newblock Structural intervention distance for evaluating causal graphs.
\newblock {\em Neural Computation}, 27:771--799.

\bibitem[Reisach et~al., 2021]{Reisach2021BewareOT}
Reisach, A.~G., Seiler, C., and Weichwald, S. (2021).
\newblock Beware of the simulated dag! causal discovery benchmarks may be easy
  to game.
\newblock In {\em Neural Information Processing Systems}.

\bibitem[Reisach et~al., 2023]{Reisach2023ASS}
Reisach, A.~G., Tami, M., Seiler, C., Chambaz, A., and Weichwald, S. (2023).
\newblock A scale-invariant sorting criterion to find a causal order in
  additive noise models.
\newblock {\em Advances in Neural Information Processing Systems 36}.

\bibitem[Sachs et~al., 2005]{Sachs2005CausalPN}
Sachs, K., Perez, O.~D., Pe'er, D., Lauffenburger, D.~A., and Nolan, G.~P.
  (2005).
\newblock Causal protein-signaling networks derived from multiparameter
  single-cell data.
\newblock {\em Science}, 308:523 -- 529.

\bibitem[Seng et~al., 2024]{Seng2024LearningLD}
Seng, J., Zecevic, M., Dhami, D.~S., and Kersting, K. (2024).
\newblock Learning large dags is harder than you think: Many losses are minimal
  for the wrong dag.
\newblock In {\em International Conference on Learning Representations}.

\bibitem[Shazeer, 2020]{Shazeer2020GLUVI}
Shazeer, N. (2020).
\newblock Glu variants improve transformer.
\newblock {\em ArXiv}, abs/2002.05202.

\bibitem[Shimizu et~al., 2006]{Shimizu2006ALN}
Shimizu, S., Hoyer, P.~O., Hyv{\"a}rinen, A., and Kerminen, A.~J. (2006).
\newblock A linear non-gaussian acyclic model for causal discovery.
\newblock {\em J. Mach. Learn. Res.}, 7:2003--2030.

\bibitem[Spiegelhalter and Cowell, 1992]{Spiegelhalter1992Learning}
Spiegelhalter, D.~J. and Cowell, R.~G. (1992).
\newblock Learning in probabilistic expert systems.
\newblock In Bernardo, J.~M., Berger, J.~O., Dawid, P., Smith, A. F.~M.,
  Bernardo, J.~M., Berger, J.~O., Dawid, P., and Smith, A. F.~M., editors, {\em
  Bayesian Statistics 4: Proceedings of the Fourth Valencia International
  Meeting, Dedicated to the memory of Morris H. DeGroot, 1931–1989}. Oxford
  University Press.

\bibitem[van~de Geer and B{\"u}hlmann, 2012]{vandeGeer2012ell0penalizedML}
van~de Geer, S.~A. and B{\"u}hlmann, P. (2012).
\newblock $\ell_0$-penalized maximum likelihood for sparse directed acyclic
  graphs.
\newblock {\em arXiv: Statistics Theory}.

\bibitem[von Oswald et~al., 2022]{vonOswald2022TransformersLI}
von Oswald, J., Niklasson, E., Randazzo, E., Sacramento, J., Mordvintsev, A.,
  Zhmoginov, A., and Vladymyrov, M. (2022).
\newblock Transformers learn in-context by gradient descent.
\newblock In {\em International Conference on Machine Learning}.

\bibitem[Wang et~al., 2026]{Wang2026TransformerII}
Wang, X.-Y., Wang, S., and Huang, B.-W. (2026).
\newblock Transformer is inherently a causal learner.
\newblock {\em ArXiv}, abs/2601.05647.

\bibitem[Wei et~al., 2020]{Wei2020DAGsWN}
Wei, D., Gao, T., and Yu, Y. (2020).
\newblock Dags with no fears: A closer look at continuous optimization for
  learning bayesian networks.
\newblock {\em ArXiv}, abs/2010.09133.

\bibitem[Yu et~al., 2019]{Yu2019DAGGNNDS}
Yu, Y., Chen, J., Gao, T., and Yu, M. (2019).
\newblock Dag-gnn: Dag structure learning with graph neural networks.
\newblock {\em ArXiv}, abs/1904.10098.

\bibitem[Zhang et~al., 2022]{Zhang2022TruncatedMP}
Zhang, Z., Ng, I., Gong, D., Liu, Y., Abbasnejad, E., Gong, M.-M., Zhang, K.,
  and Shi, J.~Q. (2022).
\newblock Truncated matrix power iteration for differentiable dag learning.
\newblock {\em ArXiv}, abs/2208.14571.

\bibitem[Zheng et~al., 2018]{Zheng2018DAGsWN}
Zheng, X., Aragam, B., Ravikumar, P.~K., and Xing, E.~P. (2018).
\newblock Dags with no tears: Continuous optimization for structure learning.
\newblock In {\em Neural Information Processing Systems}.

\end{thebibliography}
\end{document}